\documentclass[11pt]{article}
\setlength{\arraycolsep}{0.0em}
\setlength{\hoffset}{-1in}
   \setlength{\voffset}{-1in}
   \setlength{\oddsidemargin}{1.1in}
   \setlength{\textwidth}{6.42in}
   \setlength{\topmargin}{0.5in}
   \setlength{\headheight}{0.25in}
   \setlength{\headsep}{0.25in}
   \setlength{\textheight}{9.0in}
\usepackage{graphicx}
\usepackage{amsmath, amssymb, enumerate}
\usepackage{natbib} 
\usepackage{cancel}
\usepackage{algorithm}
\usepackage{subcaption}
\usepackage{algorithmic}
\usepackage{booktabs}
\usepackage{pgfplots}
\usepackage{appendix}
\usepackage{tikz}

%\usepackage{nccmath}
%---------------------------------
\usepackage[applemac]{inputenc}
\usepackage[T1]{fontenc}
\usepackage{amsfonts}
\usepackage{float}
\usepackage{tikz,authblk}
\usetikzlibrary{shapes.geometric,arrows}
\usepackage{mathtools}

% --- Tracked changes toggle ---
\usepackage{xcolor}
\newif\iftrackchanges
\trackchangesfalse  % set to \trackchangesfalse for clean version

\newcommand{\chg}[1]{\iftrackchanges{\color{blue}#1}\else#1\fi}

\usepackage{parskip}
\usepackage{multirow}
\usepackage{amsthm}
\newtheorem{theorem}{Theorem}

\newtheorem{corollary}{Corollary}
\newtheorem{definition}{Definition}

\newtheorem{remark}{Remark}
\setlength{\parindent}{0in}

\pgfplotsset{compat=1.18}
\bibliographystyle{apalike}

\setlength{\tabcolsep}{5pt}
\graphicspath{ {./images/} }
\title{Nonlinear Model-Based Sequential Decision-Making in Agriculture}
 % \author{}
% \date{}
\author[1]{Sakshi Arya\thanks{*Corresponding author. Email: sxa1351@case.edu}}
\author[1,2]{Wentao Lin\thanks{Email: wentao.lin223@gmail.com (Part of this work was done when the author was a student at Case Western Reserve University.)}}

\affil[1]{Department of Mathematics, Applied Mathematics and Statistics, Case Western Reserve University, Cleveland, OH 44106, USA}
\affil[2]{CTrees, Pasadena, CA 91105, USA}

\date{}

\begin{document}

\maketitle

\begin{abstract}
\chg{Agricultural decision-making faces a dual challenge: sustaining high yields to meet global food security needs while reducing the environmental impacts of input use, including fertilizer losses (e.g., nutrient runoff and leaching) and other agrochemical applications such as herbicides, insecticides, and fungicides.} Nitrogen inputs are central to this tension. They are indispensable for crop growth yet also major drivers of greenhouse gas emissions, nutrient runoff, and escalating production costs. Addressing these intertwined pressures requires adaptive decision-support tools that are not only statistically principled but also economically sustainable and interpretable for practitioners.

We develop nonlinear model-based bandit algorithms as a framework for adaptive fertilizer management under uncertainty. Building on classical mechanistic yield-response models including Mitscherlich, Michaelis-Menten, quadratic plateau, and logistic functions, our approach links algorithmic exploration-exploitation strategies directly to interpretable biological processes such as maximum yield and nutrient efficiency. This grounding makes recommendations transparent for practitioners while supporting cost-effective and sustainable input use. Methodologically, we establish regret and sample complexity results for the well-specified nonlinear case, examine robustness under misspecification, and \chg{evaluate the proposed methods through extensive profit-oriented simulations and an offline replay case study on publicly available multi-site corn nitrogen field trials from the U.S. Midwest}. \chg{Overall, the results show that incorporating biologically meaningful mechanistic structure enables faster learning and higher profit as evidence accumulates, with flexible nonparametric baselines providing a competitive alternative in pooled and heterogeneous settings.
 Our findings illustrate how interpretable, uncertainty-aware sequential decision rules can support economically sustainable fertilizer recommendations and contribute to more efficient agricultural input use. }
\end{abstract}
% \vspace{2mm}
%  \noindent\textbf{Word count:} 12526\\
\noindent\textbf{Keywords:} Sequential decision-making, agricultural statistics, nonlinear models, multi-armed bandits, resource optimization, regret

% \begin{abstract}
%     Sequential decision-making is central to sustainable agricultural management, where resource inputs must be optimized under uncertainty and non-linear crop response. Multi-armed bandit algorithms form a natural paradigm for sequential-decision making in a lot of practical settings. Classic parametric multi-armed bandit algorithms typically assume linear or generalized linear reward models, limiting their applicability to real-world agronomic settings characterized by nonlinear arm-response relationships. In this paper, we develop and analyze a family of nonlinear model-based bandit algorithms--covering both batched and non-batched (online) settings--using paradigmatic non-linear functions to capture realistic fertilizer-yield curves. We provide theoretical regret guarantees for these algorithms under well-specified and misspecified models, and highlight their robustness to plateau effects and resource-efficient targeting. Extensive simulations and application to field-inspired datasets demonstrate the advantages of our approach over linear and nonparametric baselines, supporting more interpretable, sample efficient and sustainable input allocation. This methodology directly contributes to SDG 2 (Zero Hunger) and SDG 12 (Responsible Consumption and Production) by enabling data-driven, less wasteful agricultural practices. In particular, the approach is designed to be accessible and beneficial to smallholder farmers and resource-constrained stakeholders, supporting inclusive and equitable agricultural innovation.
% \end{abstract}

\section{Introduction}
Identifying the most effective crop management practice for a given site is a recurring challenge in agriculture. While field trials and expert recommendations provide useful starting points, the actual performance of management strategies such as different fertilizer rates, planting densities, or irrigation schedules can be highly variable due to differences in soil properties, previous management, and other biological or environmental factors \citep{hochman2011emerging, tilman2002agricultural}. As a result, farmers and researchers must repeatedly decide which management actions to implement each season, often with limited prior information about their true effects. This problem exemplifies \emph{sequential decision-making under uncertainty}, where the goal is to efficiently learn and optimize management choices over time in a way that supports both productivity and sustainability.

In the statistical and machine learning literature, such problems are naturally framed as \emph{multi-armed bandit} problems, dating back to \citet{robbins1952}. They have been widely applied in areas such as precision medicine \citep{lu2021bandit}, recommendation systems \citep{li2010contextual}, and, increasingly, agriculture \citep{gautron2022reinforcement}. The term originates from the analogy of a gambler choosing between several slot machines (``one-armed bandits''), each with unknown payout rates. At each round, the decision-maker selects an \emph{action/arm} (e.g., a specific management practice) and observes its outcome (e.g., crop yield or profit). The central challenge is to balance \emph{exploration} (trying new or uncertain options to learn about their effects) with \emph{exploitation} (repeating choices that have performed well so far).  Bandit algorithms provide principled, data-driven strategies to manage this tradeoff, making them well-suited to resource optimization tasks in agriculture and environmental management. \chg{A standard way to evaluate such algorithms is through minimizing \emph{regret} (see Table~\ref{tab:key_concepts} for formal definition), which measures the cumulative loss incurred while learning compared to an ideal benchmark that always selects the best fixed action in hindsight.} 

Most existing bandit algorithms rely on linear/GLM structure or generic black-box models, which can miss the mechanistic nonlinear dose-response patterns common in agronomy.
We develop a family of bandit algorithms that embeds agronomy-standard nonlinear response models directly in the decision loop, yielding interpretable recommendations and improved sample efficiency in the small-sample regimes typical of field experimentation, where decisions carry meaningful economic and environmental stakes.

Nonlinear models provide parsimonious and interpretable representations of domain-specific processes. Unlike generic polynomial or linear models, their parameters often map directly to biological quantities, such as maximum yield potential or nutrient efficiency, making the models both scientifically meaningful and statistically efficient. In agriculture, such models \citep{miguez2018nonlinear} capture complex responses like crop yield to fertilizer, temperature effects on growth, or pest dynamics, where the relationship follows saturating, logistic, or exponential patterns rather than a simple linear trend. Their adoption is motivated not only by statistical accuracy but also by the need for transparent recommendations in sustainable resource management 

\subsection{Background and related work}
Multi-armed bandit algorithms \citep{berry1985bandit, lattimore2018bandit} provide a core statistical framework for sequential decision-making under uncertainty, where the key challenge is to balance exploration and exploitation. In the original bandit formulation, each arm was associated with an unknown fixed reward distribution, and repeated pulls of the same arm were assumed to yield independent and identically distributed (i.i.d.) rewards. Subsequent work extended this framework to incorporate contextual information, leading to the development of contextual bandit models \citep{slivkins2014contextual, chu2011contextual}. In these settings, regression models such as linear \citep{abbasi2011improved}, generalized linear models (GLMs) \citep{li2010contextual, filippi2010parametric}, and more flexible nonparametric approaches, including kernels, nearest neighbors, and neural networks \citep{yang2002randomized, rigollet2010nonparametric, srinivas2010gaussian, valko2013finite}, enable adaptive decision-making tailored to evolving arm/covariate information.

Classical nonlinear regression models, such as Mitscherlich \citep{dhanoa2022overview}, Michaelis-Menten \citep{lopez2000generalized}, quadratic plateau models \citep{belanger2000comparison}, and logistic response functions \citep{sepaskhah2011logistic}, are routinely used to capture crop yield response to fertilizer and other management variables. These models offer interpretable parameters tied to underlying biological processes, which is important for transparent and actionable recommendations \citep{de2008soybean, miguez2018nonlinear}.

Machine learning approaches, including random forests, neural networks, and regression trees, have become increasingly popular for crop yield prediction and resource optimization \citep{shahhosseini2019maize, khaki2020cnn, khairunniza2014application, jabed2024crop}. While powerful in large-data settings, these methods are typically supervised and static, lacking the sequential and adaptive capabilities needed for decision-making under uncertainty. Concerns also remain about their interpretability and suitability for low-data environments \citep{dobermann2022responsible}.

Within agriculture, adaptive experimentation and real-time learning strategies have emerged as promising approaches for site-specific management and resource efficiency \citep{gautron2022reinforcement}. Recent work has explored the use of bandit algorithms \citep{saikai2020machine,saikai2018multi,huang2025precision} for adaptive plot selection and management optimization, but most existing studies focus on linear models or generic black-box methods, seldom leveraging domain knowledge of nonlinear response behavior, especially valuable for smallholders and in data-scarce scenarios \citep{smith2018getting}. \chg{
Alongside bandit and reinforcement-learning approaches, a growing precision-agriculture literature uses Gaussian-process surrogate modeling, Bayesian optimization, and Bayesian experimental design to adaptively select treatment levels and sampling schemes in on-farm trials, often with an explicit focus on estimating the economic optimum nitrogen rate (EONR). For example, \citet{ng2022bayesian} develop Bayesian optimal dynamic sampling procedures for on-farm experimentation, and \citet{matavel2025bayesian} propose Bayesian-optimized experimental designs for EONR estimation using a model-averaging approach. Related work develops fully Bayesian economically optimal designs in spatial and multi-year settings \citep{poursina2024fully}, uses Gaussian-process modeling across multiple on-farm experiments to quantify uncertainty and profitability of fertilizer strategies \citep{Mia02072024}, and advances multi-environment evaluation of nitrogen recommendation tools, highlighting the need for decision rules that can update across seasons \citep{Abdipourchenarestansofla2025,Ransom2020tools}. Bayesian optimization has also been used to automate calibration of mechanistic crop models, illustrating its role in decision-support pipelines \citep{Akhavizadegan2021}.
Our work is complementary to these Bayesian optimization and design approaches. While they primarily target efficient experimentation and estimation (e.g., learning EONR with minimal trials) via a learned surrogate and an acquisition rule, we study bandit decision rules that explicitly balance exploration and exploitation to optimize \emph{cumulative} performance (profit and/or yield) over repeated decisions. Moreover, surrogate/acquisition formulations can be less directly interpretable for agronomic decision-making, whereas our policies are built around mechanistic dose--response models whose parameters map to familiar agronomic quantities (e.g., plateau level, response rate), making the learned recommendations more transparent in data-limited settings.}

It is well-known that model-based bandit and reinforcement learning approaches can be far more sample-efficient than model-free methods when the underlying reward mechanisms are known \citep{osband2014model, sun2019model}. However, such approaches have received limited attention in agricultural decision-making from a precision-agriculture and sustainability perspective, where established nonlinear models are often available but underutilized.
Our work addresses this gap by integrating domain-driven nonlinear models directly into the bandit framework. This enables adaptive, interpretable, and sample-efficient resource management, advancing the practical and theoretical toolkit for sustainable and cost-effective fertilizer use.

\chg{Our primary objective is to adapt nitrogen-rate recommendations over repeated seasons by balancing exploration and exploitation to maximize overall profit while learning the profit-maximizing rate (EONR / maximum return to nitrogen) over a feasible fertilizer-rate grid.
\paragraph{Our Contributions.}
This paper makes the following contributions:
\begin{itemize}
\item \textbf{Mechanistic nonlinear bandits for fertilizer-rate decisions:}
We formulate nitrogen-rate selection as a sequential decision problem in which the mean reward is modeled using agronomy-standard mechanistic nonlinear dose--response families (Mitscherlich, quadratic plateau, Michaelis--Menten, and logistic). {To our knowledge, this is the first bandit-based framework in agronomic decision-making that provides a unified, interpretable template for {mechanistic nonlinear} response models rather than restricting to linear/GLM rewards or fully model-free black-box learners, and it can be adapted to different agronomic objectives by redefining the reward (e.g., profit, yield, or other management utilities).}

    \item \textbf{Nonlinear model-based $\epsilon$-greedy, nonlinear-UCB, and \texttt{ViOlin} baselines:}
    We present a unified framework and pseudocode templates for three nonlinear model-based strategies, $\epsilon$-greedy, UCB, and \texttt{ViOlin}, that can be instantiated with the above mechanistic response models, providing an interpretable alternative to linear and model-free baselines for data-limited agronomic decision-making.

    % \item \textbf{Profit-oriented simulation study under specification and misspecification:}
    % Through an extensive simulation study motivated by realistic nitrogen management, we evaluate these algorithms under a profit objective (learning the EONR) and systematically compare well-specified and misspecified regimes. The experiments quantify when mechanistic nonlinear modeling yields large gains and how performance degrades under model mismatch, relative to linear methods (e.g., LinUCB) and fully model-free methods (e.g., kNN-UCB), particularly in low-sample regimes.

    \item \textbf{Profit-oriented empirical evaluation in simulated and real trials:}
    Under the profit objective (learning the EONR), we conduct an extensive simulation study under both well-specified and misspecified regimes, and we additionally provide a reproducible real-data case study using multi-site corn nitrogen trials from \citet{ransom_data} via an offline replay protocol. These experiments quantify when mechanistic nonlinear modeling yields large gains and how performance degrades under model mismatch, relative to linear methods (e.g., LinUCB) and fully model-free methods (e.g., kNN-UCB), particularly in low-sample regimes.

\item \textbf{Theory-to-practice connection via available complexity and sample-efficiency results:}
    We connect the nonlinear model classes considered here to existing sequential learning theory by presenting a sequential complexity bound for bounded function classes (with an illustrative visualization for the four nonlinear families studied) and and by summarizing available sample-efficiency results for curvature-guided methods such as \texttt{ViOlin} from \citet{dong2021provable}, discussing its implications for data-limited agricultural experimentation.
\end{itemize}}
We begin by setting up the problem in Section \ref{sec: problem_setup}. In Section \ref{sec: algorithms}, we lay out the proposed framework for nonlinear model-based bandits, in which we study three model-based algorithms: $\epsilon$-greedy, Upper Confidence Bound (UCB), and the \texttt{ViOlin} algorithm. Section \ref{sec: theory} presents known results for parametric bandits in both well-specified and misspecified settings, and derives sample complexity bounds for the specific nonlinear models considered by the \texttt{ViOlin} algorithm. In Section \ref{sec: simulation}, we then conduct extensive simulation experiments emulating real-world yield production and fertilizer optimization scenarios, with the goal of resource-efficient and cost-effective management in sample-limited settings. Then, in Section \ref{sec:realdata} we conduct a reproducible real-data case study based on multi-site corn nitrogen trials in the US midwest to illustrate the advantage of the proposed methodology. Finally, Section \ref{sec:conclusion} concludes the paper.

\section{Problem Setup: Adaptive Input Decisions with Nonlinear Models}
\label{sec: problem_setup}
% In agricultural systems, sequential decisions regarding inputs such as fertilizer rates, irrigation levels, or planting densities must be made under uncertainty, often with limited experimental data. These choices carry high stakes: applying too much fertilizer raises costs for farmers and increases environmental damage through greenhouse gas emissions and nutrient runoff, while applying too little reduces yields and threatens food security. This setting naturally corresponds to a \emph{multi-armed bandit} problem, where the goal is to learn from limited feedback in order to identify input levels that maximize desirable outcomes such as profit or yield efficiency.

Formally, we consider a sequential decision-making scenario with a time horizon of $T$ rounds. Each round can represent, for example, a growing season or field trial, where decisions are made at planting and outcomes observed at harvest. At each round $t = 1, \dots, T$, the decision-maker (agent) chooses an action (input level) $x_t \in \mathcal{X}$ from a feasible set $\mathcal{X}$, such as nitrogen application rates. After choosing action $x_t$, the agent observes a noisy reward $y_t$, representing yield or profit, modeled as:
\begin{align}
    y_t = f(x_t; \beta^*) + \chg{\eta_t}, \label{eq: model}
\end{align}
where $f: \mathcal{X} \to \mathbb{R}$ is a known, structured nonlinear function parameterized by an unknown vector $\beta^* \in \mathbb{R}^p$, and $\eta_t$ is a mean-zero noise term. We assume $\eta_t$ is \emph{sub-Gaussian}, i.e., there exists $\sigma > 0$ such that for all $\lambda \in \mathbb{R}$,
\begin{align}
    \mathbb{E}\left[e^{\lambda \eta_t}\right] \leq \exp\left(\frac{\lambda^2 \sigma^2}{2}\right), \label{eq: subGaussian}
\end{align}
where $\mathbb{E}(\cdot)$ denotes the expectated value with respect to the random noise.
This assumption reflects the variability of agricultural experiments driven by weather, soil heterogeneity, and biological processes, while ruling out extreme outliers that would make learning impractical.

The decision-maker seeks a policy/algorithm $\pi$ that selects actions based on historical observations to maximize cumulative expected reward. Performance can equivalently be measured through the cumulative regret, which quantifies the total loss relative to always applying the best treatment:
\begin{align}
    R_T(\pi) = \sum_{t=1}^T \left\{ f(x^*; \beta^*) - f(x_t; \beta^*) \right\},
\end{align}
where $x^* = \arg\max_{x \in \mathcal{X}} f(x; \beta^*)$ denotes the optimal treatment/arm. In agricultural terms, regret measures the cumulative yield or profit lost due to not choosing the best possible input at each season. The objective is to devise algorithms achieving \emph{sublinear regret}, meaning $R_T(\pi) = o_P(T)$, so that the average regret per round diminishes over time. Such a property guarantees that the policy rapidly identifies near-optimal input levels, minimizing wasted resources and experimental costs. Other performance metrics more suitable for risk-averse decision-making have also been proposed, such as the Conditional Value-at-Risk (CVaR) of the regret, which emphasizes control of the worst-case outcomes in the tail of the regret distribution \citep{baudry2021optimal, wang2023near}.
\chg{\begin{remark}[Scope]
For clarity, we study a stationary setting where the underlying mean response does not change across rounds, so the optimal benchmark rate $x^*$ is fixed. Accounting for season-to-season shifts (e.g., weather or soil changes) would require a nonstationary model and is left for future work.
\end{remark}}
\chg{
\begin{table}[H]
\centering
\caption{\chg{Key concepts and notation for sequential fertilizer-rate decision-making.}}
\label{tab:key_concepts}
\begin{tabular}{p{0.30\textwidth} p{0.50\textwidth}}
\toprule
\textbf{Term} & \textbf{Meaning in this paper} \\
\midrule
Round ($t=1,\dots,T$) & One decision opportunity (e.g., a season, site-year, or trial iteration) in which a nitrogen rate is selected and an outcome is observed. \\
Action / arm ($x_t \in \mathcal{X}$) & The nitrogen fertilizer rate applied at round $t$; $\mathcal{X}$ is a feasible set (often a discrete grid). \\
Reward ($y_t$) & Observed outcome from choosing $x_t$ (yield or profit, depending on the objective). \\
Policy / algorithm ($\pi$) & The rule the decision-maker follows each round: it looks at all past fertilizer choices and outcomes
$\{(x_i,y_i)\}_{i=1}^{t-1}$ and then recommends the next nitrogen rate $x_t$. \\
Bandit (partial) feedback 
& Only the outcome corresponding to the chosen rate $x_t$ is observed at each round (not outcomes for all rates). \\
True mean reward ($f(x;\beta^*)$) & Expected outcome at rate $x$ under the unknown true parameter $\beta^*$. \\
Estimated mean reward ($f(x;\hat\beta_t)$) & Plug-in prediction used by the algorithm at round $t$ based on data collected so far. \\
Exploration vs.\ exploitation & Trying uncertain rates to learn (exploration) versus choosing the currently best-performing rate (exploitation). \\
Economic objective ($\Pi(x)$) & Profit at rate $x$, typically $\Pi(x)=p_y Y(x)-p_x x$, with crop price $p_y$ and input cost $p_x$. \\
Optimal rate ($x^*$) & The best fixed rate in $\mathcal{X}$ under the objective (e.g., the profit-maximizing rate / EONR). \\
Warm start ($n_0$) 
& Initial rounds of forced exploration (uniformly random choices) to stabilize early model fitting before using model-based rules. \\
Per-round regret ($r_t$) 
& Loss relative to the benchmark: $r_t=\Pi(x^*)-\Pi(x_t)$ in simulations; in offline replay, $x^*$ is the within-round oracle based on observed treatment means. \\
Cumulative regret ($R_t$) and average regret ($R_t/t$) 
& $R_t=\sum_{s=1}^t r_s$ summarizes total loss; $R_t/t$ summarizes average loss per decision (used for ``decreasing'' curves). \\
\bottomrule
\end{tabular}
\end{table}}
\chg{\subsection{From yield to profit: an economic objective.}\label{sec:profit_objective}
In fertilizer management, maximizing yield alone is not necessarily the right decision objective. Most yield--nitrogen response curves exhibit diminishing returns: yield increases with fertilizer up to a point and then plateaus. Because fertilizer is a costly input, applying nitrogen beyond the economically efficient range can increase cost without proportional yield gains. For this reason, we focus on \emph{profit} (net return) rather than yield as the primary optimization target.

Let $p_y$ denote the price per unit yield (e.g., \$/bushel) and let $p_x$ denote the price per unit nitrogen (e.g., \$/lb N). Suppose the mean yield response is modeled by a parametric function $f(x;\theta)$, where $x$ is the nitrogen rate and $\theta$ denotes agronomic parameters. We define the profit function
\begin{equation}
    \Pi(x) = p_y\, f(x;\theta) - p_x\, x. \label{eq:profit_generic}
\end{equation}
The economically optimal fertilizer rate is then
\begin{equation}
    x^\star (\theta) = \arg\max_{x \in \mathcal{X}} \Pi(x)
    = \arg\max_{x \in \mathcal{X}} \left[p_y f(x;\theta) - p_x x\right], \label{eq:profit_optimal}
\end{equation}
where $\mathcal{X}$ is the feasible set of fertilizer rates (e.g., a grid from 0 to 250 lb N/ac). Depending on the form of $f$, this maximization can be carried out in closed form or numerically. In our experiments, we use closed-form expressions for $x^\star$ for the yield-response models considered (summarized in Table~\ref{tab:profit_optima}), which makes clear how the optimal decision depends jointly on agronomic parameters $\theta$ and economic parameters $(p_y,p_x)$ that must be learned from data. Accordingly, we define (expected) profit regret as $\Pi(x^\star)-\Pi(x_t)$ and evaluate cumulative regret $\sum_{t=1}^T\{\Pi(x^\star)-\Pi(x_t)\}$, where $x_t$ is the fertilizer rate selected by the algorithm at time $t$. For the nonlinear yield functions studied in our simulations, closed-form expressions for $x^\star$ are summarized in Table~\ref{tab:profit_optima}.}

\chg{\paragraph{Why bandits?}
Although $x^\star(\theta)$ admits closed-form expressions (Table~\ref{tab:profit_optima}), $\theta$ is unknown and must be learned online from limited data; the bandit algorithms couple plug-in optimization with explicit exploration to avoid premature convergence under early estimation error.}

To illustrate the framework, we focus on fertilizer-yield relationships and investigate four classical nonlinear models widely used in agricultural research:
\begin{enumerate}
    \item \textbf{Mitscherlich Model}: $f(x; A, b, d) = d + A \left(1 - e^{-b x}\right)$, capturing saturating yield response to fertilizer inputs.
    \item \textbf{Michaelis-Menten Model}: $f(x; a, b, d) = d + \frac{a x}{b + x}$, describing nutrient uptake or growth responses with diminishing returns.
    \item \textbf{Quadratic Plateau Model}: \[
     f(x; a, b, c, x_0) = 
        \begin{cases}
            a + b x + c x^2 & \text{if } x \leq x_0 \\
            a + b x_0 + c x_0^2 & \text{if } x > x_0
        \end{cases},\] modeling responses that rise and then stabilize at a plateau, common in fertilizer trials.
    \item \textbf{Logistic Dose-Response Model}: $f(x; A, B, C, d) = d + \frac{A}{1 + \exp\left(-B(x - C)\right)}$, often used to capture threshold or inflection-point behavior.
\end{enumerate}
These models are parsimonious, interpretable, and directly tied to biological processes. For example, in the Mitscherlich model, $A$ represents the maximum additional yield achievable with fertilizer, $b$ reflects the efficiency of nutrient uptake, and $d$ is the baseline yield. For example, a larger value of $b$ indicates that yield saturates more rapidly with increasing input, whereas a smaller $b$ corresponds to a more gradual response.  Such parameters are routinely used in agronomic practice and make the resulting recommendations not only statistically grounded but also transparent and actionable.

Although we emphasize nitrogen fertilizer management as a motivating example, the framework generalizes to a wide range of agricultural and environmental problems where nonlinear dose-response relationships are well established, including irrigation efficiency, biomass accumulation, and nutrient loss in response to rainfall \citep{miguez2018nonlinear}. We provide detailed parameterizations and simulation examples in Section~\ref{sec: simulation}.

\section{Methods: Algorithms for nonlinear model-based bandits} \label{sec: algorithms}
In agricultural trials, each season offers only limited opportunities for experimentation.  Algorithms must therefore use past data efficiently while remaining robust to uncertainty. 
% Bandit algorithms address the fundamental challenge of sequential decision-making under uncertainty, where one must choose among several options (arms) over time to maximize cumulative reward. 
% Recall, these algorithms are typically designed to achieve two goals: (1) balancing the exploration-exploitation trade-off, and (2) minimizing cumulative regret.
In this section, we describe a family of strategies for nonlinear model-based bandit problems, where the expected reward is modeled as a nonlinear function of the chosen input.
  We first provide a general algorithmic template, followed by detailed descriptions of three model-based strategies, namely, $\epsilon$-greedy, Upper Confidence Bound (UCB), and \texttt{ViOlin}, adapted to classical agronomic response models. Note that all the three algorithms have been studied in various bandit problems before \citep{auer2010ucb,arya2020randomized,dong2021provable}, here we specialize them to mechanistic nonlinear settings. 

\subsection{General framework}
Recall, $f(x; \theta)$ denotes a known nonlinear model (e.g., Michaelis-Menten, logistic, quadratic) parameterized by $\theta$, where $x \in \mathcal{X}$ is the action (e.g., N-rate) and the expected reward is $\mathbb{E}[Y \mid x] = f(x; \theta)$. At each time $t$, the agent selects an action $x_t$, then receives a reward $y_t$.

\begin{algorithm}[H]
\caption{General Nonlinear Model-based Bandit Framework}
\label{alg: general_layout_online_Nonlinear}
\begin{algorithmic}[1]
\STATE \textbf{Input:} Action set $\mathcal{X}$,  nonlinear model class $f(\cdot; \theta)$, time horizon: $T$
\STATE \textbf{Initialize: } Pull each arm at least once and receive corresponding rewards until round $t_0$
\FOR{$t = t_0 + 1$ to $T$}
  % \STATE Observe context $z_t$
  \STATE Estimate model parameters $\hat{\theta}_t$ using past data $\{(x_s, y_s): s < t\}$
  \STATE Choose action $x_t$ using a strategy/policy $\pi$ utilizing $f(x; \hat{\theta}_t)$
  \STATE Observe reward $y_t$
  \STATE Update dataset with $(x_t, y_t)$
\ENDFOR
\end{algorithmic}
\end{algorithm}
\chg{\paragraph{Illustrative example: }
To make Algorithm~\ref{alg: general_layout_online_Nonlinear} concrete, consider a small pilot dataset with $5$ nitrogen rates (e.g., $\mathcal{X} = \{0,50,100,150,200\}$) and $2$ independent replicated plots per rate, giving $t_0=10$ observed pairs $\{(x_i,y_i)\}_{i=1}^{10}$.
At the first adaptive decision round (i.e., $t=11$), Line~4 fits the nonlinear model using \emph{all} available observations, so the nonlinear least-squares objective uses $10$ data points and returns $\hat{\theta}_{11}$.
Line~5 then selects the next rate $x_{11}$ from $\mathcal{X}$ using the chosen policy/bandit algorithm applied to $f(\cdot;\hat{\theta}_{11})$.
After observing $y_{11}$ in Line~6, Line~7 appends $(x_{11},y_{11})$, so the next update uses $11$ observations.
In general, at round $t$, $\hat{\theta}_t$ is estimated from the full history $\{(x_s,y_s)\}_{s=1}^{t-1}$ (i.e., $t-1$ observations), after which the bandit policy (algorithm) $\pi$ selects the next action $x_t$ based on $f(\cdot;\hat{\theta}_t)$. 

In our implementation, $\hat{\theta}_t$ is computed via nonlinear least squares (e.g., using \texttt{curve\_fit} in \texttt{scipy.optimize}) by minimizing a squared-error objective of the form
\[
\hat{\theta}_t = \arg\min_{\theta}\sum_{s<t}\bigl(y_s - f(x_s;\theta)\bigr)^2.
\]
 After observing $y_t$, the pair $(x_t,y_t)$ is appended to the dataset and used in the next update.}

\begin{remark}[Bandit (partial) feedback]
Unlike classical supervised learning or factorial field experiments that provide outcomes for every treatment, bandit settings reveal only the reward for the action actually chosen at each round. This \emph{partial feedback} captures the reality of sequential agronomic decision-making where only tested nitrogen rates yield observable outcomes. \chg{As a consequence, even when a closed-form optimizer $x^\star(\theta)$ exists for a parametric response family, the parameters $\theta$ cannot be learned reliably without deliberate exploration, since unchosen rates provide no counterfactual information.}
\end{remark}
% \begin{algorithm}[H]
% \caption{General Model-based Nonlinear Contextual Bandit Framework (Batched Setting)}
% \label{alg: general_nonlinear_batched}
% \begin{algorithmic}[1]
% \STATE \textbf{Input:} Action set $\mathcal{X}$, context space $\mathcal{Z}$, nonlinear model class $f(\cdot, \cdot; \theta)$, batch boundaries $0 = \tau_0 < \tau_1 < \dots < \tau_M = T$
% \FOR{$m = 1$ to $M$} 
%     \FOR{each round $t \in \mathcal{B}_m := \{\tau_{m-1}+1, \ldots, \tau_m\}$}
%         \STATE Observe context $z_t$
%         \STATE Estimate model parameters $\hat{\theta}_{m}$ using all data up to end of batch $m-1$: $\{(z_s, x_s, y_s): s \leq \tau_{m-1}\}$
%         \STATE Choose action $x_t$ using an exploration strategy based on $f(x, z_t; \hat{\theta}_{m})$
%     \ENDFOR
%     \STATE Execute all chosen actions in batch $m$
%     \STATE Observe rewards $\{y_t : t \in \mathcal{B}_m\}$ for the batch
%     \STATE Update dataset with new batch data $\{(z_t, x_t, y_t): t \in \mathcal{B}_m\}$
% \ENDFOR
% \end{algorithmic}
% \end{algorithm}
Next, we describe three model-based algorithms that follow the general structure of Algorithm \ref{alg: general_layout_online_Nonlinear} but differ in how they balance the exploration-exploitation trade-off.
\subsection{Epsilon-Greedy}
The $\epsilon$-greedy algorithm (Algorithm \ref{alg:eps_greedy}) \citep{langford2008epoch, yang2002randomized} is one of the simplest approaches yet effective for balancing exploration and exploitation. At each round, the current nonlinear model is fitted using past observations, and the input that maximizes predicted profit is selected with high probability $1-\epsilon_t$ (\textit{exploitation}). With small probability $\epsilon_t$, the algorithm explores by selecting a random alternative from the feasible input grid (\textit{exploration}).  This mechanism ensures continued learning, preventing the algorithm from becoming stuck on suboptimal choices. In agricultural contexts, this method reflects the intuitive practice of trying occasional alternative fertilizer rates even when one rate appears best.

Several variants of $\epsilon$-greedy exist; the most popular is the \textit{annealed} or \textit{decaying} $\epsilon$-greedy, where the exploration probability $\epsilon_t$ is a non-increasing sequence that tends to zero as $t \rightarrow \infty$. The intuition is that, as more data are collected and the model becomes more certain, the algorithm increasingly favors exploitation over exploration. \chg{Note that the user specifies an exploration \emph{schedule} $\{\epsilon_t\}_{t\ge1}$ (typically through an initial value and a decay rule), which controls the exploration--exploitation trade-off and directly affects cumulative regret.} 
% Note that $\epsilon$ (or the sequence $(\epsilon_t)$) is specified by the user, allowing the decision-maker to directly control the balance between exploration and exploitation; this choice directly influences the cumulative regret.

Epsilon-greedy is simple, robust, and easy to implement. It is particularly useful when the model is misspecified or when simple, interpretable algorithms are desired.

\begin{algorithm}[H]
\caption{Nonlinear model-based Epsilon-Greedy}
\label{alg:eps_greedy}
\begin{algorithmic}[1]
\STATE \textbf{Input:} Action set $\mathcal{X}$, exploration schedule $\{\epsilon_t\}_{t\ge1}$ (e.g., $\epsilon_t=\min\{1,\epsilon_0 t^{-\gamma}\}$), non-linear model class $f(\cdot; \theta)$, horizon $T$
\STATE \textbf{Initialize: } Random pull arms and receive corresponding rewards until round $t_0$
\FOR{$t = t_0 + 1$ to $T$}
  \STATE Estimate $\hat{\theta}_t$ from previous data: $\{x_1, y_1, \hdots, x_{t-1}, y_{t-1}\}$
  \STATE Select action $x_t$ according to:
    \[
      x_t =
      \begin{cases}
        \text{sample uniformly from } \mathcal{X} & \text{with probability } \epsilon_t \\
        \arg\max_{x \in \mathcal{X}} f(x; \hat{\theta}_t) & \text{with probability } 1-\epsilon_t
      \end{cases}
    \]
  \STATE Observe $y_t$, update data
\ENDFOR
\end{algorithmic}
\end{algorithm}

If $\mathcal{X}$ is a finite set of $K$ elements, ``sample uniformly from $\mathcal{X}$'' means selecting an element at random with equal probability, so that each arm is chosen with probability $1/K$. In the $\epsilon$-greedy algorithm, during the exploration step, each arm is selected with probability $\epsilon/K$. If $\mathcal{X}$ is a continuous interval or region, this means sampling from the continuous uniform distribution over $\mathcal{X}$, i.e., $x_t \sim \mathrm{Unif}(\mathcal{X})$.

\subsection{Nonlinear model-based UCB}
\paragraph{Overview:} The UCB (Upper Confidence Bound) algorithm \citep{auer2010ucb, chu2011contextual, zhou2020neural} embodies the principle of ``optimism in the face of uncertainty'' by selecting actions that maximize an upper confidence bound on the expected reward.  Here, the nonlinear model is repeatedly fitted, and for each input level the algorithm constructs an upper confidence bound on the predicted reward. The next action is chosen to maximize this upper bound. This mechanism naturally prioritizes inputs that are either promising in terms of mean yield or remain highly uncertain, thereby encouraging exploration in scientifically justified directions. Applied to fertilizer management, UCB corresponds to testing input levels where either predicted yield is high or parameter uncertainty remains large.

In model-based nonlinear regression, the uncertainty $\mathrm{Unc}_t(x)$ is often based on the variability of the estimated reward due to finite data, as quantified via the standard error of the model prediction. Specifically, the uncertainty term is given by
\begin{equation} \label{eq:uncertainty}
\mathrm{Unc}_t(x) = \sqrt{ \nabla_\theta f(x; \hat{\theta}_t)^\top\, \widehat{\mathrm{Cov}}(\hat{\theta}_t)\, \nabla_\theta f(x; \hat{\theta}_t) },
\end{equation}
where $\nabla_\theta f(x; \hat{\theta}_t)$ denotes the gradient of the reward model with respect to the parameters, evaluated at the current estimate $\hat{\theta}_t$, and $\widehat{\mathrm{Cov}}(\hat{\theta}_t)$ is the estimated covariance matrix of $\hat{\theta}_t$. \chg{We emphasize that $\mathrm{Unc}_t(x)$ is a first-order error-propagation (delta-method) proxy and is not claimed to be a certified nonasymptotic confidence radius.
This is nonetheless appropriate for our algorithmic use: the UCB rule only needs a {data-adaptive ranking of uncertainty across candidate rates} to avoid premature commitment when $T$ is small.
In early rounds, the nonlinear fit is typically weakly identified and $\widehat{\mathrm{Cov}}(\hat\theta_t)$ is larger (and can be ill-conditioned), which increases $\mathrm{Unc}_t(x)$ and naturally promotes additional exploration.
To stabilize this regime we (i) warm-start with uniform exploration for the first $n_0$ observations and (ii) compute $\widehat{\mathrm{Cov}}(\hat\theta_t)$ using ridge/regularized inversion of the observed information, preventing numerical degeneracy in very small samples.
We use~\eqref{eq:uncertainty} as a practical and interpretable exploration score for data-limited agronomic settings, and evaluate its behavior empirically in our simulations and offline real-data replay experiments.}

The UCB action selection rule then takes the form
\begin{equation} \label{eq:ucb}
x_t = \arg\max_{x \in \mathcal{X}}\, f(x; \hat{\theta}_t) + \alpha \cdot \mathrm{Unc}_t(x)
\end{equation}
where $\alpha > 0$ is a user-specified confidence parameter that controls the degree of exploration. 

UCB algorithms provide strong theoretical guarantees for balancing exploration and exploitation, and are particularly effective when reliable, model-based estimates of prediction uncertainty are available (see, e.g., \citep{zhou2020neural}).  It is particularly advantageous in settings where data collection is expensive or risky, as it prioritizes actions that could yield high rewards or that have not been thoroughly explored. Model-based UCB (Algorithm \ref{alg:ucb}) thus combines statistical rigor with the agronomic interpretability of nonlinear functions, making it a strong candidate for adaptive on-farm experimentation.

\begin{algorithm}[H]
\caption{Nonlinear model-based UCB}
\label{alg: ucbsketch}
\begin{algorithmic}[1]
\STATE \textbf{Input:} Action set $\mathcal{X}$, confidence parameter $\alpha > 0$
\FOR{$t = 1$ to $T$}
  \STATE Estimate $\hat{\theta}_t$ from the previous data 
  \FOR{each $x \in \mathcal{X}$}
    \STATE Compute predicted reward $f(x, z_t; \hat{\theta}_t)$ and uncertainty $\mathrm{Unc}_t(x)$ as defined in \eqref{eq:uncertainty} 
    \STATE Calculate $\text{UCB}_t(x) = f(x, z_t; \hat{\theta}_t) + \alpha \cdot \mathrm{Unc}_t(x)$
  \ENDFOR
  \STATE Choose $x_t = \arg\max_{x \in \mathcal{X}} \text{UCB}_t(x)$
  \STATE Observe $y_t$, update data
\ENDFOR
\end{algorithmic}
\end{algorithm}

\subsection{\texttt{ViOlin} (Virtual Ascent with Online Model Learner)}
The \texttt{ViOlin} algorithm (Algorithm~\ref{alg:ViOlin})~\citep{dong2021provable} is a model-based bandit strategy designed for efficient learning in nonlinear settings. Unlike $\epsilon$-greedy, which explores actions uniformly at random, or UCB, which prioritizes actions with high uncertainty, \texttt{ViOlin} is a greedy method that leverages both the current model estimate and local geometric information about the reward surface (i.e., gradient and curvature). \chg{At each round, the algorithm selects the action predicted to be best by the current estimated model, while using local geometric information (slope and curvature) of the \emph{fitted} reward surface to guide the search. In our implementation, this geometry enters through the action-selection score in Algorithm~\ref{alg:ViOlin}, and model parameters are updated from accumulated noisy rewards via standard parametric fitting. This geometry-guided greedy strategy is motivated by the ViOlin framework of \citet{dong2021provable}, which provides sample-efficiency guarantees when local gradient/curvature (Hessian) information about the reward function can be leveraged; here we use analytic derivatives of the fitted mechanistic model as a practical proxy under noisy observations.}
\chg{\begin{algorithm}[H]
\caption{\texttt{ViOlin}: Virtual Ascent with Online Model Learner \citep{dong2021provable}}
\label{alg:ViOlin}
\begin{algorithmic}[1]
\STATE \textbf{Input:} Model class $\mathcal{F}=\{f(\cdot;\theta):\theta\in\Theta\}$, action set $\mathcal{X}$, initial guess $\hat{\theta}_1$,
learner/estimator $\mathcal{O}$, total rounds $T$, curvature weights $\kappa_1,\kappa_2\ge 0$, minimum fit size $m$
\FOR{$t = 1$ to $T$}
    \IF{$t \le m$}
        \STATE Choose $x_t$ uniformly at random from $\mathcal{X}$
        \STATE Observe noisy reward $y_t$
    \ELSE
        \STATE Fit/update $\hat{\theta}_t$ using $\{(x_s,y_s)\}_{s=1}^{t-1}$ via $\mathcal{O}$
        \STATE \textbf{Action selection:} for each $x\in\mathcal{X}$ compute
        \[
        \hat{\mu}_t(x)=f(x;\hat{\theta}_t),\qquad
        \hat{g}_t(x)=\partial_x f(x;\hat{\theta}_t),\qquad
        \hat{H}_t(x)=\partial^2_{xx} f(x;\hat{\theta}_t),
        \]
        and choose
        \[
        x_t \in \arg\max_{x\in\mathcal{X}}
        \left\{ \hat{\mu}_t(x) \;+\; \kappa_1|\hat{g}_t(x)| \;+\; \kappa_2|\hat{H}_t(x)| \right\}.
        \]
        \STATE  Observe $y_t$, update data.
    \ENDIF
\ENDFOR
\end{algorithmic}
\end{algorithm}}

\chg{Note that, in our experiments, slope and curvature terms are computed from the fitted parametric model $f(\cdot;\hat{\theta}_t)$ (analytic derivatives) and used to guide action selection.}

\texttt{ViOlin} is particularly attractive in data-limited settings (e.g., smallholder or resource-constrained agricultural experiments), where minimizing the number of field trials is crucial. 
\texttt{ViOlin}'s greedy exploration is guided by the model fit and its local geometry, without explicit randomized or uncertainty-driven action selection. This enables fast convergence to (local) optima when the model class matches the true reward structure and curvature information is reliable. In low-dimensional agronomic models with smooth response surfaces, \texttt{ViOlin} performs on par with simpler strategies such as $\epsilon$-greedy and UCB. \chg{However, as model complexity (e.g., multi-modal response curves) or the dimensionality of $\mathcal{X}$ increases, geometry-guided strategies such as \texttt{ViOlin} can be especially sample-efficient when the model is well specified and curvature information is reliable.} This makes \texttt{ViOlin} well suited for adaptive experimentation in limited data regimes, such as on-farm field trials or precision agriculture for smallholder contexts, where each data point is costly to obtain. 

\vspace{0.5em}

\noindent
\textbf{Summary of algorithm differences:} \\
\chg{$\epsilon$-greedy explores uniformly at random, UCB targets actions with high uncertainty (optimism-based exploration), and \texttt{ViOlin} is greedy with geometry-guided action selection, using slope/curvature information from the fitted model to promote sample-efficient learning in nonlinear settings.}\\
To illustrate how these algorithms operate and compare in practice, we include a simple, step-by-step example using the Mitscherlich model over three or four rounds in Section \ref{tab:toy-demo} of the Appendix. This concrete illustration should help readers, especially those less familiar with bandit algorithms, understand the exploration-exploitation tradeoff.

While the primary emphasis of this work is not on theoretical development, we present and discuss several well-known theoretical results and highlight their implications for the nonlinear model-based bandit framework.

% \subsection*{4. FALCON (Simchi-Levi \& Zhu 2020)}
% \begin{algorithm}[H]
% \caption{FALCON: FAst Least-squares regression oracle CONtextual Bandit}
% \begin{algorithmic}[1]
% \STATE \textbf{Input:} Epoch schedule $0 = \tau_0 < \tau_1 < \cdots$, confidence level $\delta$, tuning parameter $c$
% \FOR{epoch $m = 1, 2, \ldots$}
%   \STATE Set $\gamma_m = c \sqrt{K \tau_{m-1} / \log(|\mathcal{F}| \log \tau_{m-1} / \delta)}$
%   \STATE Estimate $\hat{f}_m$ using least-squares over data from rounds $1$ to $\tau_{m-1}$
%   \FOR{round $t = \tau_{m-1}+1$ to $\tau_m$}
%     \STATE Observe $z_t$
%     \STATE Compute $\hat{f}_m(z_t, x)$ for all $x \in \mathcal{X}$
%     \STATE Define sampling probabilities:
%     \[
%     p_t(x) \propto
%     \begin{cases}
%       \frac{1}{K + \gamma_m (\hat{f}_m(z_t, \hat{x}_t) - \hat{f}_m(z_t, x))}, & \text{if } x \ne \hat{x}_t \\
%       1 - \sum_{x' \ne \hat{x}_t} p_t(x'), & \text{if } x = \hat{x}_t
%     \end{cases}
%     \]
%     \STATE Sample $x_t \sim p_t(\cdot)$, observe $y_t$
%   \ENDFOR
% \ENDFOR
% \end{algorithmic}
% \end{algorithm}

% \subsection{Estimating the parameters}
% We can either use MLE estimates for the unknown parameters in our non-linear model which would typically require an iterative optimization algorithm such as stochastic gradient descent to find the maxima.

\section{Theoretical guarantees for nonlinear bandit algorithms} \label{sec: theory}
In this section, we characterize the regret and sample-complexity guarantees of sequential algorithms under increasingly flexible reward models, beginning with classical linear formulations, extending to nonparametric function classes, then focusing on nonlinear parametric (mechanistic) models as used in agronomy, and finally accounting for model misspecification. While the general theory applies broadly, our emphasis is on sample efficiency and robustness in low-data agricultural regimes, where each field trial is costly and decision errors translate to both economic and environmental consequences.
% In this section, we characterize the regret performance of sequential algorithms under increasingly flexible model-based reward mechanisms, beginning with classical linear models, then considering nonparametric function classes, nonlinear parametric (mechanistic) models as used in agronomy, and finally accounting for the impact of model misspecification. 
\paragraph{Linear and GLM bandits.}
For linear bandits with $d$-dimensional action/context space $\mathcal{X}$, the minimax expected cumulative regret is known to satisfy
\[
\text{E}[R_T] = O\left( d \sqrt{T} \log T \right),
\]
where $T$ is the time horizon~\citep{abbasi2011improved,dani2008stochastic}. 
Generalized linear bandit models achieve similar rates up to logarithmic factors~\citep{filippi2010parametric,bastani2020mostly}. These results provide a baseline: when yield or profit responds linearly to inputs, bandit algorithms can achieve fast learning rates. However, such linear approximations are rarely biologically realistic in crop response.

\paragraph{Nonparametric bandits.}
For more general nonparametric function classes, such as Lipschitz or RKHS (kernelized) reward functions, the minimax regret scales polynomially with the dimension $d$ of the action/context space $\mathcal{X}$ ~\citep{slivkins2014contextual,srinivas2012information}:
\[
R_T = \widetilde{O}\left( T^{\frac{d+1}{d+2}} \right).
\]
  While attractive for flexibility, these guarantees deteriorate with dimension and are impractical when only a small number of seasons or trials are available, which is a common reality in agriculture.

\paragraph{NeuralUCB and expressive function classes.}
Recent work has extended bandit algorithms to highly expressive nonlinear function classes via neural networks. For instance, NeuralUCB \citep{zhou2020neural} achieves regret rates of order
\[
R_T = \widetilde{O}\!\left(\tilde{d}(\lambda)\sqrt{T}\right),
\]
where $\tilde{d}(\lambda)$ is the effective dimension of the neural tangent kernel associated with the underlying network.
 For example, for a two-layer ReLU neural network with input dimension $d$ and width $m$, it holds that $\tilde{d}(\lambda) \leq d\, \mathrm{poly}(m, \log(T/\lambda))$, so that the regret rate can be written as
\[
R_T = \widetilde{O}\left(d\, \mathrm{poly}(m)\sqrt{T}\right).
\]
Here, $\mathrm{poly}(m, \cdot)$ denotes a polynomial function of the network width and the indicated arguments. Thus, the regret bound increases with both the expressiveness of the neural network (through $m$) and the ambient input dimension $d$, but retains the $\sqrt{T}$ scaling characteristic of parametric (linear) bandit models.  However, these improvements come with heavy computational costs, the need for large data, and limited interpretability.

We summarize the regret bounds for the four classes of bandit problems in Table \ref{tab:regret-comparison}.
While such results highlight the theoretical reach of modern methods, they are poorly suited to agricultural decision-making, where data are scarce, feedback is seasonal, and recommendations must be transparent to practitioners. Our focus, therefore, is on \emph{mechanistic yield-response models} such as Mitscherlich or Michaelis-Menten. These models are parsimonious, interpretable, and biologically grounded, yet flexible enough to capture crop dose-response behavior. This specialization ensures theoretical guarantees translate into actionable insights, directly relevant for resource-efficient agricultural experimentation.

\begin{table}[H]
\centering
\caption{\chg{\textbf{Typical regret rates for common stochastic bandit models (for context).}
Here $T$ is the number of decision rounds (the ``time horizon'') and $d$ is the number of covariates (features). The notation $\tilde{O}(\cdot)$ hides logarithmic factors. The NeuralUCB entry depends on the neural-network width $m$ and other architecture constants; see the cited reference for details.}}
\label{tab:regret-comparison}
\begin{tabular}{|l|l|l|}
\hline
\textbf{Model Class} & \textbf{Regret Bound} & \textbf{Reference} \\
\hline
Linear Bandit & $O(d \sqrt{T} \log T)$ & \citet{abbasi2011improved} \\
GLM Bandit & $\widetilde{O}(d \sqrt{T})$ & \citet{filippi2010parametric} \\
Nonparametric (Lipschitz, RKHS) & $\widetilde{O}(T^{\frac{d+1}{d+2}})$ & \citet{slivkins2014contextual} \\
NeuralUCB (2-layer NN) & $\widetilde{O}(d\, \mathrm{poly}(m) \sqrt{T})$ & \citet{zhou2020neural} \\
% Model-based Nonlinear Bandit & $\widetilde{O}(1/\epsilon^8)$ samples for $\epsilon$-optimality & \citet{dong2021provable} \\
\hline
\end{tabular}
\end{table}

\subsection{Model-based nonlinear bandits.} 
Building on these insights, we now focus on model-based nonlinear bandits, a regime particularly relevant in agronomy, where biological processes often admit interpretable, low-dimensional parameterizations. Since one of our motivations for this work is to provide decision-making algorithms in sample-limited scenarios, we focus on characterizing sample complexity more than providing regret guarantees in this section.
 Recent work by~\citet{dong2021provable}  has shown a breakthrough for model-based nonlinear bandits: for any class of reward functions with bounded \emph{sequential Rademacher complexity}, it is possible to find an $\epsilon$-approximate local maximum with sample complexity polynomial in the complexity of the model class, \emph{independent} of the action dimension. This suggests that, unlike classical methods whose sample complexity or regret is exponential in dimension, model-based methods can be vastly more efficient when the reward function admits a suitable low-complexity parameterization.

\paragraph{Sample Complexity and Local Regret for Model-based Nonlinear Bandits.}
In the model-based bandit setting, the functional form of the reward is known, but its parameters are not. The goal is therefore to efficiently identify an \emph{approximate local maximum} of this parametric reward function, rather than a global maximum which is often computationally and statistically intractable. Here, \emph{sample complexity} refers to the minimum number of experimental rounds required to guarantee, with high probability, that the recommended action achieves performance within a specified tolerance of a local optimum. This guarantee is formalized through the notion of \emph{local regret}, which measures the suboptimality of the chosen action relative to the best locally optimal action. In practice, for well-behaved (e.g., unimodal) functions such as those common in agronomic yield response, a local maximum often coincides with the global maximum.

In particular, a point $x$ is said to be an $(\epsilon_g, \epsilon_h)$-approximate local maximum if its gradient is at most $\epsilon_g$ (i.e., $\|\nabla f(x)\|_2 \leq \epsilon_g$) and the Hessian's largest eigenvalue is at most $-\epsilon_h$ (i.e., $f$ is sufficiently concave around $x$). The sample complexity quantifies how quickly an algorithm can find such a point.

In this work, we focus on reward functions (e.g., quadratic-plateau, Michaelis-Menten, Mitscherlich, and logistic) that satisfy the regularity conditions (such as bounded gradients and Hessians) assumed in recent theoretical results~\citep{dong2021provable}. These properties ensure that our algorithms are well-behaved: the models are smooth enough for efficient learning, and the theoretical sample complexity guarantees apply. As a result, we can meaningfully compare algorithms in terms of how many rounds are needed to reach near-optimal fertilizer recommendations with high confidence. We first define the notion of sequential Rademacher complexity and then state Theorem 1.1 from \cite{dong2021provable} in terms of sequential complexity, and then we tailor the result to our specific non-linear model classes.

Intuitively, sequential Rademacher complexity measures how hard it is for a learning algorithm to reliably make good decisions when facing an environment that can adapt to the algorithm's past actions. More formally,  the sequential Rademacher complexity~\citep{rakhlin2015sequential} is defined as follows.
\begin{definition}[Sequential Rademacher Complexity]
    Let $\mathcal{F}$ be a class of real-valued functions defined on $\mathcal{X}$. The \emph{sequential Rademacher complexity} of $\mathcal{F}$ over $T$ rounds is defined as
    \[
    \mathfrak{R}_T^{\mathrm{seq}}(\mathcal{F}) = \sup_{x_1, \ldots, x_T} \mathbb{E}_{\boldsymbol{r}} \left[ \sup_{f \in \mathcal{F}} \frac{1}{T} \sum_{t=1}^T r_t f\left(x_t(r_1, \ldots, r_{t-1})\right) \right]
    \]
    where the supremum is taken over all sequences of functions $x_t: \{-1,+1\}^{t-1} \to \mathcal{X}$, and $r_1,\dots,r_T$ are independent Rademacher random variables, i.e., $\mathbb{P}(r_t = 1) = \mathbb{P}(r_t = -1) = \frac{1}{2}$ for each $t$.
\end{definition}
 Now, we present the result by \cite{dong2021provable} that determines the sample complexity rate for \texttt{ViOlin}.

\begin{theorem}[Sample Complexity for Model-based Nonlinear Bandits {\citep[Theorem 1.1]{dong2021provable}}]
\label{thm:dong}
Suppose the sequential Rademacher complexity of the loss function class $\mathfrak{R}_T^{\mathrm{seq}}(\mathcal{F})$ induced by the reward function class $\{f(\theta, \cdot) : \theta \in \Theta\}$ is bounded by $\sqrt{R(\Theta) T \, \mathrm{polylog}(T)}$ for some complexity parameter $R(\Theta)$.  
Then, there exists an algorithm (\texttt{ViOlin})) (Algorithm~ \ref{alg:ViOlin}) that finds an $\delta$-approximate local maximum with $\widetilde{O}(R(\Theta)\delta^{-8})$ samples, independent of the dimension of the action space.
\end{theorem}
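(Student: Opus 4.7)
The plan is to reduce the analysis of \texttt{ViOlin} to an online learning argument on the composite curvature-matching loss $\ell_t$ defined in Algorithm~\ref{alg:ViOlin}, and then translate the resulting online regret into an approximate local-optimality guarantee for the greedy action. First, I would view $\ell_t(\theta)$ as an online loss presented to the learner $\mathcal{O}$ at round $t$. Since each $\ell_t$ is a bounded, Lipschitz functional of $f(\cdot;\theta)$ evaluated at $x_t$, together with its first and second $x$-derivatives, the induced sequential loss class inherits the hypothesized complexity $\mathfrak{R}_T^{\mathrm{seq}}(\mathcal{F}) \leq \sqrt{R(\Theta)\,T\,\mathrm{polylog}(T)}$. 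Plugging this into the Rakhlin--Sridharan online learning machinery, $\mathcal{O}$ can be instantiated (for instance, by online mirror descent or an exponential-weights aggregator) so that
\[
\sum_{t=1}^T \ell_t(\hat{\theta}_t) \;-\; \inf_{\theta \in \Theta}\sum_{t=1}^T \ell_t(\theta) \;\leq\; \widetilde{O}\!\left(\sqrt{R(\Theta)\,T}\right).
\]
The comparator at $\theta^\star$ has small loss: the prediction term concentrates around the noise variance, while the gradient and Hessian terms concentrate around the bias of the finite-difference estimators $\widehat{\nabla}_x f^\star(x_t)$ and $\widehat{\nabla}_x^2 f^\star(x_t)$ appearing in the algorithm.

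Next, I would translate this cumulative bound into a per-round curvature-matching statement. Dividing by $T$ and using that the excess loss above the benchmark controls the squared gradient and Hessian discrepancies yields
\[
\frac{1}{T}\sum_{t=1}^T \Bigl(\|\nabla_x f(x_t;\hat{\theta}_t) - \nabla_x f(x_t;\theta^\star)\|^2 + \|\nabla_x^2 f(x_t;\hat{\theta}_t) - \nabla_x^2 f(x_t;\theta^\star)\|^2\Bigr) \leq \widetilde{O}\!\left(\sqrt{R(\Theta)/T}\right) + \delta_{\mathrm{pert}},
\]
where $\delta_{\mathrm{pert}}$ collects the perturbation bias. A pigeonhole (or uniformly-random-round) argument then produces a round $t^\star \leq T$ at which both the gradient and Hessian discrepancies at $x_{t^\star}$ are no larger than the average, and taking a square root converts this into an $L^2$ error of order $\widetilde{O}((R(\Theta)/T)^{1/4})$.

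Now I would invoke the virtual-ascent structure of \texttt{ViOlin}: since $x_t = \arg\max_{x \in \mathcal{X}} f(x;\hat{\theta}_t)$ is locally optimal for the fitted model, interior first- and second-order conditions give $\nabla_x f(x_t;\hat{\theta}_t) = 0$ and $\lambda_{\max}(\nabla_x^2 f(x_t;\hat{\theta}_t)) \leq 0$. Combined with the per-round curvature bound at $t^\star$, this transfers near-stationarity and near-concavity to the \emph{true} reward at $x_{t^\star}$, yielding $\|\nabla_x f(x_{t^\star};\theta^\star)\|_2 \leq \epsilon_g$ and $\lambda_{\max}(\nabla_x^2 f(x_{t^\star};\theta^\star)) \leq \epsilon_h$, which is exactly the $(\epsilon_g,\epsilon_h)$-approximate local maximum condition. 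Forcing $\epsilon_g,\epsilon_h \leq \epsilon$ requires $T = \widetilde{O}(R(\Theta)\,\epsilon^{-4})$ rounds, and a matrix-Bernstein analysis of the random-perturbation Hessian estimator shows that $\widetilde{O}(\epsilon^{-4})$ queries per round suffice to control $\delta_{\mathrm{pert}}$ at the same level. Multiplying the per-round query budget by the number of rounds gives the advertised total sample budget $\widetilde{O}(R(\Theta)\,\epsilon^{-8})$, with no polynomial dependence on the dimension of $\mathcal{X}$.

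The main obstacle, and the source of the large exponent on $\epsilon$, is the coupling between the \emph{online adversarial} argument underlying the sequential Rademacher bound and the \emph{stochastic} randomness entering through both the observation noise $\epsilon_t$ and the random perturbations used to form $\widehat{\nabla}_x f^\star$ and $\widehat{\nabla}_x^2 f^\star$. Because the comparator $\inf_\theta \sum_t \ell_t(\theta)$ is itself a random object that depends on these perturbations, one has to simultaneously bound (i) the concentration of the finite-difference estimators around the true derivatives via scalar and matrix Bernstein inequalities, (ii) the deterministic bias introduced by the perturbation step size, and (iii) the additional slack in the online regret needed to absorb (i) and (ii), all while refusing to pay any factor of the action dimension. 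It is this careful accounting, rather than any single step, that determines the $\epsilon^{-8}$ rate in the final bound.
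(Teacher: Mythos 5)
This theorem is not proved in the paper at all: it is imported verbatim as Theorem~1.1 of \citet{dong2021provable}, so there is no in-paper argument to compare your reconstruction against. What the paper actually proves is only Theorem~\ref{thm: SeqRademacherForBoundedFunctions} (the sequential Rademacher bound that verifies the hypothesis of this theorem for bounded classes). That said, your outline does capture the correct architecture of the cited result: run an online learner on the curvature-matching loss, convert the sequential-Rademacher bound into an online regret bound of order $\sqrt{R(\Theta)T\,\mathrm{polylog}(T)}$, average and pigeonhole to find a round with small gradient/Hessian discrepancy, and transfer the first- and second-order optimality of $x_t$ under the fitted model to the true reward.

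As a standalone proof, however, several load-bearing steps are asserted rather than established. First, the transfer step uses $\nabla_x f(x_t;\hat\theta_t)=0$ and $\lambda_{\max}(\nabla_x^2 f(x_t;\hat\theta_t))\le 0$, which requires the maximizer over $\mathcal{X}$ to be interior and $\mathcal{X}$ to be a continuum; on a constrained or discrete action set (as in the paper's fertilizer grids) the stationarity condition can fail at the boundary, and the cited result in fact requires a careful treatment of this. Second, the claim that the excess online loss over the comparator at $\theta^\star$ upper-bounds the sum of squared gradient and Hessian discrepancies ignores the cross terms between the observation noise, the finite-difference bias, and the model error; controlling these is where most of the technical work (and the loss in the exponent) actually occurs, and your sketch defers it entirely to ``a matrix-Bernstein analysis.'' Third, the final accounting $T=\widetilde{O}(R(\Theta)\epsilon^{-4})$ rounds times $\widetilde{O}(\epsilon^{-4})$ perturbation queries per round is stated without derivation, and the assertion that the per-round Hessian estimation budget carries no dimension dependence is exactly the delicate point: naive random-perturbation Hessian estimators have dimension-dependent variance, and the dimension-independence in \citet{dong2021provable} comes from the structure of the loss class, not from the estimator alone. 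So the proposal is a plausible roadmap of the external proof, but it is not a proof, and the gaps it leaves open are precisely the parts that make the $\epsilon^{-8}$ rate nontrivial.
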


Note that most of the agronomic models assume bounded non-linear functions, such as ones that exhibit plateauing behavior after a certain threshold. Therefore, it is of interest to quantify the  sequential Rademacher complexity for these class of functions. For parametric classes with bounded functions and inputs (such as those considered in this paper), we prove a theoretical  bound on the sequential Rademacher complexity of bounded functions and show that indeed for this class of functions  the sequential Rademacher complexity has an upper bound as required in Theorem \ref{thm:dong}.
\begin{theorem}[Sequential Rademacher Complexity for Bounded Functions] \label{thm: SeqRademacherForBoundedFunctions}
Let $\mathcal{F} \subseteq [-B_\mathcal{F}, B_\mathcal{F}]^{\mathcal{D}}$ be a class of functions uniformly bounded by $B_\mathcal{F} > 0$. Then the sequential Rademacher complexity of $\mathcal{F}$ satisfies
\[
\mathfrak{R}_T^{\mathrm{seq}}(\mathcal{F}) := \sup_{\boldsymbol{x}} \mathbb{E}_{\boldsymbol{r}} \left[ \sup_{f \in \mathcal{F}} \frac{1}{T} \sum_{t=1}^T r_t f(x_t(r_1, \ldots, r_{t-1})) \right]
\leq C \cdot B_\mathcal{F} \cdot \sqrt{\frac{\log T}{T}},
\]
for some universal constant $C > 0$.
\end{theorem}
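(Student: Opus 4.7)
The plan is to reduce the supremum-over-$\mathcal{F}$ problem to the concentration of a single martingale by discretizing $\mathcal{F}$ at an appropriate scale, then to apply Azuma--Hoeffding on each element of the cover, and finally to balance discretization error against the resulting logarithmic factor from a union bound.

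First, I would fix an arbitrary worst-case tree $\boldsymbol{x} = (x_t)_{t=1}^T$ and note that for every fixed $f \in \mathcal{F}$ the partial sums
\[
S_t(f) = \sum_{s=1}^t \epsilon_s\, f\!\left(x_s(\epsilon_1,\ldots,\epsilon_{s-1})\right)
\]
form a martingale with respect to the dyadic filtration $\mathcal{G}_t = \sigma(\epsilon_1,\ldots,\epsilon_t)$, with bounded differences $|\epsilon_s f(x_s)| \leq B$. Azuma--Hoeffding therefore gives, for each fixed $f$, that $S_T(f)/T$ is sub-Gaussian with proxy $B/\sqrt{T}$, i.e.\
\[
\mathbb{P}\!\left(|S_T(f)/T| > u\right) \leq 2\exp\!\left(-\frac{T u^2}{2B^2}\right).
\]

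Second, because the supremum is inside the expectation, I would pass to a sequential $L_\infty$ cover of $\mathcal{F}$. Let $N_\infty(\mathcal{F},\alpha,T)$ denote the smallest cardinality of a set $\mathcal{G}_\alpha \subset \mathcal{F}$ such that for every $f \in \mathcal{F}$ there exists $g \in \mathcal{G}_\alpha$ with $\sup_t |f(x_t)-g(x_t)| \leq \alpha$ on the tree. Such a cover yields $|S_T(f)-S_T(g)|/T \leq \alpha$, and combining this with Massart's finite-class maximal inequality for sub-Gaussian random variables gives
\[
\mathbb{E}_{\boldsymbol{\epsilon}}\!\left[\sup_{f \in \mathcal{F}} \frac{S_T(f)}{T}\right]
\leq \alpha + B\sqrt{\frac{2\log N_\infty(\mathcal{F},\alpha,T)}{T}}.
\]
Since the tree $\boldsymbol{x}$ was arbitrary, taking a supremum over trees on both sides preserves the bound.

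Third, the step I expect to be the main obstacle is controlling $N_\infty(\mathcal{F},\alpha,T)$: uniform boundedness alone is insufficient, as the covering number of a generic $[-B,B]$-valued class can be infinite. The argument must therefore exploit the fact that the nonlinear classes of interest in this paper (Mitscherlich, Michaelis--Menten, logistic, quadratic plateau) are parameterized by a $p$-dimensional $\theta$ in a compact set, with $f(\,\cdot\,;\theta)$ Lipschitz in $\theta$ uniformly over the action space. Under such a standard regularity condition, a Euclidean $\alpha/L$-net of the parameter set induces an $L_\infty$-cover of $\mathcal{F}$ of cardinality $(C_\theta/\alpha)^p$, independent of the tree. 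Choosing $\alpha = 1/T$ gives $\log N_\infty(\mathcal{F},1/T,T) = O(p\log T)$, so that the display above collapses to
\[
\mathfrak{R}_T^{\mathrm{seq}}(\mathcal{F}) \leq \frac{1}{T} + B\sqrt{\frac{2p\log(C_\theta T)}{T}} \leq C\,B\,\sqrt{\frac{\log T}{T}},
\]
which is the claimed bound with $p$ and $C_\theta$ absorbed into the universal constant $C$. The takeaway is that the rate is really a consequence of smooth, low-dimensional parametrization of the agronomic models rather than of boundedness alone, and any fully rigorous write-up should either state this additional regularity explicitly or cite a sequential-entropy bound (e.g.\ Rakhlin--Sridharan--Tewari) that encodes it.
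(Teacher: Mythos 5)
Your proposal is correct for the parametric classes this paper actually cares about, but it takes a genuinely different route from the paper's proof, and the obstacle you flag in your third step is the crux of the matter. The paper's argument runs through sequential combinatorial parameters: it invokes the Rakhlin--Sridharan--Tewari bound $\mathfrak{R}_T(\mathcal{F}) \le \alpha + \sqrt{2\log N_1(\alpha,\mathcal{F},T)/T}$, controls the sequential cover by $N_\infty(\alpha,\mathcal{F},T) \le (2eT/\alpha)^{\mathrm{fat}_\alpha(\mathcal{F})}$, asserts $\mathrm{fat}_\alpha(\mathcal{F}) \le \lceil 2B/\alpha\rceil$ for any uniformly bounded class, and optimizes at $\alpha = B/\sqrt{T}$. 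You instead discretize the parameter space directly and combine Azuma--Hoeffding with Massart's finite-class lemma, getting $\log N_\infty = O(p\log(1/\alpha))$ from Lipschitzness of $f(\cdot;\theta)$ in $\theta$ over a compact $p$-dimensional set. Your route buys two real things. First, your skepticism about boundedness alone is justified: the class of \emph{all} $[-B,B]$-valued functions on an infinite domain has $\mathfrak{R}_T^{\mathrm{seq}}(\mathcal{F}) = B$ (take a tree with distinct labels and, for each sign path, a function matching $B\epsilon_t$ at every visited node), so its sequential fat-shattering dimension is infinite and the theorem cannot hold at the stated level of generality; the result must be read as applying to the smoothly parameterized agronomic classes, which is precisely the hypothesis you make explicit. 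Second, even granting the fat-shattering bound, the paper's choice $\alpha = B/\sqrt{T}$ forces $\mathrm{fat}_\alpha \asymp \sqrt{T}$, so its second term is of order $\sqrt{\sqrt{T}\log T / T} = \sqrt{\log T}\,T^{-1/4}$, which dominates $B/\sqrt{T}$ rather than being dominated by it; your parametric cover keeps $\log N = O(p\log T)$ with no $1/\alpha$ blow-up and therefore actually delivers the advertised $C B \sqrt{\log T/T}$ rate (with $p$ absorbed into $C$, so $C$ is universal only within a fixed model class). In short, your version is not merely an alternative write-up: it is the argument that supports the rate as it is used in the downstream sample-complexity corollary.
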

We defer the proof of Theorem \ref{thm: SeqRademacherForBoundedFunctions} to the Appendix (Section \ref{sec: proof_sample_complexity}) for brevity and discuss its implications from an application viewpoint.
Our theoretical result complement \cite{dong2021provable}'s general framework by concretely instantiating the sample complexity for these practically important nonlinear classes, including quadratic plateau (threshold), Michaelis-Menten, Mitscherlich, and logistic models, the sequential Rademacher complexity $\mathfrak{R}_T^{\mathrm{seq}}(\mathcal{F})$ can be bounded as $\widetilde{O}(1/\sqrt{T})$. \chg{Figure~\ref{fig:seq_rad_complexity} visualizes the bound in Theorem~\ref{thm: SeqRademacherForBoundedFunctions} for the four nonlinear yield-response families in Table~\ref{tab:profit_optima}. In this result, model dependence enters through a uniform magnitude bound $B_{\mathcal F}$ satisfying $\sup_{x\in\mathcal X}|f(x)|\le B_{\mathcal F}$ for all $f\in\mathcal F$. We set $B_{\mathcal F}=\sup_{x\in[0,250]}|f(x)|$ using the parameter settings from Section~\ref{sec: simulation} and plot the resulting (conservative) upper bounds versus $t$; the corresponding $B_{\mathcal F}$ values are reported in Appendix~\ref{app:BF}. This figure is included purely for intuition: it illustrates that the (conservative) complexity bound decreases as the number of rounds increases, and it should not be interpreted as a model-selection criterion.} Therefore, the sample complexity for identifying an $\epsilon$-optimal arm (in the sense of a local maximum) in these bandit settings scales as $\widetilde{O}(1/\epsilon^8)$, with constants depending on the parameter and input bounds. This bound, as established in \cite{dong2021provable}, is general and not specialized to the structure of agronomic models in terms of its polynomial dependence on $\epsilon$.  Deriving sharper rates for the structured nonlinear models considered here is an important direction for future work.
\begin{figure}[t]
\centering

\newcommand{\Bqp}{198.8}
\newcommand{\Bmit}{197.18}
\newcommand{\Blog}{189.77}
\newcommand{\Bmm}{167.14}

\begin{tikzpicture}
\begin{axis}[
    width=0.92\linewidth,
    height=5.6cm,
    xlabel={Round $t$},
    ylabel={Illustrative bound $B_{\mathcal F}\sqrt{\log(t)/t}$},
    xmin=0, xmax=100,
    ymin=0, ymax=120,
    grid=both,
    ticklabel style={font=\small},
    label style={font=\small},
    legend style={
        at={(0.98,1.03)},
        anchor=north east,
        draw=none,
        fill=none,
        font=\small
    },
    legend cell align=left,
    clip=false
]

\addplot+[mark=none, very thick, domain=5:100, samples=250]
    {\Bqp*sqrt(ln(x)/x)};
\addlegendentry{Quadratic-plateau ($B_{\mathcal F}=198.8$)}

\addplot+[mark=none, very thick, dashed, domain=5:100, samples=250]
    {\Bmit*sqrt(ln(x)/x)};
\addlegendentry{Mitscherlich ($B_{\mathcal F}=197.18$)}

\addplot+[mark=none, very thick, dotted, domain=5:100, samples=250]
    {\Blog*sqrt(ln(x)/x)};
\addlegendentry{Logistic ($B_{\mathcal F}=189.77$)}

\addplot+[mark=none, very thick, dashdotted, domain=5:100, samples=250]
    {\Bmm*sqrt(ln(x)/x)};
\addlegendentry{Michaelis--Menten ($B_{\mathcal F}=167.14$)}

% Reference line at T=30
\addplot+[black, thick, densely dashed] coordinates {(30,0) (30,120)};
\node[anchor=west, font=\small] at (axis cs:30,86) {$T=30$};

\end{axis}
\end{tikzpicture}

\caption{\chg{\textbf{Illustration of model-dependent upper bounds on sequential Rademacher complexity.}}
We plot $B_{\mathcal F}\sqrt{\log(t)/t}$ (up to a universal constant common to all models) for the four yield-response families used in the simulations, where $B_{\mathcal F}=\sup_{x\in[0,250]}|f(x; \theta_{\text{true}})|$ is computed from the parameter settings in Section~5. The bound decreases with the number of rounds $t$; differences across model families appear through the constant $B_{\mathcal F}$. The dashed vertical line marks the horizon $T=30$ used in the well-specified experiments.}
\label{fig:seq_rad_complexity}
\end{figure}

\begin{corollary}[Sample Complexity for online learning of Bounded Non-linear Reward classes]
Using the bound in Theorem \ref{thm: SeqRademacherForBoundedFunctions} in Theorem \ref{thm:dong}, an $\epsilon$-optimal solution (arm) can be identified with sample complexity (number of time steps), $\widetilde{O}(1/\delta^8)$ for the \texttt{ViOlin} algorithm of \citep{dong2021provable}, independent of the action (arm) space dimension.
\end{corollary}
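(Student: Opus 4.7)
The plan is to obtain the corollary as a mechanical composition of Theorem~\ref{thm: SeqRademacherForBoundedFunctions} and Theorem~\ref{thm:dong}. I would proceed in three short steps, with the only real care needed in matching the normalization of sequential Rademacher complexity between the two statements.

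First, I would verify that each of the four nonlinear reward families (Mitscherlich, Michaelis-Menten, quadratic plateau, logistic dose-response) lies in the function class $[-B,B]^{\mathcal{X}}$ for some finite $B$. Under the agronomically natural assumption that the input domain $\mathcal{X}$ and the admissible parameter set $\Theta$ are both compact, each reward function is continuous in $(x,\theta)$ on a compact product, so the uniform bound $B := \sup_{x \in \mathcal{X}, \theta \in \Theta} |f(x;\theta)|$ is finite. This places the reward class $\mathcal{F} = \{f(\cdot;\theta) : \theta \in \Theta\}$ squarely within the hypothesis of Theorem~\ref{thm: SeqRademacherForBoundedFunctions}, delivering the normalized bound $\mathfrak{R}_T^{\mathrm{seq}}(\mathcal{F}) \leq CB\sqrt{\log T / T}$.

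Second, I would bridge the normalization used in the paper's Definition with the form demanded by Theorem~\ref{thm:dong}. Multiplying the bound through by $T$ yields $T \cdot \mathfrak{R}_T^{\mathrm{seq}}(\mathcal{F}) \leq CB\sqrt{T \log T} = \sqrt{C^2 B^2 \cdot T \cdot \log T}$, which is exactly the form $\sqrt{R(\Theta)\,T\,\mathrm{polylog}(T)}$ with complexity parameter $R(\Theta) = C^2 B^2$, a constant independent of $T$ and of the action-space dimension. A brief side step uses a Talagrand-style contraction argument for sequential Rademacher complexity to pass from the reward class $\mathcal{F}$ to the induced loss class $\mathcal{L}$; this only picks up a constant Lipschitz factor, since the squared-error term and the gradient/Hessian-matching terms in \texttt{ViOlin}'s objective are Lipschitz in the prediction whenever the predictions, observations, and derivative estimates lie in bounded sets.

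Third, plugging $R(\Theta) = O(1)$ into Theorem~\ref{thm:dong} immediately yields sample complexity $\widetilde{O}(R(\Theta)\,\epsilon^{-8}) = \widetilde{O}(\epsilon^{-8})$ for \texttt{ViOlin} to identify an $\epsilon$-approximate local maximum, with no dependence on the action-space dimension. The main obstacle, such as it is, lies entirely in the second step, namely the normalization and contraction bridge between the reward-class bound of Theorem~\ref{thm: SeqRademacherForBoundedFunctions} and the loss-class hypothesis of Theorem~\ref{thm:dong}; the first and third steps amount to bookkeeping. A secondary point worth flagging is that the constants hidden inside $\widetilde{O}(\cdot)$ depend on $B$ and on the diameter of $\Theta$, which could be made explicit in a more refined analysis but are immaterial for the dimension-independence claim that is the point of the corollary.
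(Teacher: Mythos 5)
Your proposal is correct and follows essentially the same route as the paper, which treats the corollary as an immediate composition of Theorem~\ref{thm: SeqRademacherForBoundedFunctions} (boundedness of the mechanistic reward classes on compact input/parameter sets) with Theorem~\ref{thm:dong}, yielding $R(\Theta)=O(B^2)$ and hence $\widetilde{O}(\epsilon^{-8})$ samples. If anything, your second step is more careful than the paper's implicit argument: the paper does not explicitly reconcile the normalized complexity $\mathfrak{R}_T^{\mathrm{seq}}(\mathcal{F})\le CB\sqrt{\log T/T}$ with the unnormalized hypothesis $\sqrt{R(\Theta)\,T\,\mathrm{polylog}(T)}$ of Theorem~\ref{thm:dong}, nor does it spell out the Lipschitz-contraction passage from the reward class to the induced loss class, both of which you correctly identify as the only nontrivial bookkeeping.
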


\paragraph{Interpretation.}
This shows that for commonly used nonlinear agronomic yield response models, bandit algorithms can efficiently identify near-optimal fertilizer rates with a number of field trials that scales polynomially with the desired accuracy and avoids the curse of dimensionality in the action space. Our results instantiate \cite{dong2021provable}'s general theory for these specific models, providing practical sample complexity bounds for adaptive experimentation in agriculture. In Theorem~\ref{thm:dong}, the parameter $\delta > 0$ represents the maximum allowable gap between the algorithm's selected action (e.g., a nitrogen rate recommendation) and the optimal (locally maximizing) action in terms of expected reward. That is, with high probability, the bandit algorithm identifies an action $x$ such that
\[
f^*(x^*) - f^*(x) \leq \delta,
\]
where $f^*$ is the true (unknown) reward or profit function, $x^*$ is a (local) maximizer of $f^*$, and $x$ is the action recommended by the algorithm.

% In our agricultural context, $f^*(x)$ corresponds to the mean economic profit or yield associated with a given nitrogen application rate $x$. Thus, $\epsilon$ quantifies the \emph{sub-optimality tolerance}: the largest acceptable difference in profit (or yield) between the recommended rate and the best achievable rate. For example, setting $\epsilon = \$5$ means the algorithm guarantees a recommendation whose expected profit is no more than \$5 per acre less than the (locally) optimal nitrogen rate; if $\epsilon = 1$ bu/ac, the algorithm's recommendation achieves a yield within 1 bushel per acre of the optimum.

The sample complexity required to achieve an $\epsilon$-optimal recommendation scales polynomially with $1/\delta$ (specifically, as $O(1/\delta^8)$ in Theorem~\ref{thm:dong}), so tighter tolerances (smaller $\delta$) require proportionally more field trials. In practice, $\delta$ should be set according to what constitutes an economically meaningful margin for decision-making.

\citet{dong2021provable} provide broad sample complexity and regret guarantees for nonlinear bandit problems, including highly expressive classes such as 2-layer neural networks. By contrast, our agricultural decision models are low-dimensional and mechanistic (e.g., quadratic plateau), which makes them both interpretable and practically relevant. In this structured setting, algorithms such as UCB and $\epsilon$-greedy are not only straightforward to implement but also expected to admit stronger theoretical guarantees than the general results in \citet{dong2021provable}. We adopt their bounds as a reference baseline, while highlighting the opportunity for future work to establish sharper, model-specific results. This positions our empirical study on firm theoretical footing while opening avenues for more refined analysis.

\begin{remark}
 The proof techniques used here such as bounding regret via sequential Rademacher complexity and the resulting sample-complexity corollary for \texttt{ViOlin} are direct instantiations of general tools from the bandit literature.
What is distinctive in this result is their adaptation to mechanistic nonlinear models (e.g., Mitscherlich, Michaelis-Menten), which integrates domain knowledge with statistical guarantees. This integration ensures that theoretical results are not only mathematically sound but also directly interpretable and actionable in agronomic applications.
\end{remark}

\paragraph{Under Model Misspecification}
In many applications, including agricultural yield optimization, the true reward function may not be perfectly captured by any member of a chosen parametric model class (such as quadratic plateau or Michaelis-Menten). This is known as \emph{model misspecification}, and can have important implications for the performance and guarantees of bandit algorithms. \chg{A concrete example in agronomy is when the true yield response exhibits smooth saturation (well captured by a Mitscherlich curve), but the fitted model class is a quadratic-plateau response. In that case, the quadratic-plateau model may fit the central range reasonably well but can misrepresent the curvature near the shoulder/plateau region, leading to biased estimates of the profit-maximizing nitrogen rate.}

Most bandit algorithms rely on the assumption of a \emph{well-specified model} (or ``realizability''), that is, the existence of a function $f^* \in \mathcal{F}$ such that the mean reward is $\mu(x) = f^*(x)$ for all actions $x$~\citep{chu2011contextual, abbasi2011improved, agarwal2012contextual, foster2018practical}. However, in practice, exact realizability rarely holds, and it is important to consider the effect of model misspecification.

Recent work (see, e.g.,~\citep{greenewald2021adapting, lattimore2020learning, ghosh2017misspecified}) has studied bandits under various notions of misspecification. A common formulation is the \emph{uniform $\varepsilon$-misspecification} setting, where
\begin{equation}
    \inf_{f \in \mathcal{F}} \sup_{x \in \mathcal{X}} \left| \mu(x) - f(x) \right| \leq \chg{\varepsilon_\mathcal{F}},
    \label{eq:uniform_misspec}
\end{equation}
for some $\varepsilon_\mathcal{F} > 0$.

For linear bandits,~\citet{lattimore2020learning} show that the cumulative regret must satisfy
\[
    R_T \gtrsim d\sqrt{T} + \varepsilon_\mathcal{F} \sqrt{d}\,T,
\]
where $d$ is the model dimension and $T$ is the time horizon. The first term corresponds to the minimax regret under well-specified models, while the second is an unavoidable ``price of misspecification''.
In our setting, the models (quadratic plateau, logistic, etc.) may be misspecified relative to the true reward. As such, the regret of our bandit algorithms can be interpreted as
\[
R_T \;=\; O(\text{model-based regret}) \;+\; O(T\,\varepsilon_\mathcal{F}),
\]
where $\varepsilon_\mathcal{F}$ is the uniform misspecification level in~\eqref{eq:uniform_misspec}.
This highlights the need for robust algorithms and motivates future work on model selection or hybrid model-based/nonparametric approaches.

\section{Simulation study: Emulating agricultural decision-making} \label{sec: simulation}

% \chg{\subsection{Experimental setup and evaluation metrics}
% Building on the profit objective introduced in Section~\ref{sec:profit_objective}, we now describe the simulation design and evaluate the proposed algorithms under multiple yield-response models and price regimes.
%  Depending on the form of $f$, this maximization may admit a closed-form solution or may require numerical optimization. These generic routines can accommodate any parametric or nonparametric yield response function $f(x; \theta)$, enabling interpretable economic optimization across a broad range of agronomic scenarios. For the nonlinear yield functions studied in our simulations, closed-form expressions for $x^*$ can be derived and are summarized in Table~\ref{tab:profit_optima}. These expressions highlight how the optimal fertilizer rate depends jointly on agronomic parameters $\theta$ and economic parameters $p_y, p_x$, all of which must be estimated from data.}

 \chg{\subsection{Experimental setup and evaluation metrics}
Building on the profit objective in Section~\ref{sec:profit_objective}, we evaluate the proposed bandit algorithms in controlled simulations that mimic fertilizer-rate decisions under limited data.
In each experiment, the mean yield response is specified by a chosen model family $f(x;\theta)$ (Mitscherlich, quadratic-plateau, Michaelis--Menten, or logistic), actions are restricted to a discrete grid $\mathcal{X}$ of nitrogen rates, and economic conditions are varied through the price parameters $(p_y,p_x)$.
At any round, if the parameters $\theta$ were known, the profit-maximizing rate would be
\[
x^\star \in \arg\max_{x\in\mathcal{X}} \ \Pi(x)=\arg\max_{x\in\mathcal{X}}\{p_y f(x;\theta)-p_x x\}.
\]
In practice, $\theta$ is unknown and must be learned online from noisy outcomes, so the algorithms repeatedly fit a model and select fertilizer rates based on plug-in profit estimates and exploration bonuses.
For the nonlinear response families considered here, the continuous maximizer admits closed-form expressions (Table~\ref{tab:profit_optima}).
These formulas clarify how the economically optimal decision depends jointly on agronomic parameters $\theta$ and prices $(p_y,p_x)$, motivating regret and profit-based comparisons as data accumulate over rounds.}

We now turn to the problem of sequentially optimizing fertilizer decisions using model-based bandit algorithms that account for uncertainty and limited data. At each round, the algorithm fits a nonlinear yield response model to the observed data and selects the fertilizer rate that maximizes the estimated economic profit, optionally incorporating an uncertainty-based exploration bonus.\chg{We investigate three nonlinear model-based strategies below. To avoid confusion with the generic {reward} (yield) formulation in Section~3, Algorithms~\ref{alg:eps}--\ref{alg:ViOlin_profit} in this section are {profit-based variants} of Algorithms~\ref{alg:eps_greedy}--\ref{alg:ViOlin} (and the baselines), obtained by replacing the reward with the profit objective $\Pi(x)=p_yY(x)-p_xx$. Algorithms~\ref{alg:eps_greedy}--\ref{alg:ViOlin} provide the core reward-based pseudocode templates, while Algorithms~\ref{alg:eps}--\ref{alg:ViOlin_profit} give the corresponding simulation-ready, implementation-level versions used in Section~\ref{sec: simulation}.} 
\begin{enumerate}
    \item \emph{Model-based $\epsilon$-greedy algorithm (Algorithm~\ref{alg:eps})}: This method selects a random fertilizer rate with probability $\epsilon_t = t^{-a}$, with $a$ chosen from $\{0.5, 1, 1.5\}$, for exploration. With the remaining $(1-\epsilon_t)$ probability, it chooses the rate closest to the current profit-maximizing estimate $x^*$, computed using the fitted model and closed-form expressions where available.
    \item \emph{Model-based UCB algorithm (Algorithm~\ref{alg:ucb})}: This approach augments the estimated profit at each arm with a model-based uncertainty term, derived from the parameter covariance of the fitted nonlinear model, and selects the rate with the highest upper confidence bound. \chg{Model based nonlinear-UCB uses the delta-method uncertainty proxy \eqref{eq:uncertainty} with warm-start $n_0$.}
    \item \emph{\texttt{ViOlin}) algorithm (Algorithm~\ref{alg:ViOlin})}: A curvature-aware strategy that exploits local second-order structure in the profit function to accelerate convergence, particularly under smooth nonlinearity.
\end{enumerate}
 Although these methods differ in their exploration strategies, they all share the same structure: fitting a nonlinear model to guide adaptive decision-making under uncertainty.  Accurate initialization of nonlinear model parameters is crucial to the success of these algorithms, especially in small-sample settings.  To benchmark the performance of nonlinear model-based strategies, we compare them against two widely used baselines: LinUCB \citep{chu2011contextual} and $k$-Nearest Neighbor-UCB (kNN-UCB) \citep{reeve2018k}. 
\begin{enumerate}
    \item[4.] \emph{LinUCB algorithm  (Algorithm~\ref{alg:linucb})}: A linear bandit method that models yield as a linear function of fertilizer rate and selects actions using upper confidence bounds derived from online regression.
    \item[5.] \emph{kNN-UCB algorithm (Algorithm~\ref{alg:knnucb})}: A nonparametric method that estimates yield using the $k$ nearest neighbors of each candidate arm and quantifies uncertainty via local sample variance.
\end{enumerate}
Further descriptions of these algorithms can be found in the Appendix. 
These baselines span the spectrum from parametric simplicity to nonparametric flexibility and help highlight the advantages of domain-informed nonlinear models in sample-constrained settings. 
\begin{algorithm}[H]
\caption{Model-Based $\epsilon$-Greedy for Economic Profit Maximization}
\label{alg:eps}
\begin{algorithmic}[1]
\REQUIRE Fertilizer levels $\mathcal{X}$, time horizon $T$, prices $p_y, p_x$,  exploration rate $\epsilon_t$
\STATE Initialize dataset $\mathcal{D} \leftarrow \emptyset$
\FOR{$t = 1$ to $T$}
    \IF{$u \sim \text{Uniform}(0, 1) < \epsilon_t$ or $|\mathcal{D}| < n_0$}
        \STATE Select fertilizer level $x_t \sim \text{Uniform}(\mathcal{X})$ \hfill // Explore
    \ELSE
        \STATE Fit model $f(x; \hat{\theta})$ to data in $\mathcal{D}$
        \STATE Compute $x^* = \arg\max_{x \in \mathcal{X}}\, \left[ p_y f(x; \hat{\theta}) - p_x x \right]$
        \STATE Select $x_t \in \mathcal{X}$ closest to $x^*$
    \ENDIF
    \STATE Apply $x_t$, observe yield $Y_t$
    \STATE Compute profit: $\Pi_t = p_y \cdot Y_t - p_x \cdot x_t$
    \STATE Add $(x_t, Y_t)$ to dataset $\mathcal{D}$
\ENDFOR
\end{algorithmic}
\end{algorithm}
\begin{algorithm}[H]
\caption{Model-Based UCB for Economic Profit Maximization}
\label{alg:ucb}
\begin{algorithmic}[1]
\REQUIRE Fertilizer levels $\mathcal{X}$, horizon $T$, prices $p_y, p_x$, UCB constant $\alpha$, \chg{warm-start length $n_0$}
\STATE Initialize dataset $\mathcal{D} \leftarrow \emptyset$
\FOR{$t = 1$ to $T$}
    \IF{$|\mathcal{D}| < n_0$}
        \STATE Select $x_t \sim \mathrm{Uniform}(\mathcal{X})$ \hfill // warm-start exploration
    \ELSE
        \STATE Fit reward model $f(x; \hat{\theta})$ to data $\mathcal{D}$
        \STATE Estimate parameter covariance $\widehat{\mathrm{Cov}}(\hat{\theta})$ (e.g., from nonlinear least squares)
        \FOR{each $x \in \mathcal{X}$}
            \STATE Predict profit: $\hat{\Pi}(x) = p_y \cdot f(x; \hat{\theta}) - p_x \cdot x$
            \STATE Compute uncertainty proxy (delta-method, cf.\ \eqref{eq:uncertainty}):
            \[
                \mathrm{Unc}(x) = p_y \cdot \sqrt{ \nabla_{\theta} f(x; \hat{\theta})^\top \widehat{\mathrm{Cov}}(\hat{\theta}) \nabla_{\theta} f(x; \hat{\theta}) }
            \]
            \STATE Compute UCB score: $UCB(x) = \hat{\Pi}(x) + \alpha \cdot \mathrm{Unc}(x)$
        \ENDFOR
        \STATE Select $x_t = \arg\max_{x \in \mathcal{X}} UCB(x)$
    \ENDIF
    \STATE Apply $x_t$, observe yield $Y_t$
    \STATE Compute profit: $\Pi_t = p_y \cdot Y_t - p_x \cdot x_t$
    \STATE Update $\mathcal{D} \leftarrow \mathcal{D} \cup \{(x_t, Y_t)\}$
\ENDFOR
\end{algorithmic}
\end{algorithm}

\begin{table}[ht]
\centering
\caption{\chg{\textbf{Closed-form fertilizer rate that maximizes \emph{profit} under four yield--nitrogen response models.}
We maximize per-round profit $\Pi(x)=p_y\,Y(x)-p_x x$, where $x$ is the nitrogen rate (lb N/ac), $Y(x)$ is yield, $p_y$ is grain price (\$/bu), and $p_x$ is nitrogen price (\$/lb N). Model parameters are defined in Section~5. In the simulations, actions are chosen from a discrete grid $\mathcal{X}$ (e.g., $\{0,50,\ldots,250\}$ lb N/ac), so the implemented decision is the nearest grid value to the continuous maximizer reported here.}}
\begin{tabular}{||l|l|l||}
\hline
\toprule
Model & Yield Function $Y(x)$ & Profit-maximizing $x^*$ \\
\hline
Mitscherlich &
$A(1 - e^{-b x})$ &
$-\frac{1}{b} \ln\left(\frac{p_x}{p_y A b}\right)$ \\[1em]
\hline
Quadratic (threshold) &
$\begin{array}{l}
  a + b x + c x^2,\ x \leq x_0 \\
  a + b x_0 + c x_0^2,\ x > x_0
\end{array}$ &
$\min\left\{ x_0,\ \max\left\{ 0,\ \frac{1}{2c} \left(\frac{p_x}{p_y} - b \right) \right\} \right\}$ \\[1em]
\hline
Michaelis-Menten &
$\dfrac{a x}{b + x}$ &
$\sqrt{ \dfrac{a b p_y}{p_x} } - b$ \\[1em]
\hline
Logistic &
$\dfrac{A}{1 + e^{-B(x - C)}}$ &
$\begin{array}{l}
  u^* = \dfrac{ \gamma - 2 - \sqrt{ (\gamma - 2)^2 - 4 } }{2 }, \\
  \gamma = \dfrac{B p_y A}{p_x},\quad
  x^* = C - \dfrac{1}{B} \ln u^*
\end{array}$ \\
\bottomrule
\hline
\end{tabular}
\label{tab:profit_optima}
\end{table}

\vspace{1em}

\begin{algorithm}[H]
\caption{\chg{\texttt{ViOlin} for Economic Profit Maximization}}
\label{alg:ViOlin_profit}
\begin{algorithmic}[1]
\REQUIRE Fertilizer levels $\mathcal{X}$, horizon $T$, prices $p_y,p_x$, curvature weights $\kappa_1,\kappa_2\ge 0$, warm-start length $n_0$, learner/estimator $\mathcal{O}$
\STATE Initialize dataset $\mathcal{D}\leftarrow \emptyset$
\FOR{$t=1$ to $T$}
    \IF{$|\mathcal{D}| < n_0$}
        \STATE Select $x_t \sim \mathrm{Uniform}(\mathcal{X})$ \hfill // warm-start exploration
    \ELSE
        \STATE Fit/update $\hat{\theta}$ using $\mathcal{D}$ via $\mathcal{O}$
        \FOR{each $x\in\mathcal{X}$}
            \STATE Predict profit: $\hat{\Pi}(x)=p_y f(x;\hat{\theta})-p_x x$
            \STATE Compute profit-gradient and curvature terms:
            \[
            \widehat{g}_{\Pi}(x)=\partial_x \hat{\Pi}(x)=p_y\,\partial_x f(x;\hat{\theta})-p_x,
            \qquad
            \widehat{H}_{\Pi}(x)=\partial^2_{xx}\hat{\Pi}(x)=p_y\,\partial^2_{xx} f(x;\hat{\theta})
            \]
            \STATE Compute \texttt{ViOlin} score:
            \[
            S_t(x)=\hat{\Pi}(x)\;+\;\kappa_1\big|\widehat{g}_{\Pi}(x)\big|\;+\;\kappa_2\big|\widehat{H}_{\Pi}(x)\big|
            \]
        \ENDFOR
        \STATE Select $x_t \in \arg\max_{x\in\mathcal{X}} S_t(x)$
    \ENDIF
    \STATE Apply $x_t$, observe yield $Y_t$
    \STATE Compute profit: $\Pi_t=p_y Y_t - p_x x_t$
    \STATE Update $\mathcal{D}\leftarrow \mathcal{D}\cup\{(x_t,Y_t)\}$
\ENDFOR
\end{algorithmic}
\end{algorithm}
\noindent

In the next section, we present simulation studies under both well-specified and misspecified conditions to compare the performance of these five bandit algorithms in terms of profit and cumulative regret.

% We conducted a simulation study to evaluate the performance of bandit algorithms for economic fertilizer optimization under both well-specified and misspecified modeling scenarios.

\subsection{Well-specified setting}
\paragraph{Simulation setup.}
In the well-specified setting, data were generated according to a quadratic plateau model of the form:
\[
Y(x) = 
\begin{cases}
a + b x + c x^2, & \text{if } x \leq x_0 \\
a + b x_0 + c x_0^2, & \text{if } x > x_0
\end{cases},
\]
where $Y(x)$ denotes crop yield (bu/ac) at fertilizer rate $x$ (lb N/ac), and $a=80$, $b=1.2$, $c=-0.003$, $x_0=180$ were chosen to reflect realistic agronomic response curves. The profit at each round is computed as $\Pi(x) = p_y Y(x) - p_x x$, with grain price $p_y = \$5.00$ per bushel and fertilizer price $p_x \in \{0.3, 0.5, 0.7\}$ \$/lb N to mimic realistic prices per unit nitrogen fertilizer (such as Urea) in the Midwest US. Fertilizer rates are restricted to the discrete grid $\mathcal{X} = \{0, 50, 100, \ldots, 250\}$ lb N/ac, and yields are corrupted by i.i.d.\ Gaussian noise with standard deviation $\sigma = 0.5$.
\begin{algorithm}[H]
\caption{LinUCB for Economic Profit Maximization}
\label{alg:linucb}
\begin{algorithmic}[1]
\REQUIRE Fertilizer levels $\mathcal{X}$, time horizon $T$, prices $p_y, p_x$, UCB parameter $\alpha$
\STATE Initialize dataset $\mathcal{D} \leftarrow \emptyset$
\STATE Define feature mapping $\varphi(x) = [1,\, x]^\top$
\FOR{$t = 1$ to $T$}
    \IF{not enough data}
        \STATE Choose $x_t$ uniformly at random from $\mathcal{X}$
    \ELSE
        \STATE Fit linear model to data: $Y_i \approx \varphi(x_i)^\top \hat{\beta}$
        \STATE For each $x \in \mathcal{X}$:
            \begin{itemize}
                \item Predict yield: $\widehat{Y}(x) = \varphi(x)^\top \hat{\beta}$
                \item Compute profit: $\hat{\Pi}(x) = p_y \cdot \widehat{Y}(x) - p_x \cdot x$
                \item Estimate variance: $s^2(x) = \varphi(x)^\top (\mathbf{V}^{-1}) \varphi(x)$, where $\mathbf{V}$ is the feature covariance matrix
                \item Compute UCB score: $UCB(x) = \hat{\Pi}(x) + \alpha \cdot s(x)$
            \end{itemize}
        \STATE Select $x_t = \arg\max_{x \in \mathcal{X}} UCB(x)$
    \ENDIF
    \STATE Apply $x_t$, observe $Y_t$
    \STATE Compute profit: $\Pi_t = p_y \cdot Y_t - p_x \cdot x_t$
    \STATE Append $(x_t, Y_t)$ to $\mathcal{D}$
\ENDFOR
\end{algorithmic}
\end{algorithm}

\begin{algorithm}[H]
\caption{kNN-UCB for Economic Profit Maximization}
\label{alg:knnucb}
\begin{algorithmic}[1]
\REQUIRE Fertilizer levels $\mathcal{X}$, time horizon $T$, prices $p_y, p_x$, UCB parameter $\alpha$, no. of neighbors $k$
\STATE Initialize dataset $\mathcal{D} \leftarrow \emptyset$
\FOR{$t = 1$ to $T$}
    \IF{not enough data}
        \STATE Choose $x_t$ uniformly at random from $\mathcal{X}$
    \ELSE
        \FOR{each $x \in \mathcal{X}$}
            \STATE Find $k$ nearest previously tried fertilizer rates to $x$ in $\mathcal{D}$
            \STATE Compute average yield: $\widehat{Y}_k(x) = \frac{1}{k} \sum_{j=1}^k Y_j$
            \STATE Compute sample standard deviation: $s_k(x)$ of yields among neighbors
            \STATE Compute profit: $\hat{\Pi}_k(x) = p_y \cdot \widehat{Y}_k(x) - p_x \cdot x$
            \STATE Compute UCB score: $UCB(x) = \hat{\Pi}_k(x) + \alpha \cdot \frac{s_k(x)}{\sqrt{k}}$
        \ENDFOR
        \STATE Select $x_t = \arg\max_{x \in \mathcal{X}} UCB(x)$
    \ENDIF
    \STATE Apply $x_t$, observe $Y_t$
    \STATE Compute profit: $\Pi_t = p_y \cdot Y_t - p_x \cdot x_t$
    \STATE Append $(x_t, Y_t)$ to $\mathcal{D}$
\ENDFOR
\end{algorithmic}
\end{algorithm}
We evaluated five algorithms: model-based $\epsilon$-greedy, nonlinear UCB, \texttt{ViOlin} (curvature-matched nonlinear bandit), LinUCB (linear model-based UCB), and kNN-UCB (nonparametric UCB using $k=3$ nearest neighbors). Hyperparameters for each of the algorithms are chosen based on theoretical guidelines and grid searches over multiple runs. We choose $\epsilon_t = t^{-1.5}$ for $\epsilon$-greedy in this setting, $\alpha = 1$ in UCB, linUCB and kNN-UCB for uniformity and fairness in comparison, $\alpha_1=2.0,\alpha_2=640$ for the \texttt{ViOlin} algorithm based on guidelines in \cite{dong2021provable}. For each algorithm and parameter configuration, we performed $10$ independent simulation replicates of $T=30$ rounds. Each algorithm fits its specified model to the observed data and selects actions to maximize estimated profit. 
The regret at each round is defined as the difference between the average profit for the optimal arm (computed using knowledge of the true reward function) and the average profit realized by the arm selected by the algorithm, that is,
\[
\mathrm{Regret}_t = \mathbb{E}\left[ \Pi(x^*; \theta^*) - \Pi(x_t; \theta^*) \right],
\]
where $x^*$ denotes the arm maximizing the true profit function, $\theta^*$ the true model parameters and $x_t$ is the arm selected at round $t$ by the bandit algorithm employed to make decisions. This definition of regret translates directly into wasted profit and fertilizer input, making it a practical metric for resource-efficient agricultural decision-making.
For each run, we record cumulative regret, average profit, and select fertilizer rates. In order to quantify the uncertainty, we also plot the distributions of cumulative regret and average profit over 10 replications at round 10, 20, and 30, respectively. Note, profit, regret and all other monetary quantities are reported in \$/ac (yield in bu/ac and nitrogen in lb/ac) unless stated otherwise.
\paragraph{Results and Interpretation.}
Figure~\ref{fig:all_algs_compare} presents the average cumulative regret and the average profit per round for each algorithm. Figure \ref{fig:all_algs_compare_UQ} presents boxplots of the cumulative regret and average profit for each algorithm, evaluated at rounds 10, 20, and 30, respectively, across 10 independent replications. Several trends are immediately apparent:
\begin{itemize}
    \item \textbf{Superiority of nonlinear model-based methods:} Both nonparametric (kNN-UCB) and linear (LinUCB) methods incur substantially higher regret than the nonlinear model-based approaches. In particular, kNN-UCB and LinUCB accumulate regret rapidly and fail to converge to near-optimal recommendations within the small sample budget. LinUCB, although it does not experience a sharp initial dip in profit due to exploration, tends to settle on a suboptimal arm and consistently incurs losses relative to the optimal strategy.
    \begin{figure}[h!]
    \centering
    \includegraphics[width=0.45\linewidth]{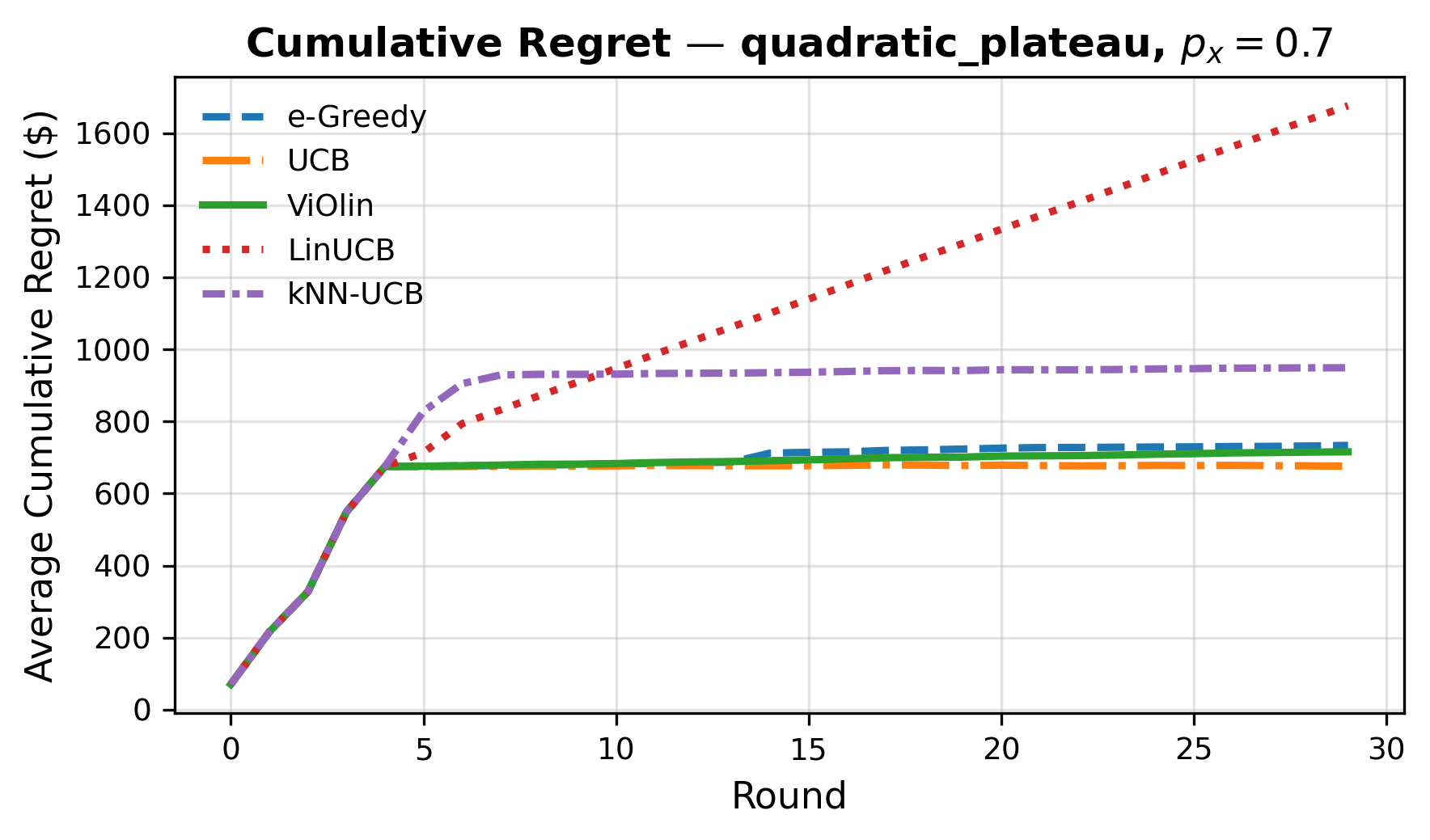}
    \includegraphics[width=0.45\linewidth]{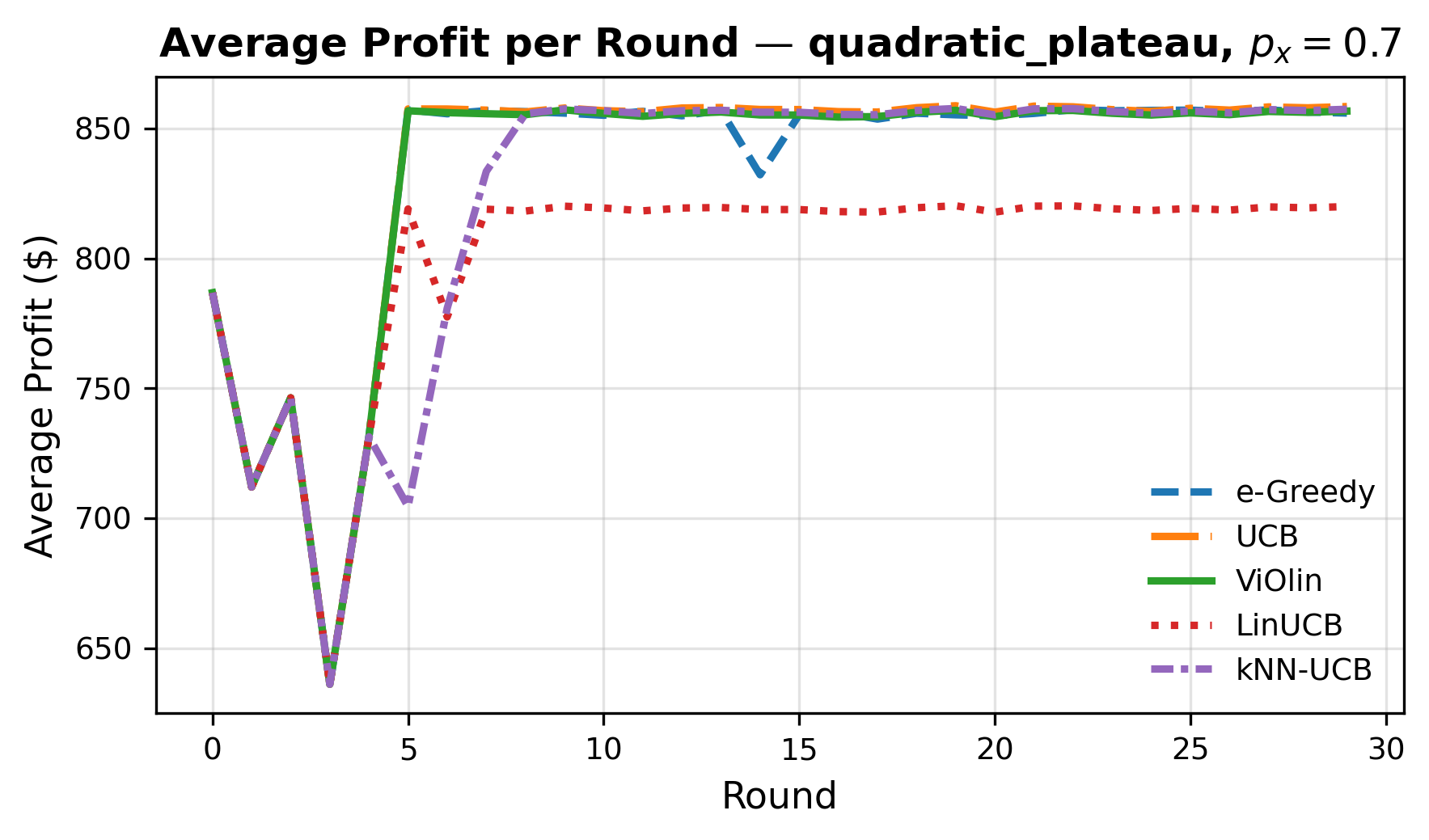}
  \caption{\chg{\textbf{Well-specified quadratic-plateau profit experiment.}
Actions lie in $\mathcal{X}=\{0,50,\ldots,250\}$ lb N/ac with $p_y=\$5$/bu, $p_x=\$0.7$/lb N, $T=30$, and $\sigma=0.5$.
\textbf{Left:} cumulative profit regret (\$/ac). \textbf{Right:} average profit (\$/ac).
Curves show means over 10 replicates for $\epsilon$-greedy, nonlinear-UCB, ViOlin, LinUCB, and kNN-UCB; ViOlin and nonlinear-UCB reduce regret fastest.}}
    \label{fig:all_algs_compare}
\end{figure}
    \item \textbf{Exploration versus exploitation trade-off:} All algorithms that actively explore---notably $\epsilon$-greedy and UCB---show a pronounced dip in profit at early rounds, reflecting the cost of exploratory actions. In contrast, the \texttt{ViOlin} algorithm, which is designed to be more exploitative (greedily maximizing predicted profit with curvature matching), avoids this initial dip and achieves high profits almost immediately. However, both $\epsilon$-greedy and UCB eventually recover and converge to policies with low cumulative regret, validating the effectiveness of their exploration in learning the optimal fertilizer rate.
   \item \textbf{Uncertainty Quantification: }  Figure \ref{fig:all_algs_compare_UQ} highlights important differences between model-based approaches (e.g., \texttt{ViOlin}, model based UCB) against linear and nonparametric benchmarks such as lin-UCB and kNN-UCB, particularly in the small-sample regime relevant to agricultural field trials.
 While kNN-UCB exhibits relatively low variability across replicates, suggesting stable short-term performance, this stability arises primarily from smoothing rather than from capturing the underlying input-response mechanism. As a result, kNN-UCB can suffer from persistent bias, leading to consistently suboptimal mean performance. This is evident in the profit distributions, i.e., the mean profit for kNN-UCB consistently lies below the lower quartile of the nonlinear model-based approaches. In other words, the algorithm produces profits that are reliably clustered, but around a lower and biased center. This suggests that while its variability is small, its systematic bias leads to consistently suboptimal performance.
   \begin{figure}[h!]
    \centering
    \includegraphics[width=0.75\linewidth]{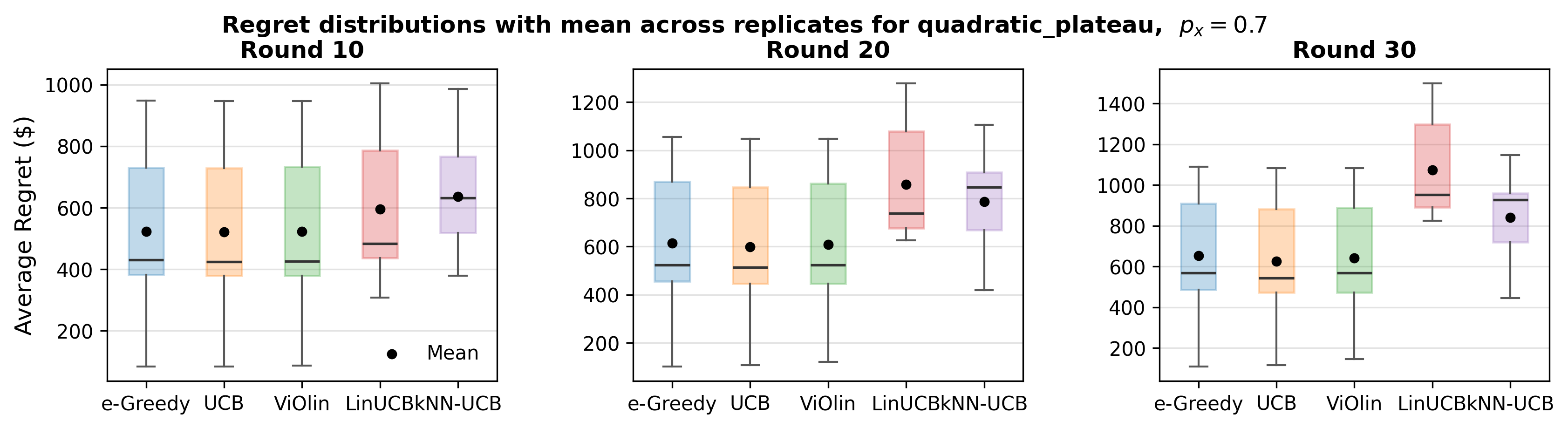}
    \includegraphics[width=0.75\linewidth]{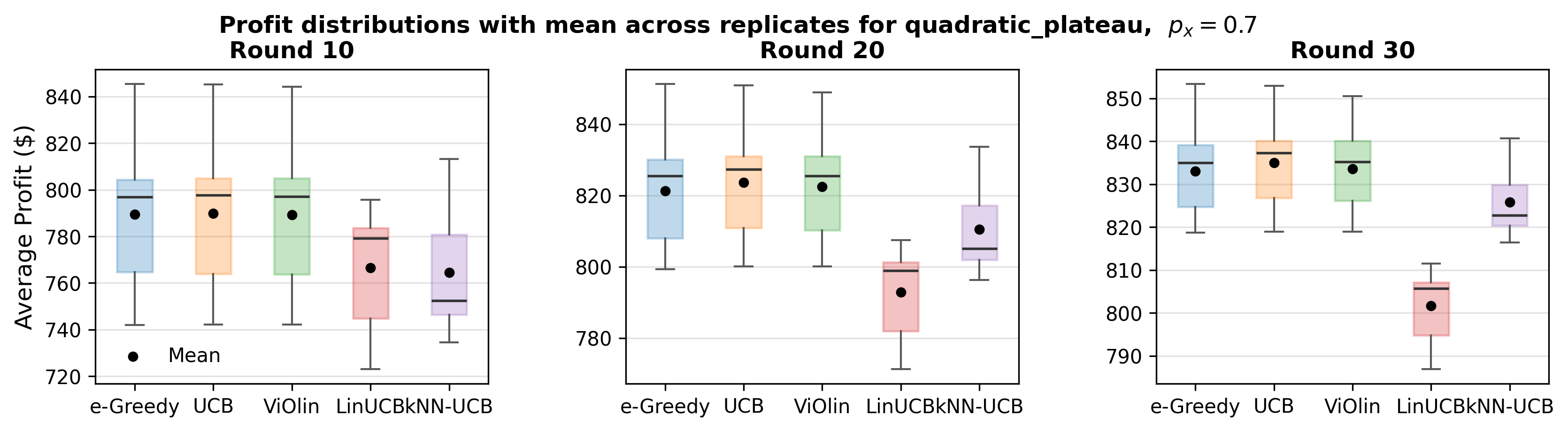}
   \caption{\chg{\textbf{Between-run variability for the well-specified quadratic-plateau experiment.}
Boxplots summarize results across 10 independent simulation replicates at rounds $t\in\{10,20,30\}$.
\textbf{Top row:} cumulative regret (in \$/ac).
\textbf{Bottom row:} average per-round profit (in \$/ac).
Each box shows the interquartile range (25th--75th percentiles) with the median; whiskers indicate the spread; the dot marks the mean.}}
    \label{fig:all_algs_compare_UQ}
\end{figure}
 
In contrast, nonlinear model-based algorithms exhibit higher run-to-run variability due to parameter estimation uncertainty, particularly in early rounds when data are scarce. However, these algorithms leverage structural assumptions that align with biological processes, enabling faster convergence to the true optimum as more data accrue. 

% Their interpretability with parameters corresponding to agronomic quantities such as saturation levels or efficiency rates makes them especially suitable for generating transparent and actionable recommendations for practitioners.

% This illustrates a central trade-off in the small-sample regime:  
% \begin{itemize}
%     \item \textbf{Nonparametric methods (kNN-UCB):} Lower short-term variability but systematically biased due to lack of structural guidance.  
%     \item \textbf{Model-based nonlinear methods:} Higher short-term variability, but better long-term efficiency and interpretability grounded in agronomic science.  
% \end{itemize}

% Thus, while nonparametric approaches can appear appealing due to their apparent stability, our results suggest that nonlinear model-based bandits are more appropriate for agricultural decision-making tasks where data are limited and interpretability is essential.

    % \item \textbf{Effect of model misspecification and sample size:} The performance gap between model-based nonlinear algorithms and their linear or nonparametric counterparts is most striking in the small-sample regime. This highlights the importance of incorporating domain knowledge via mechanistic models: when the reward function is truly nonlinear (as in the quadratic plateau setting), model-based bandit algorithms are not only sample efficient but also robust to misspecification, providing a strong practical advantage in data-scarce environments.
\end{itemize}

Similar trends were observed across other values of the fertilizer price $p_x$ and for all nonlinear response models considered. Taken together, these simulation studies demonstrate a consistent advantage of nonlinear model-based bandit algorithms over both nonparametric and linear parametric alternatives in small-sample regimes. Based on these findings, we subsequently focus our comparisons on the family of nonlinear model-based algorithms, investigating their relative strengths under a range of simulation scenarios to clarify which approaches are most advantageous in which settings.

% \begin{remark}
%     It is important to note that the choice of initial parameter values can play a crucial role in fitting nonlinear models, particularly in sample-limited and noisy environments. Poorly chosen initializations may cause the optimization procedure to converge to suboptimal local minima or slow down convergence, potentially impacting the performance of model-based algorithms such as UCB and ViOlin. In our simulations, we carefully selected initial parameter values close to the true generative values to mitigate these issues and ensure fair comparison across methods.
% \end{remark}

% Taken together, these findings underscore that careful exploitation of mechanistic, domain-informed models can dramatically improve sample efficiency and performance in real-world agricultural experimentation, especially when the number of experiments is limited.

% \vspace{-0.6cm}

In this next part of the simulation study, we focus on illustrating the implications of using the three non-linear model-based bandit algorithms for learning fertilizer rates sequentially with the goal of profit maximization.  Again in a small-sample regime, we run each of our algorithms for $T = 30$ rounds and replicate each run 10 times. Since our algorithms choose arms over time, it is important to visualize how the arm choices evolve over time for each of the algorithms. In Figure  \ref{fig:arms_bandit_WS}(a),  we plot the running proportion of fertilizer rate ranging from $\{0,50,\hdots,250\}$ lb N/ac, selected at each round by each of the algorithm over the 10 replicates for the quadratic plateau model for $p_x = \$ 0.5$/lb N to mimic realistic prices per unit nitrogen fertilizer (Urea) in the Midwest US. Note that in the beginning, the UCB and $\epsilon$-greedy ($\epsilon_t = t^{-1.5}$) explore other arms such as $x = 100, 150$ lb N/ac, but just after a few rounds of exploration, learn the better arm to be $x = 200$ lb N/ac. \texttt{ViOlin} is more greedy from the beginning itself as can be seen by the proportions of other arms (other than purple) selected in the initial rounds. In Figure \ref{fig:arms_bandit_WS}(b), in order to assess the affect of increasing fertilizer prices on decision-making for profit maximization, we also plotted the average selected fertilizer rate over time for the three algorithms. We note that the optimum arm (fertilizer rate) choice decreases as $p_x$ (price per unit) increases from \$0.3 to \$0.7/lb N. 
\begin{figure}[H] 
\centering
 {\resizebox{.97\textwidth}{!}{
\begin{tabular}{c}
{(a) \includegraphics[width=0.85\linewidth]{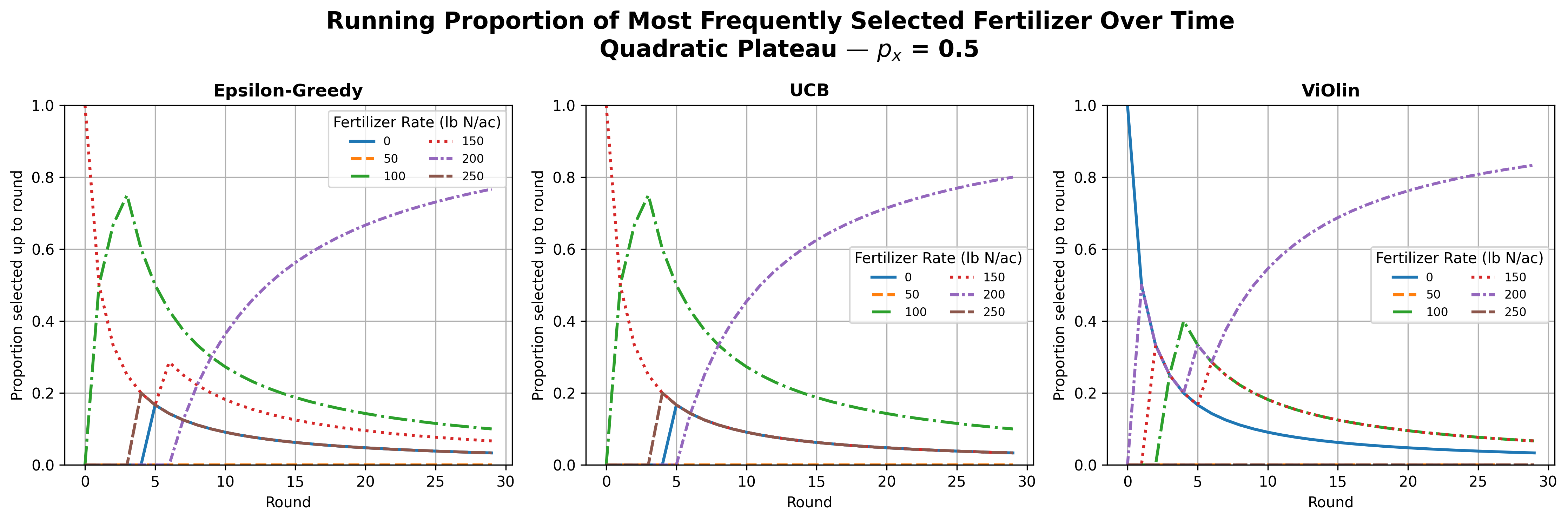}}\\
{(b)\includegraphics[width=0.85\textwidth]{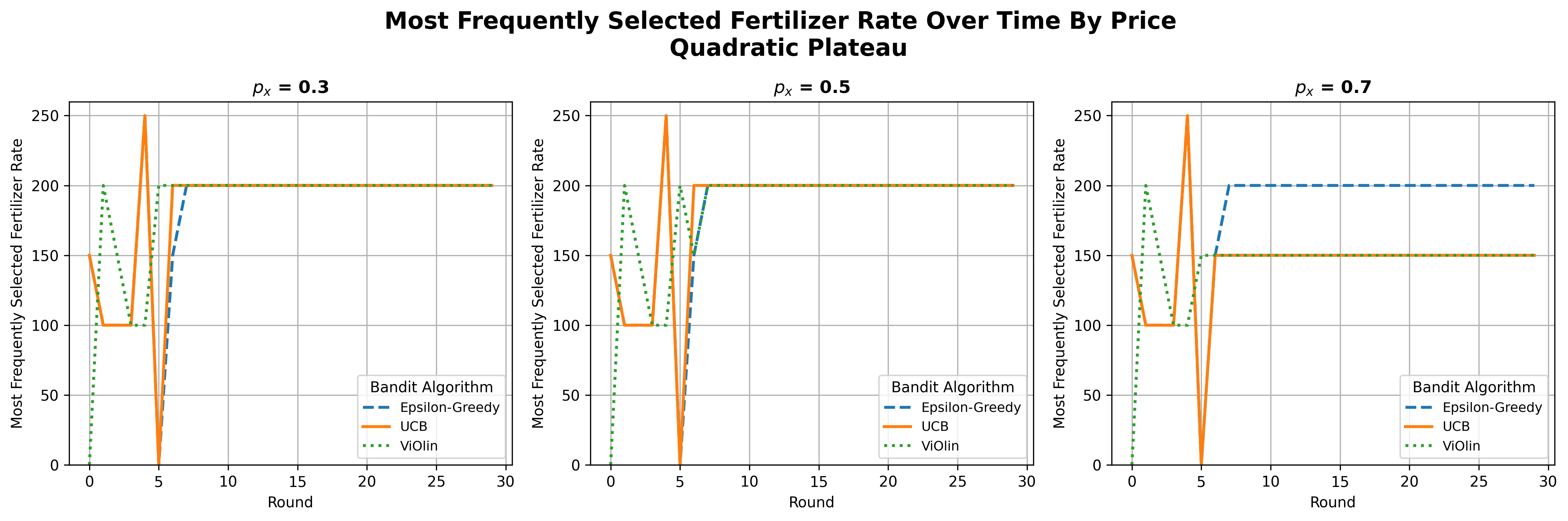}}
\end{tabular}}}
   \caption{\chg{\textbf{How the algorithms' fertilizer-rate choices evolve over time (well-specified quadratic plateau).}
Actions are chosen from $\mathcal{X}=\{0,50,\ldots,250\}$ lb N/ac.
\textbf{(a)} For $p_x=\$0.5$/lb N, the running proportion of times each nitrogen rate has been selected up to round $t$ is shown for $\epsilon$-greedy, nonlinear-UCB, and ViOlin (averaged over 10 replicates).
\textbf{(b)} The most frequently selected nitrogen rate at each round is shown for fertilizer prices $p_x\in\{0.3,0.5,0.7\}$ \$/lb N, illustrating how higher fertilizer cost shifts the learned decision toward lower nitrogen rates.}}
    \label{fig:arms_bandit_WS}
\end{figure}
Additionally, since our decisions depend significantly on how well our parameters for the non-linear model are estimated over time, in Figure \ref{fig:parameter_trajectories_WS}, we also plot the parameter trajectories over time for the quadratic plateau model with $p_x = \$0.5$/lb N. Note that all the three algorithms, as data accumulates, the parameter estimates stabilize close to the true parameter values that were used to generate the data. In terms of interpretability, one can, for example, look at the parameter estimates for $x_0$ (bottom right), which denotes the threshold value at which the quadratic model plateaus, thus reflecting the saturating point over rounds (or seasons), beyond which additional fertilizer amount provide little benefit. In our simulations, we also examined the sensitivity of parameter estimation to initialization. Because nonlinear least-squares fitting is iterative, the choice of starting values can influence convergence, particularly in small-sample regimes. To reflect realistic practice, we initialized parameters at values close to but not equal to the true generating parameters (see Section \ref{sec: additional_sims} in the Appendix for explicit values used in each model). We observed that biologically plausible initializations led to faster and more stable convergence, while poor starting values could slow parameter recovery. In practice, we recommend using domain knowledge (e.g., agronomic ranges for maximum yield, response rates, or plateau points) to set initial values when fitting nonlinear models.
We conducted similar experiments for the other non-linear models and  those results are presented in the Appendix.

\begin{figure}[h!]
    \centering
    \includegraphics[width=0.65\linewidth]{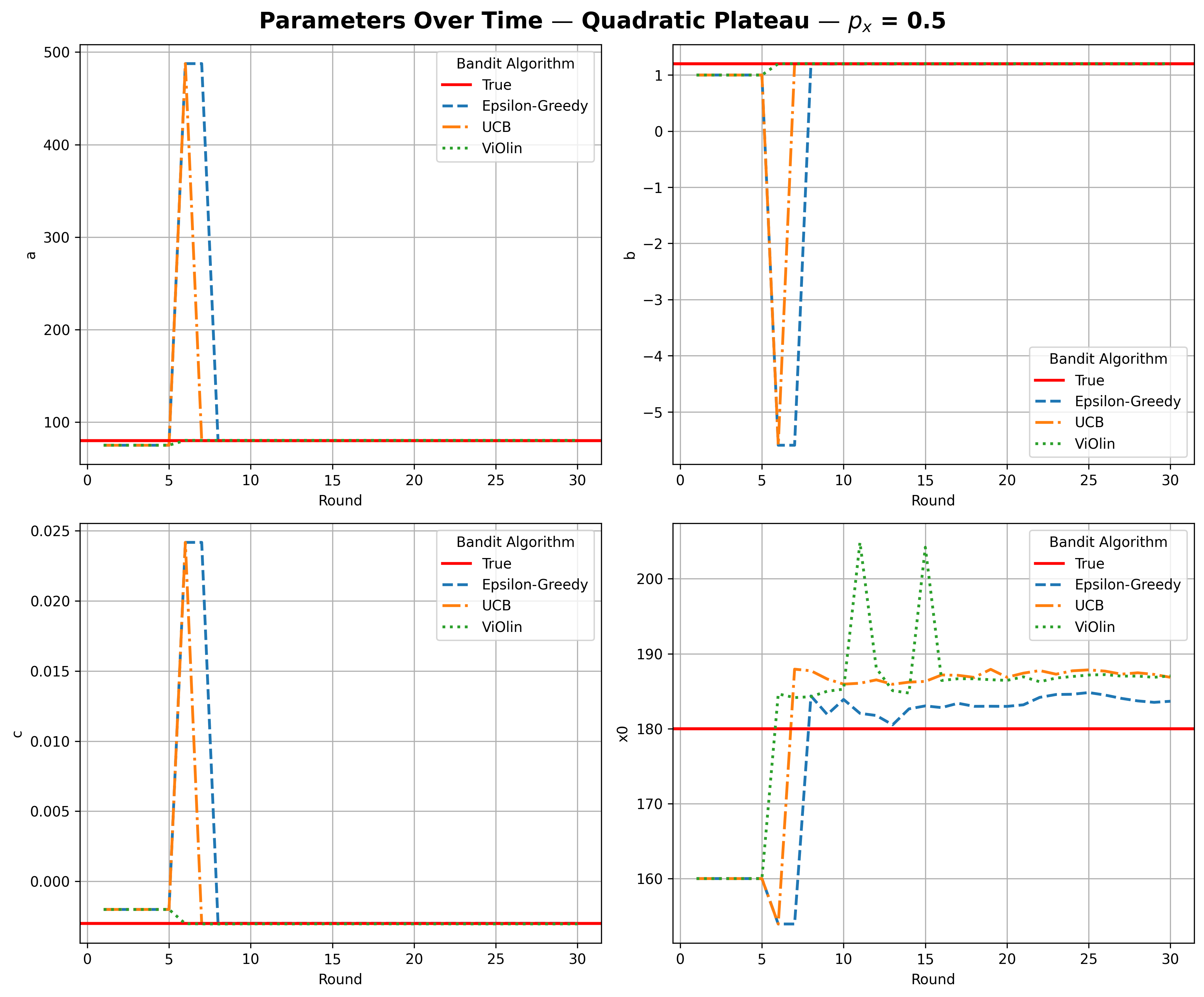}
    \caption{\chg{\textbf{Model-parameter learning over time (well-specified quadratic-plateau).}
For $p_x=\$0.5$/lb N and $T=30$, we plot estimated $(a,b,c,x_0)$ for $\epsilon$-greedy, nonlinear-UCB, and ViOlin; The  horizontal red line denotes the true value. Parameter estimates stabilize over time, indicating that the response curve can be learned from sequential data within the horizon considered.
(One representative replicate is shown; similar behavior occurs across runs. )}}
    \label{fig:parameter_trajectories_WS}
\end{figure}

\subsection{Misspecified setting}\label{sec:sim_misspec}

\paragraph{Simulation Setup.}
To mimic a realistic situation in which the decision-maker fits an \emph{approximate} but shape-compatible model, we generated data from the Mitscherlich response
\[
    Y_{\mathrm{true}}(x)=A\!\left(1-e^{-b(x-d)}\right),
    \qquad 
    A=120,\; b=0.015,\; d=80,
\]
yet \textit{fitted} a quadratic-plateau curve
\[
Y_{\mathrm{fit}}(x)=a+bx+cx^{2}\;\text{for }x\le x_{0},  
\]
truncated at $x_{0}=180$ and takes the same value $a+bx_0+cx_0^{2}$ for $x > x_0$.  All other ingredients are identical to the well-specified experiment except that we increase the time horizon to $T = 100$, specifically, price grid
$p_{x}\in\{0.3,0.5,0.7\}$ \$\,lb$^{-1}$ N, yield price
$p_{y}= \$5.00$ bu$^{-1}$, action set
$\mathcal X=\{0,50,100,\ldots,250\}$ lb N /ac,  noise $\sigma=0.5$, and  the 5 algorithms: $\epsilon$-greedy, nonlinear-UCB, \texttt{ViOlin}, LinUCB, kNN-UCB.  
10 independent replicates were run for each algorithm and price.
\begin{figure}[H]
    \centering
    \includegraphics[width=0.45\linewidth]{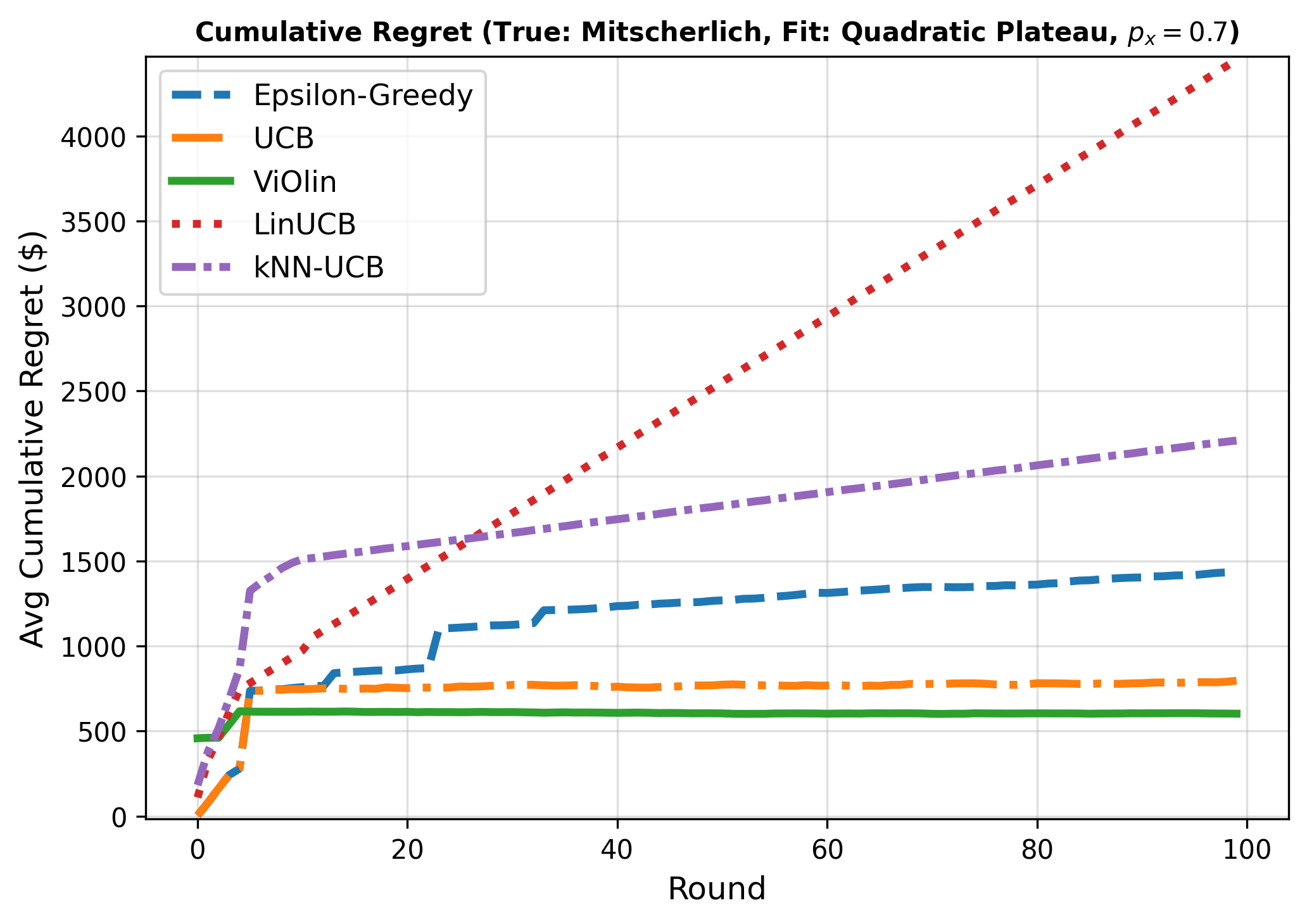}
    \includegraphics[width=0.45\linewidth]{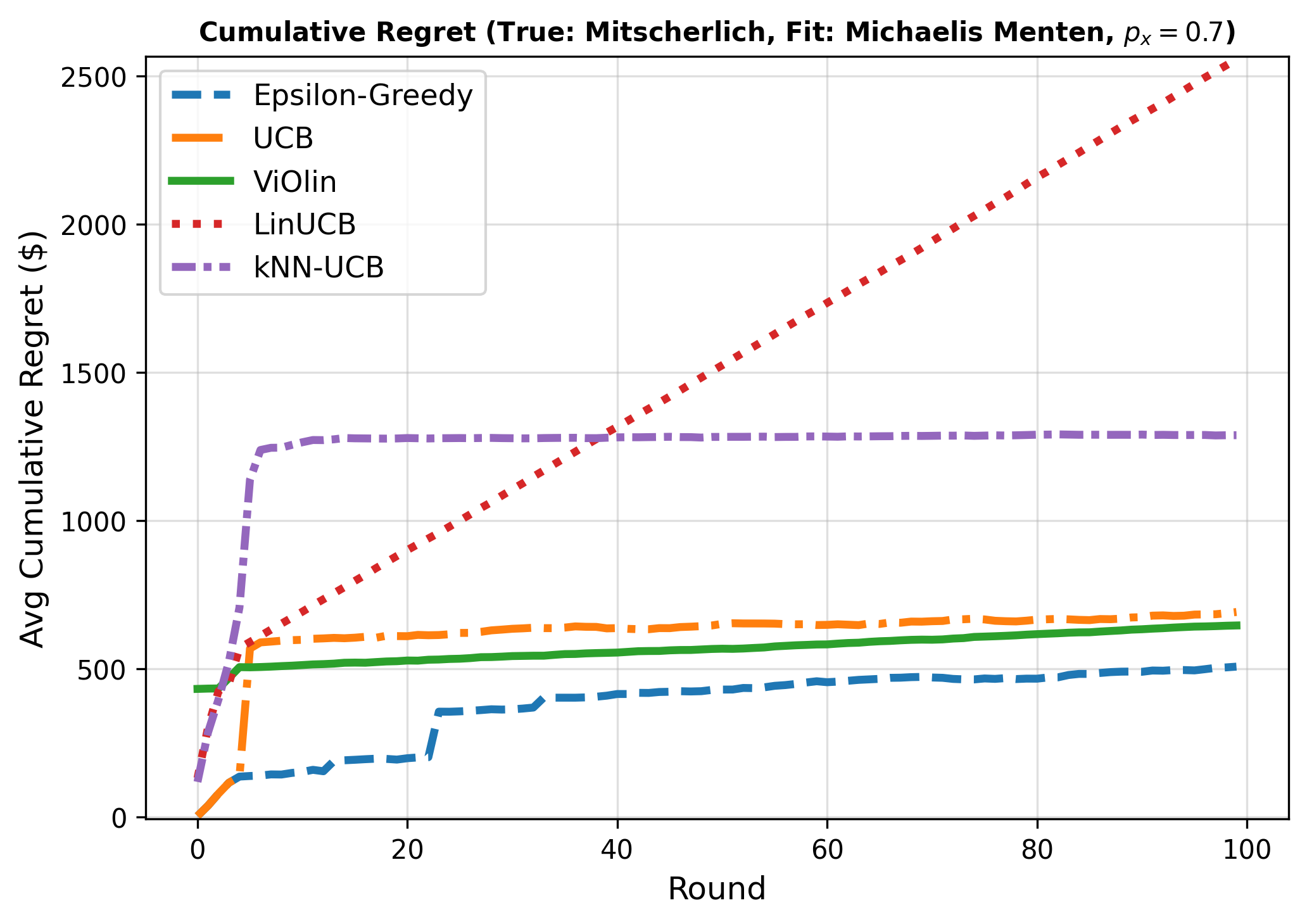}
  \caption{\chg{\textbf{Profit regret under model misspecification.}
Data are generated from a Mitscherlich (truth) yield curve, but the learner fits a different parametric family.
We use $p_y=\$5$/bu, $p_x=\$0.7$/lb N, $\mathcal{X}=\{0,50,\ldots,250\}$ lb N /ac, $\sigma=0.5$, and $T=100$.
\textbf{Left:} quadratic-plateau fit; \textbf{Right:} Michaelis--Menten fit.
Curves show mean cumulative profit regret (in \$/ac) over 10 replicates for $\epsilon$-greedy, nonlinear-UCB, ViOlin, LinUCB, and kNN-UCB. 
Under misspecification, regret increases for all methods, but nonlinear model-based policies remain competitive.}}
    \label{fig:all_algs_compare_MS}
\end{figure}
\vspace{-0.5cm}
\begin{figure}[H]
    \centering
    \includegraphics[width=0.6\linewidth]{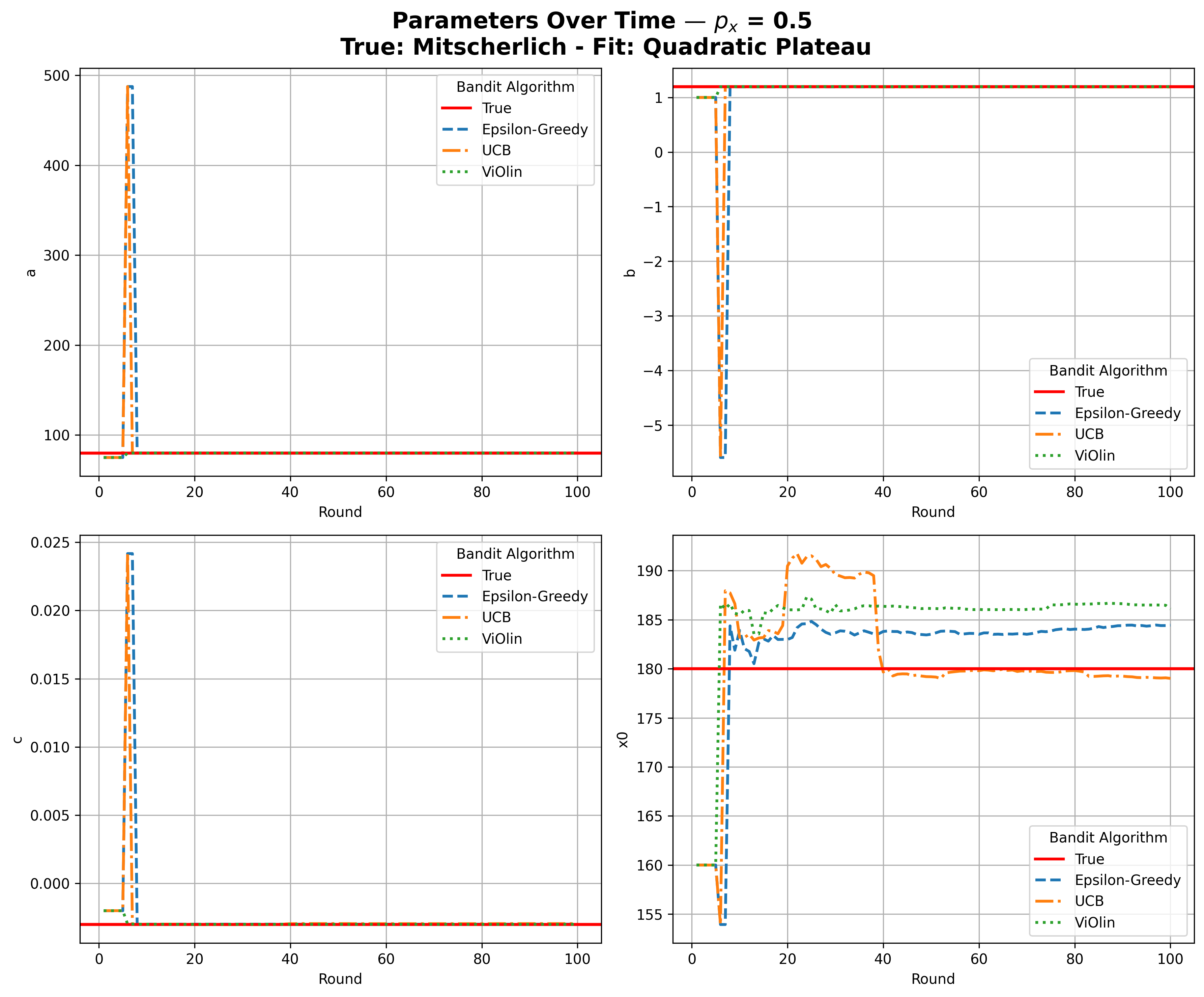}
    \caption{\chg{\textbf{Estimated parameter trajectories under misspecification (true Mitscherlich; fitted quadratic-plateau).}
Shown for $p_x=\$0.5$/lb N over $T=100$ rounds.
Panels track fitted quadratic-plateau parameters $(a,b,c,x_0)$ for $\epsilon$-greedy, nonlinear-UCB, and ViOlin.
The red reference line denotes an \emph{oracle (pseudo-true)} parameter value obtained by fitting the quadratic-plateau form to the noiseless Mitscherlich mean response on the same nitrogen grid $\mathcal{X}$ (used only as a benchmark).}}
    \label{fig:param_trajectories_MS}
\end{figure}
\begin{figure}[h!] 
\centering
 {\resizebox{.97\textwidth}{!}{
\begin{tabular}{c}
{(a) \includegraphics[width=0.95\linewidth]{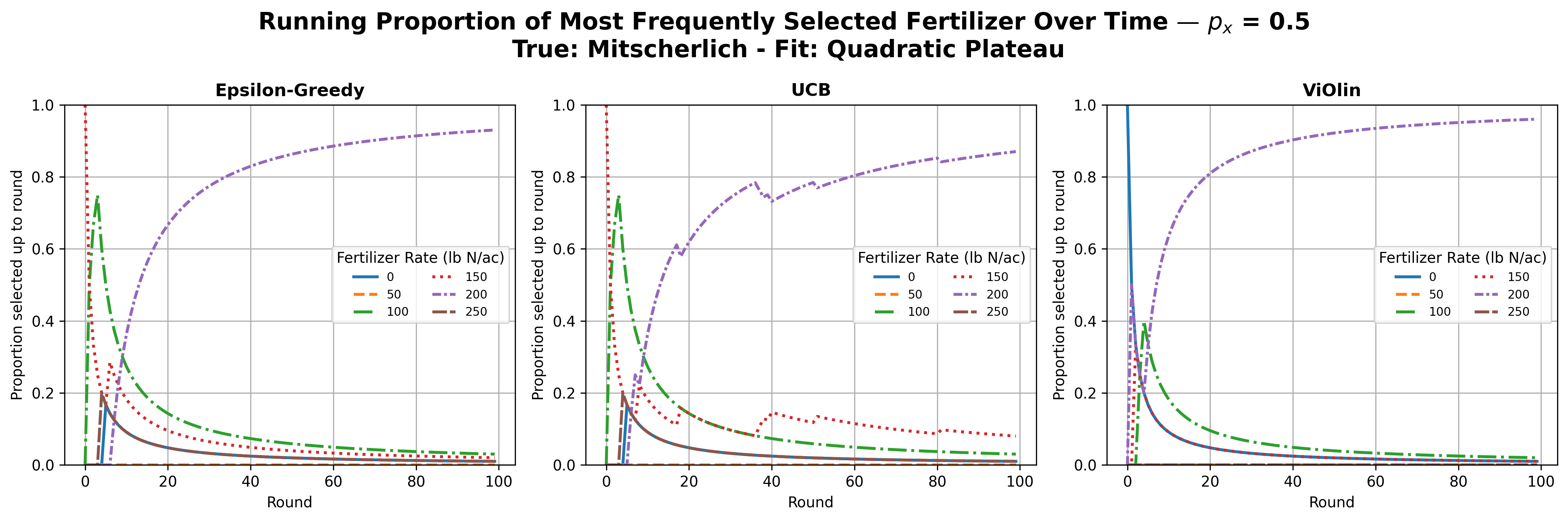}}\\
{(b)\includegraphics[width=0.95\textwidth]{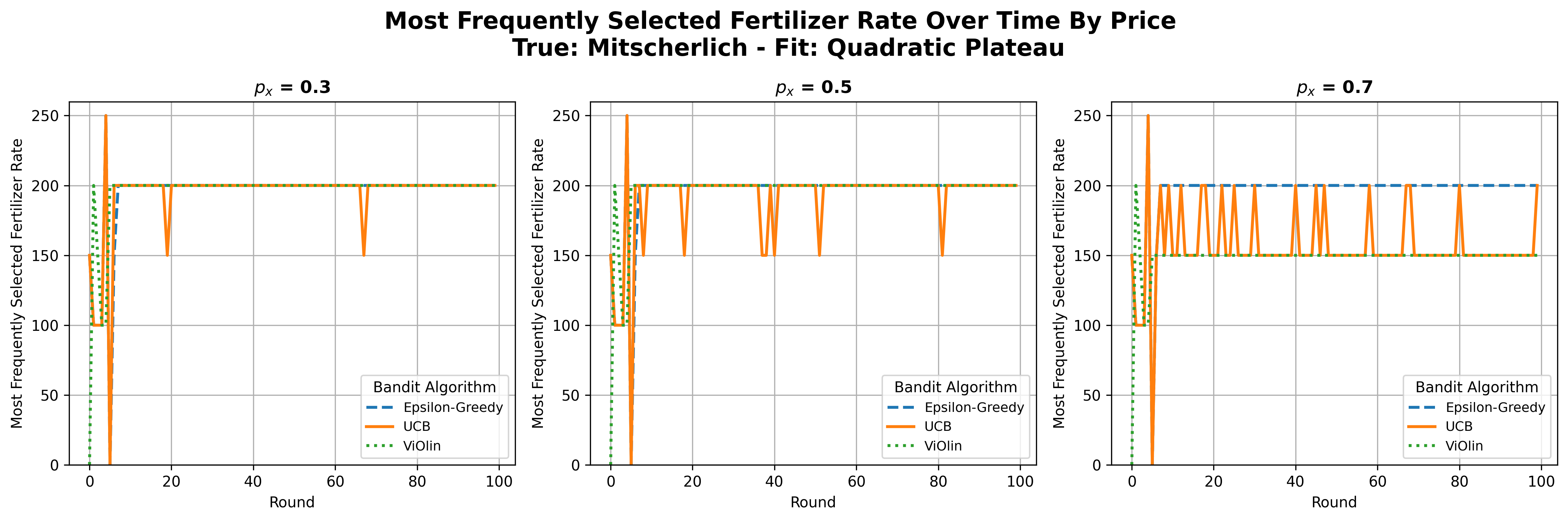}}
\end{tabular}}}
     \caption{\chg{\textbf{Fertilizer-rate choice dynamics under misspecification (true Mitscherlich; fitted quadratic-plateau).}
Actions are chosen from $\mathcal{X}=\{0,50,\ldots,250\}$ lb N/ac over $T=100$ rounds.
\textbf{(a)} For $p_x=\$0.5$/lb N, running selection proportions for each nitrogen rate are shown for $\epsilon$-greedy, nonlinear-UCB, and ViOlin (averaged over 10 replicates).
\textbf{(b)} The most frequently selected nitrogen rate is shown for $p_x\in\{0.3,0.5,0.7\}$ \$/lb N, highlighting how fertilizer price affects the learned policy when the fitted model is imperfect.}}
     \label{fig:arms_bandit_MS}
\end{figure}
Figure~\ref{fig:all_algs_compare_MS} plots the average cumulative regret when the fitted quadratic-plateau and Michaelis Menten model is misspecified with respect to the true Mitscherlich process. Relative to the well-specified case, the model-based regret curves shift upwards, reflecting the price of fitting an approximate response surface.  
Yet their slopes remain moderate, indicating that the fitted quadratic-plateau still guides the search toward profitable regions quickly.  Note, that similar to the well-specified setting, the model-based non-linear bandits perform better than the linear and non-parametric baselines. 
Although the quadratic-plateau form cannot capture the exponential approach to an asymptote exactly, its concave-then-flat shape matches the broad geometry of the Mitscherlich curve.  As a consequence, all three nonlinear model-based algorithms, $\epsilon$-greedy, UCB, and \texttt{ViOlin} continue to accumulate substantially \emph{less} regret than the linear (LinUCB) or non-parametric (kNN-UCB) baselines. These results highlight the resilience of shape-compatible models under misspecification and point to rich opportunities for deeper theoretical and empirical study.  Although \texttt{ViOlin}'s greedier, curvature-penalized score again avoids the large initial dip seen in UCB and $\epsilon$-greedy, note that UCB and $\epsilon$-greedy seem to perform comparably, especially in the Michaelis Menten fit, perhaps suggesting that exploration might help in misspecified settings.  Other details can be found in the Appendix. 

Similar to the well-specified setting, we also plot the running proportion of arms over time and the most frequently selected arm trajectories in Figure~\ref{fig:arms_bandit_MS} (a) and (b) respectively. Although the arm selections remain mostly similar as in the well-specified settings, we do notice that UCB tends to explore more in the misspecified setting (with the same exploration parameter $\alpha = 1$ as in the well-specified setting), which also results in better parameter estimates for UCB as is seen in Figure \ref{fig:param_trajectories_MS}.

These findings underline a pragmatic principle for on-farm experimentation: when the \emph{shape} of the assumed response is qualitatively correct, e.g.\ monotone-increasing with a plateau, model-based bandits remain sample efficient even if the functional form is wrong.  
Conversely, generic linear or purely non-parametric methods may demand far more data before converging, a luxury seldom available to smallholders.  Hence, coupling modest domain knowledge with sequential decision-making promises a robust path toward Sustainable Development Goal~2 by enabling data-scarce producers to improve input efficiency without costly large-scale trials.

\chg{\section{Real data analysis: Optimizing nitrogen rates for corn in the U.S.\ Midwest}
\label{sec:realdata}}
\chg{We evaluate the proposed bandit algorithms on publicly available multi-site corn nitrogen field-trial data from a public--industry partnership spanning the 2014--2016 growing seasons, collected under standardized protocols across U.S.\ Midwest institutions \citep{ransom_data}. 
For reproducibility, we construct an analysis-ready subset by retaining only the variables needed for sequential decision-making (state, site, year, block, nitrogen rate, and yield), removing incomplete records, and harmonizing formats; the processed files  are provided in the supplement and and variable descriptions in Table ~\ref{tab:realdata_columns} of the Appendix.}

\chg{\subsection{Offline replay protocol and profit objective}
The goal is to assess whether bandit-style decision rules can adaptively choose nitrogen rates to improve an economic objective. Because the trials were not run sequentially for online learning, we evaluate policies using an \emph{offline replay} procedure: we treat each \emph{round} as a grouped set of plots sharing the same environment and management conditions, and within that round the available actions are the nitrogen rates observed in that group. These rounds should be interpreted as \emph{decision instances} rather than a true temporal sequence.

\smallskip
\noindent\textbf{Two complementary regimes.}
We report results in two settings that mirror data availability in practice:
(i) a {data-limited case study} at a single site, and (ii) a {pooled multi-site} evaluation that aggregates across similar environments.
Specifically:
\begin{enumerate}
    \item \textbf{Single-site case study (short horizon):} Urbana, IL (2014--2016), with rounds defined as \texttt{(Year, Block)}. With blocks $\in\{1,2,3,4\}$ and years $\in\{2014,2015,2016\}$, this yields $T=12$ rounds. 
In other words, we fix \texttt{State} = IL and \texttt{Site} = Urbana, and treat \texttt{Block} $\in \{1,2,3,4\}$ as geographically similar sub-units within the site observed over \texttt{Year} $\in \{2014,2015,2016\}$, giving $T = 3 \times 4 = 12$ rounds.
    % \item \textbf{Pooled low-productivity evaluation (longer horizon):} all observations labeled \texttt{Site\_Prod = low}, with rounds defined as \texttt{(State, Site, Year, Block)}, yielding a modestly larger horizon (e.g., $T\approx 56$ in our filtered sample- details of locations can be found in the Appendix (Table ~\ref{tab:lowprod_state_summary})).This restriction reduces cross-round heterogeneity so that a single non-contextual policy is meaningful; otherwise the optimal rate is strongly environment-dependent and would require covariates.
    \item \textbf{Pooled low-productivity evaluation (longer horizon):}
We pool all observations labeled \texttt{Site\_Prod = low}, where \texttt{Site\_Prod} is the dataset's within-state productivity label indicating that a site-year falls into the lower-yielding group relative to other sites in the same state (based on the study's baseline/expected-yield information).
Rounds are defined as \texttt{(State, Site, Year, Block)}, yielding a modestly larger horizon (e.g., $T\approx 56$ in our filtered sample; see Table~\ref{tab:lowprod_state_summary} in the Appendix for locations and counts).
Restricting to the low-productivity subset reduces cross-round heterogeneity so that a single \emph{non-contextual} policy is meaningful; otherwise, the profit-maximizing nitrogen rate can vary substantially across environments and would naturally call for a contextual (covariate-dependent) model.
\end{enumerate}

\smallskip
\noindent\textbf{Profit objective.}
In both regimes we optimize
$
\Pi(x) \;=\; p_y\,Y(x) \;-\; p_x\,x,
$
where $Y(x)$ is yield (bu/ac) at nitrogen rate $x$ (lb N/ac), $p_y$ is the corn price (\$/bu), and $p_x$ is the nitrogen cost (\$/lb N). 
We use year-specific corn prices from publicly available summary data \citep{USDA_NASS_CropValues_2016_AnnualSummary} and compute $p_x$ from December Midwest urea prices using urea's 46\% N analysis \citep{Yara_Urea_46_0_0_Label}; the resulting year-by-year values are summarized in Table~\ref{tab:realdata_prices_urea} in the Appendix.

% \smallskip
\noindent\textbf{Bandit feedback, oracle benchmark, and regret.}
Within each round, multiple nitrogen-rate treatments are observed (with replicated plots), but the policy selects a {single} rate $x_t$ and the replay reveals only the corresponding realized outcome (mean over replicates at $x_t$, i.e., {mean-reward} mode), thereby emulating bandit feedback.
The underlying experiment that generated the original data uses a randomized complete block design (RCBD), so outcomes for all treatments are available within each round for evaluation \citep{ransom_data}. 
We define the within-round oracle action (best mean profit among the observed rates in that round)
\[
x_t^\star \in \arg\max_{x \in \mathcal{X}_t}\ \overline{\Pi}_t(x),
\]
where $\mathcal{X}_t$ is the set of observed nitrogen rates in round $t$ and $\overline{\Pi}_t(x)$ is mean profit across replicates at rate $x$.
We then compute per-round regret as $r_t=\overline{\Pi}_t(x_t^\star)-\Pi_t(x_t)$ (unit in \$/ac) and cumulative regret $R_t=\sum_{s=1}^t r_s$ (units in \$/ac), reporting both $R_t$ and $R_t/t$.

To reduce sensitivity to a single ordering, we repeat the offline replay over 100 randomized orderings; for the pooled analysis we sort rounds by year and shuffle within year, then average curves across replays. }

\chg{\subsection{Results}
Figure~\ref{fig:offline_realdata_profit_urea_quadplateau}(a)-(d) summarize offline replay performance in terms of \emph{cumulative regret} $R_t=\sum_{s=1}^t r_s$ and \emph{average regret} $R_t/t$, where $r_t=\overline{\Pi}_t(x_t^\star)-\Pi_t(x_t)$ is the within-round profit gap to the oracle benchmark as defined above. 
We report mean trajectories over 100 replay replications, with pointwise 95\% confidence bands.

% \smallskip
\noindent\textbf{Algorithms compared.}
In the single-site case study, we compare the three nonlinear model-based strategies with profit maximization objective, i.e., model-based $\epsilon$-greedy (Alg \ref{alg:eps}), nonlinear-UCB (Alg. \ref{alg:ucb}), and \texttt{ViOlin} (Alg. \ref{alg:ViOlin_profit}), and nonparametric k-NN UCB (Alg. \ref{alg:knnucb}) against a {random} baseline that selects a nitrogen rate uniformly at random from the rates available in each round.
For the pooled low-productivity analysis, we additionally include LinUCB (Alg. \ref{alg:linucb}) and kNN-UCB (Alg. \ref{alg:knnucb}) as parametric and nonparametric benchmarks, respectively.
In all model-based methods, the fitted reward model is the quadratic-plateau response with the profit objective $\Pi(x)=p_yY(x)-p_xx$ (Section~\ref{sec:profit_objective}), and regret is measured in {profit units} (\$/ac).

\paragraph{Urbana, IL (short horizon; $T=12$ rounds).}
In the data-limited Urbana, IL case study (rounds defined by year $\times$ block), the nonlinear model-based algorithms consistently reduce regret relative to the random baseline (Figure~\ref{fig:offline_realdata_profit_urea_quadplateau}(a)-(b)). 
Both the cumulative regret curves (left) and the corresponding average regret $R_t/t$ curves (right) show that leveraging agronomically motivated nonlinear structure can be substantially more sample-efficient when only a handful of decisions are available.

\paragraph{Low-productivity sites across states (longer horizon; $T\approx 56$ rounds).}
When pooling all low-productivity site-years across states (rounds defined by state $\times$ site $\times$ year $\times$ block), we observe a different pattern (Figure~\ref{fig:offline_realdata_profit_urea_quadplateau}(c)-(d)). 
After an initial transient, the nonparametric kNN-UCB becomes increasingly competitive and can outperform the parametric baselines, while LinUCB degrades under cross-site heterogeneity. 
This is consistent with the fact that a single global linear approximation is poorly suited to multi-environment (different locations) response variability, whereas local smoothing in action space can provide robustness once sufficient history accumulates.

% \smallskip
\noindent
 Taken together, these results mirror the main message from the simulations while highlighting a practical distinction between regimes: under short horizons, nonlinear model-based bandits provide clear benefits over uninformed baselines, whereas under longer horizons with heterogeneous pooled environments, flexible nonparametric benchmarks can become competitive. Extending to mixed productivity regimes is natural future work using contextual bandits with site and soil covariates.}
\begin{figure}[t]
\centering

\begin{subfigure}[t]{0.48\linewidth}
\centering
\includegraphics[width=\linewidth]{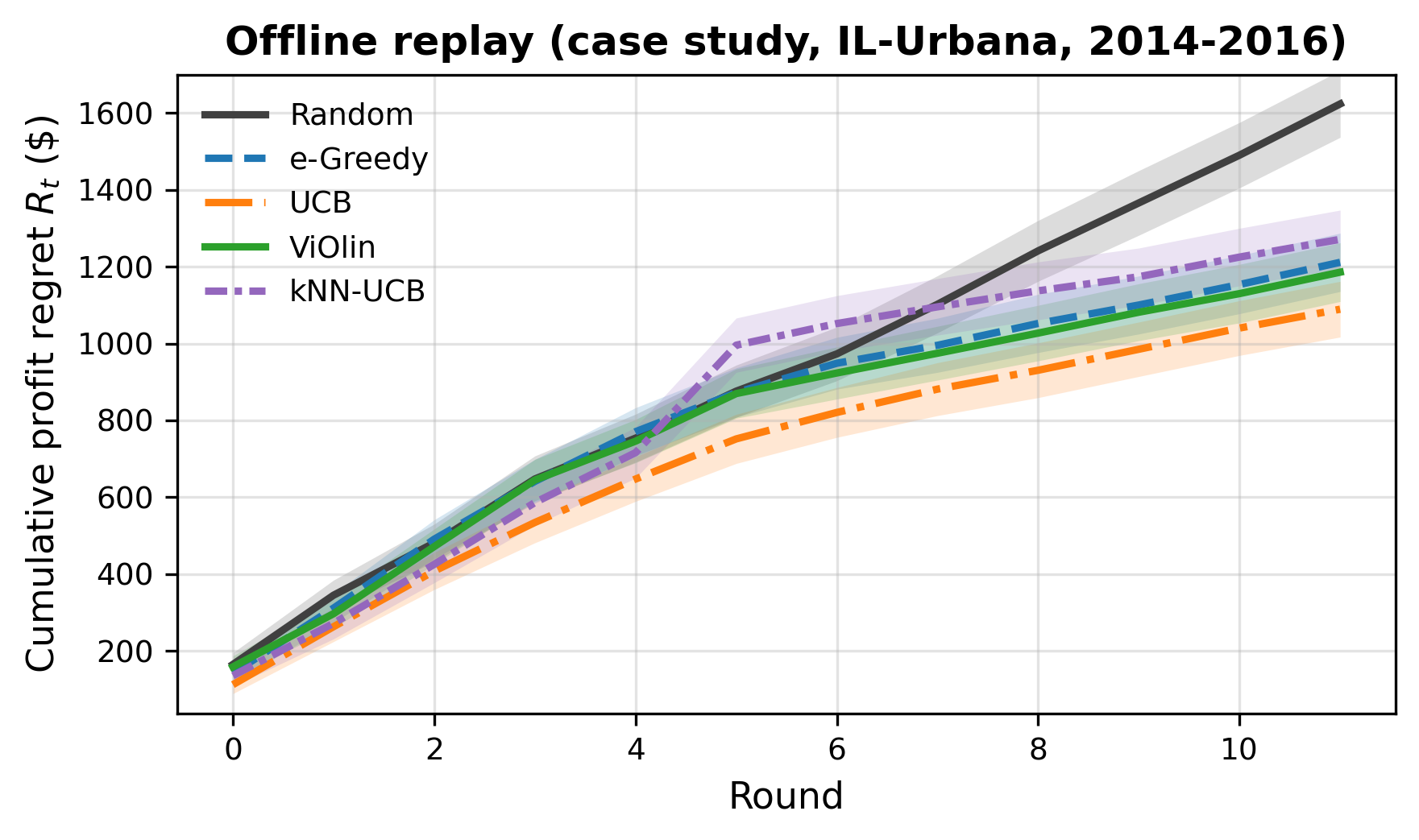}
\caption{Urbana, IL (2014--2016): cumulative regret $R_t$}
\label{fig:realdata_urbana_Rt}
\end{subfigure}\hfill
\begin{subfigure}[t]{0.48\linewidth}
\centering
\includegraphics[width=\linewidth]{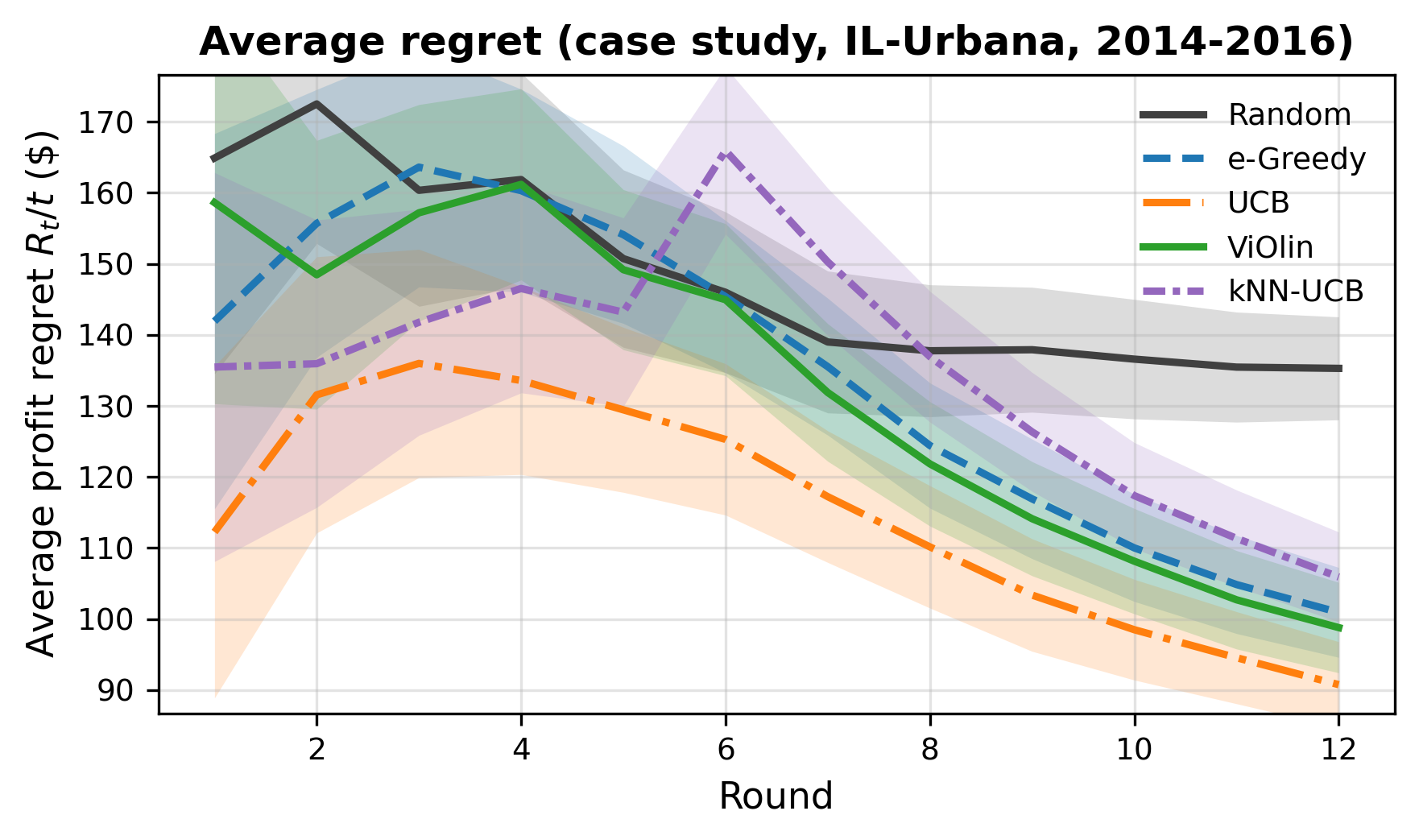}
\caption{Urbana, IL (2014--2016): average regret $R_t/t$}
\label{fig:realdata_urbana_Rt_over_t}
\end{subfigure}

\vspace{0.4em}

\begin{subfigure}[t]{0.48\linewidth}
\centering
\includegraphics[width=\linewidth]{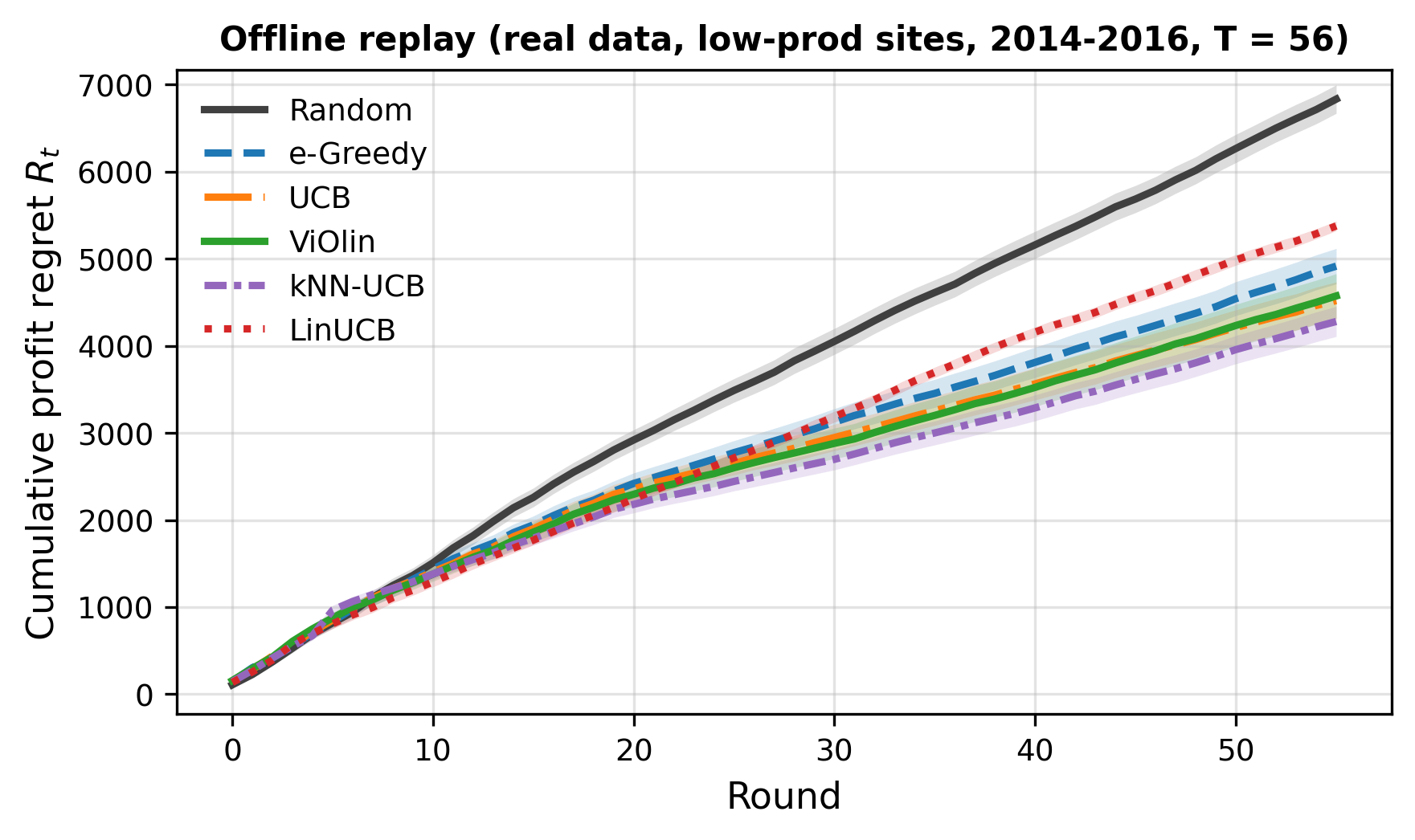}
\caption{Low-productivity sites: cumulative regret $R_t$}
\label{fig:realdata_lowprod_Rt}
\end{subfigure}\hfill
\begin{subfigure}[t]{0.48\linewidth}
\centering
\includegraphics[width=\linewidth]{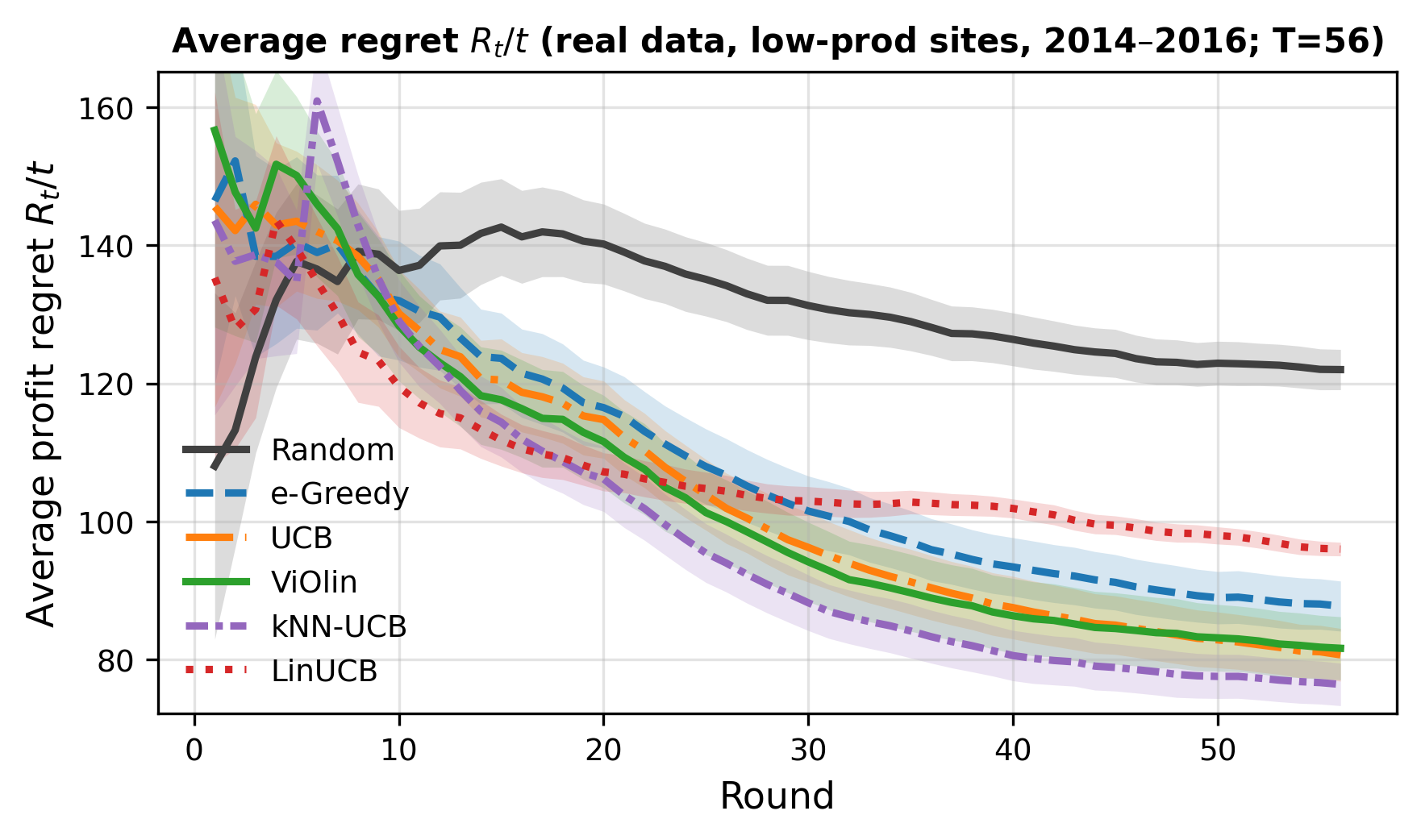}
\caption{Low-productivity sites: average regret $R_t/t$}
\label{fig:realdata_lowprod_Rt_over_t}
\end{subfigure}

\caption{\chg{\textbf{Offline replay on real multi-site corn nitrogen trials (2014--2016): profit regret under a quadratic-plateau model.}
Reward is per-acre profit $\Pi(x)=p_y Y(x)-p_x x$, where $p_y$ is the corn price (USDA--NASS Crop Values) and $p_x$ is the nitrogen cost as in Table \ref{tab:realdata_prices_urea}. Panels (a)--(b) show the data-limited Urbana, IL case study with rounds defined by (Year, Block) ($T=12$). Panels (c)--(d) show the pooled low-productivity evaluation with rounds defined by (State, Site, Year, Block) (longer horizon). Curves are means over replay replications; shaded bands are pointwise 95\% confidence intervals.}}
\label{fig:offline_realdata_profit_urea_quadplateau}
\end{figure}

\chg{\section{Conclusion and discussion}\label{sec:conclusion}
We developed and evaluated nonlinear, model-based bandit algorithms for sequential fertilizer decision-making using agronomy-standard mechanistic yield--nitrogen response families (Mitscherlich, quadratic-plateau, Michaelis--Menten, and logistic). By framing nitrogen-rate selection as an online learning problem, these models become interpretable decision rules that update as data accrue. Our primary objective is economic: we optimize profit (revenue minus fertilizer cost) rather than yield alone, so recommendations directly reflect realistic input-cost trade-offs. To our knowledge, our study provides one of the first systematic comparisons in the data-limited regime demonstrating when agronomy-standard nonlinear response modeling embedded in bandit decision rules yields clear advantages over linear and model-free baselines for profit-oriented nitrogen recommendations made sequentially across seasons.

In simulations, nonlinear model-based bandits achieved substantially lower profit regret than linear and nonparametric baselines in well-specified settings, highlighting the value of incorporating domain structure in small-sample regimes. Under misspecification, performance degrades as expected, but mechanistic models remain competitive when they provide a reasonable approximation and data are limited. We also included an offline replay case study on publicly available multi-site corn nitrogen trials (\citet{ransom_data}), which corroborates the practical message: in short horizons, nonlinear model-based strategies can be more sample-efficient than uninformed baselines, while in longer pooled (multiple locations) regimes, nonparametric methods can become competitive.

Several directions merit further work: (i) principled model selection/averaging within the bandit loop to improve robustness to misspecification; (ii) extensions to spatially varying or multi-dimensional decisions (e.g., variable-rate application); (iii) contextual and nonstationary formulations that incorporate soil/weather covariates and season-to-season shifts; and (iv) tighter regret guarantees under structured misspecification and extensions to delayed or partially observed outcomes.}

\section*{Reproducibility statement}
All simulations were implemented in \texttt{Python 3.11} using standard scientific libraries (\texttt{NumPy}, \texttt{SciPy}, \texttt{Matplotlib}). We report the data-generating processes, parameter values, and hyperparameters in full within the paper and Appendix, including initial values used for nonlinear least-squares estimation, fertilizer and grain price settings, and noise levels. Each experiment was repeated independently across 10 replicates with fixed randomization schemes to assess variability.  The code to reproduce all simulation experiments and figures, together with the analysis-ready files used for the offline real-data replay evaluation, are provided in the Supplementary Material and will be made publicly available on GitHub upon publication.

\section*{Data availability statement}
No new data were collected for this study. The real-data analysis uses the publicly available multi-site corn nitrogen trial dataset of \citet{ransom_data}. 

\section*{Ethics statement}
This study does not involve human participants, human data, or animals, and therefore ethics approval was not required.

\section*{Funding statement}
This research received no specific grant from any funding agency in the public, commercial, or not-for-profit sectors.

\section*{Conflict of interest}
The authors declare that there are no conflicts of interest regarding the publication of this paper.
\bibliography{main}
\section*{Appendix}
\appendix
\section{Rademacher complexity for parametric nonlinear function classes} \label{sec: proof_sample_complexity}
\begin{proof}[Proof of Theorem \ref{thm: SeqRademacherForBoundedFunctions}]
We follow the approach of \cite{rakhlin2015sequential}. Let $N_1(\alpha, \mathcal{F}, T)$ denote the sequential covering number in the $\ell_1$-norm at scale $\alpha$. From equation (9) of their paper, we have:
\begin{align}
    \mathfrak{R}_T(\mathcal{F}) \leq \alpha + \sqrt{\frac{2 \log N_1(\alpha, \mathcal{F}, T)}{T}}. \label{eq: seq_Rademacher}
\end{align}

Using Corollary 1 from the same paper, the $\ell_\infty$-sequential covering number satisfies:
\[
N_\infty(\alpha, \mathcal{F}, T) \leq \left( \frac{2eT}{\alpha} \right)^{\mathrm{fat}_\alpha(\mathcal{F})},
\]
and since $N_1(\alpha, \mathcal{F}, T) \leq N_\infty(\alpha, \mathcal{F}, T)$, we obtain:
\begin{align}
   \log N_1(\alpha, \mathcal{F}, T) \leq \mathrm{fat}_\alpha(\mathcal{F}) \cdot \log \left( \frac{2eT}{\alpha} \right). \label{eq: N1_bound} 
\end{align}

For any class $\mathcal{F} \subseteq [-B_\mathcal{F}, B_\mathcal{F}]^{\mathcal{X}}$, we have from Theorem 10 of \cite{rakhlin2015online}:
\begin{align}
 \mathrm{fat}_\alpha(\mathcal{F}) \leq \left\lceil \frac{2B_\mathcal{F}}{\alpha} \right\rceil. \label{eq: fatF}   
\end{align}

Now, using the bounds in \eqref{eq: N1_bound} and \eqref{eq: fatF} in \eqref{eq: seq_Rademacher} and choosing $\alpha = \frac{B_\mathcal{F}}{\sqrt{T}}$, we get:
\[
\left\lceil \frac{2B_\mathcal{F}}{\alpha} \right\rceil = \left\lceil 2\sqrt{T} \right\rceil \leq 3\sqrt{T},
\quad
\log \left( \frac{2eT}{\alpha} \right) = \log \left( \frac{2e T^{3/2}}{B_\mathcal{F}} \right) = \mathcal{O}(\log T).
\]

Thus,
\[
\mathfrak{R}_T(\mathcal{F}) \leq \frac{B_\mathcal{F}}{\sqrt{T}} + \sqrt{ \frac{6 \sqrt{T} \cdot \log T}{T} } = \frac{B_\mathcal{F}}{\sqrt{T}} + \sqrt{ \frac{6 \log T}{\sqrt{T}} }.
\]

The dominant term is $\frac{B_\mathcal{F} \sqrt{\log T}}{\sqrt{T}}$, so we conclude:
\[
\mathfrak{R}_T(\mathcal{F}) \leq C \cdot B_\mathcal{F} \cdot \sqrt{ \frac{\log T}{T} }
\]
for some universal constant $C$.
\end{proof}

\chg{\subsection{Computation of the model-dependent bound $B_{\mathcal F}$}\label{app:BF}

Figure~\ref{fig:seq_rad_complexity} visualizes the upper bound in Theorem~2 using a model-dependent magnitude bound $B_{\mathcal F}$.
For each yield-response family $\mathcal F$, we define
\[
B_{\mathcal F} := \sup_{x\in[0,250]} |f(x; \theta)|,
\]
where the domain $[0,250]$ matches the fertilizer range used in the simulation study (Section~5) and $f(\cdot; \theta)$ is evaluated under the parameter settings used in that section.

\paragraph{Example: quadratic-plateau.}
For the quadratic-plateau model
\[
Y(x)=
\begin{cases}
a+bx+cx^2, & x\le x_0,\\
a+bx_0+cx_0^2, & x>x_0,
\end{cases}
\]
with $(a,b,c,x_0)=(80,1.2,-0.003,180)$ and $c<0$, the quadratic component is concave on $[0,x_0]$ and the function is constant for $x>x_0$. Hence the maximum on $[0,250]$ occurs at $x=x_0$, and
\[
B_{\mathrm{QP}}=\sup_{x\in[0,250]}f(x)=f(x_0)
=80+1.2(180)-0.003(180)^2=198.8\ \text{bu/ac}.
\]

\paragraph{Values used in Figure~\ref{fig:seq_rad_complexity}.}
For the other model families (Mitscherlich, Logistic, and Michaelis--Menten), $f(x)$ is monotone increasing on $[0,250]$ under the parameters used in Section~5, so $B_{\mathcal F}$ is attained at $x=250$. Table~\ref{tab:BF_values} lists the resulting values.

\begin{table}[h]
\centering
\caption{Model-dependent bounds $B_{\mathcal F}=\sup_{x\in[0,250]}|f(x)|$ (bu/ac) used in Figure~\ref{fig:seq_rad_complexity}, computed under the parameter settings in Section~5.}
\label{tab:BF_values}
\begin{tabular}{l c}
\toprule
Model family $\mathcal F$ & $B_{\mathcal F}$ (bu/ac)\\
\midrule
Quadratic-plateau & 198.80 \\
Mitscherlich & 197.18 \\
Logistic & 189.77 \\
Michaelis--Menten & 167.14 \\
\bottomrule
\end{tabular}
\end{table}}

\section{Finding optimal profit maximizing arms for different non-linear models}
Recall that $\Pi(x)$ expresses the trade-off between revenue from yield and the cost of input.
To find the economically optimal fertilizer dose $x^*$, we differentiate the profit function with respect to $x$:
Below, we derive $x^*$ for each model considered in this work.

\paragraph{Mitscherlich Model: }
The Mitscherlich yield response is
\[
    Y(x) = A(1 - e^{-b x}),
\]
so the profit function is
\[
    \Pi(x) = p_y\, A(1 - e^{-b x}) - p_x x.
\]
The optimal dose $x^*$ solves
\[
    \frac{d\Pi}{dx} = p_y\, A\, b\, e^{-b x} - p_x = 0,
\]
giving
\[
    x^* = -\frac{1}{b} \ln\left( \frac{p_x}{p_y A b} \right).
\]
\textit{Note:} If $\frac{p_x}{p_y A b} \geq 1$, then $x^* = 0$.

\paragraph{Quadratic Model with Threshold}
Suppose
\[
    Y(x) = \begin{cases}
        a + b x + c x^2, & x \leq x_0 \\
        a + b x_0 + c x_0^2, & x > x_0
    \end{cases}
\]
with threshold $x_0$. Then
\[
    \Pi(x) = p_y Y(x) - p_x x.
\]
For $x \leq x_0$, set the derivative to zero:
\[
    \frac{d\Pi}{dx} = p_y (b + 2c x) - p_x = 0
    \implies x^* = \frac{1}{2c} \left( \frac{p_x}{p_y} - b \right).
\]
Thus,
\[
    x^* = \min\left\{ x_0,\, \max\left\{ 0,\, \frac{1}{2c} \left( \frac{p_x}{p_y} - b \right) \right\} \right\}.
\]
If $x > x_0$, profit decreases with increasing $x$.

\paragraph{Michaelis-Menten (MM) Model}
With
\[
    Y(x) = \frac{a x}{b + x},
\]
the profit is
\[
    \Pi(x) = p_y\, \frac{a x}{b + x} - p_x x.
\]
Setting the derivative to zero,
\[
    \frac{d\Pi}{dx} = p_y\, \frac{a b}{(b + x)^2} - p_x = 0
    \implies (b + x)^2 = \frac{a b p_y}{p_x}
\]
\[
    x^* = \sqrt{ \frac{a b p_y}{p_x} } - b.
\]
We set $x^* = 0$ if the right side is negative.

\paragraph{Logistic Model}
For the logistic response
\[
    Y(x) = \frac{A}{1 + e^{-B(x - C)}},
\]
profit is
\[
    \Pi(x) = p_y\, \frac{A}{1 + e^{-B(x - C)}} - p_x x.
\]
Let $u = e^{-B(x - C)}$, so $x = C - \frac{1}{B} \ln u$. Setting the derivative to zero,
\[
    p_y\, A\, \frac{B u}{(1 + u)^2} = p_x,
\]
which yields a quadratic in $u$:
\[
    u^2 + \left(2 - \frac{B p_y A}{p_x}\right) u + 1 = 0.
\]
Let
\[
    \gamma = \frac{B p_y A}{p_x}.
\]
The positive root is
\[
    u^* = \frac{ \gamma - 2 - \sqrt{ (\gamma - 2)^2 - 4 } }{2 },
\]
and the optimal dose is
\[
    x^* = C - \frac{1}{B} \ln u^*.
\]
If $u^* \leq 0$ or $x^* < 0$, set $x^* = 0$.

\section{Baseline Algorithm descriptions}
We compare the non-linear model based algorithms to the nonparametric and linear baselines. Specifically, we compare these with the LinUCB and $k$-Nearest Neighbor UCB algorithm. For the sake of completeness, we describe these algorithms below. 

The \textbf{LinUCB algorithm} (Algorithm~\ref{alg:linucb}) is a classic model-based bandit method that assumes a linear relationship between the reward and the action (or its features). At each round, LinUCB fits a linear regression model to the observed data (line~7) and predicts the expected yield for each fertilizer rate (line~8a). It computes an upper confidence bound (UCB) for each action by adding a model-based uncertainty term, proportional to the standard error of the prediction, to the profit estimate (lines~8b-d). The fertilizer rate with the highest UCB is selected for the next trial (line~9), balancing exploration of uncertain arms and exploitation of high-yield arms. While LinUCB is efficient and easy to implement, its performance can degrade if the true reward function is nonlinear or misspecified.

\vspace{1em}

The \textbf{kNN-UCB algorithm} (Algorithm~\ref{alg:knnucb}) is a nonparametric bandit method that does not assume a specific model for the yield response. Instead, for each candidate fertilizer rate, it estimates the expected yield by averaging the observed yields of the $k$ most similar (nearest) rates previously tried (line~8). The uncertainty in this prediction is quantified by the sample variance among these $k$ neighbors (line~9). As with LinUCB, an upper confidence bound is constructed for each candidate rate (line~11), and the rate with the highest UCB is selected (line~13). Although kNN-UCB is robust to model misspecification and can capture complex nonlinearities, it generally requires more data to achieve accurate estimates, particularly when the action space is large or sparsely explored.

\subsection{Toy Illustration of Algorithm Behavior} \label{sec:toy_illustration}

To make the exploration--exploitation logic more tangible, we provide a simple
illustration using a Mitscherlich yield function
\[
  Y(x) \;=\; 80 + 120 \, \big( 1 - e^{-0.015x} \big),
\]
with crop price $p_y = \$5$ per unit yield (bu$^{-1}$), fertilizer cost $p_x = \$0.7$ per unit (lb$^{-1}$ N),
and action grid $\{0, 50, 100, 150, 200\}$ lb N /ac. Table~\ref{tab:toy-demo} shows
four rounds of decision making under three algorithms: $\epsilon$-Greedy,
UCB, and \texttt{ViOlin}.

\begin{table}[h!]
\centering
\caption{Toy demonstration of algorithm behavior (profit in \$).}
\label{tab:toy-demo}
\begin{tabular}{c|cc|cc|cc}
\hline
Round &
\multicolumn{2}{c|}{$\varepsilon$-Greedy} &
\multicolumn{2}{c|}{UCB} &
\multicolumn{2}{c}{ViOlin} \\
& Action & Profit & Action & Profit & Action & Profit \\
\hline
1 & Explore 50  & 541 & Explore 100 & 610 & Explore 50  & 540 \\
2 & Explore 150 & 650 & Explore 150 & 648 & Exploit 150 & 649 \\
3 & Exploit 150 & 648 & Exploit 150 & 647 & Exploit 150 & 648 \\
4 & Explore 50  & 541 & Explore 200 & 645 & Exploit 150 & 647 \\
\hline
\end{tabular}
\end{table}

This toy comparison highlights the distinct decision logics:
\begin{itemize}
  \item {$\epsilon$-Greedy} alternates between random exploration and
  exploiting the currently estimated best dose.
  \item {UCB} incorporates an explicit uncertainty bonus, occasionally
  sampling higher but uncertain doses (round~4).
  \item \textbf{\texttt{ViOlin}} emphasizes curvature and stability, quickly converging
  to the plateau dose and avoiding unnecessary exploration.
\end{itemize}

Even in this simple 4-round setting, the three strategies exhibit clearly
different exploration styles. Such illustrations help convey to applied readers
that algorithm choice affects not only long-run performance, but also the
sequence of recommendations farmers may observe in practice.
\section{Additional Simulation Results}
\label{sec: additional_sims}
In this section, we present the simulation results for other non-linear models that we considered in the simulation experiment of Section \ref{sec: simulation}. First for the well-specified setting, we present the results for the Logistic and Mitscherlich model that was fit instead of quadratic plateau model. 
% We consider the following true parameters for the simulation settings for each of the three models:
% \begin{itemize}
%     \item Logistic: $A=120, B=0.05, C=125, d=70$
%     \item Mitscherlich: $A=120, b=0.015, d=80$
% \end{itemize}
For reproducibility, Table~\ref{tab:init_values} reports the true parameter values used to generate data in our simulations, alongside the initial values supplied to the nonlinear least-squares estimation routines. These initialization choices were selected to be reasonably close to the true parameters, reflecting what might be obtained from domain knowledge or prior agronomic experiments. As discussed in Section~\ref{sec: simulation}, good initialization improves convergence in small-sample regimes, though all algorithms eventually stabilize near the true values as data accumulate.

\begin{table}[h!]
\centering
\caption{True and initial parameter values used in simulation studies.}
\label{tab:init_values}
{\resizebox{.97\textwidth}{!}{\begin{tabular}{|l|l|l|}
\hline
\textbf{Model} & \textbf{True Parameters} & \textbf{Initial Values} \\
\hline
Mitscherlich & $A=120,\ b=0.015,\ d=80$ & $A=100,\ b=0.01,\ d=75$ \\
Quadratic Plateau & $a=80,\ b=1.2,\ c=-0.003,\ x_0=180$ & $a=75,\ b=1.0,\ c=-0.002,\ x_0=160$ \\
Michaelis--Menten & $a=150,\ b=100,\ d=60$ & $a=120,\ b=80,\ d=50$ \\
Logistic & $A=120,\ B=0.05,\ C=125,\ d=70$ & $A=100,\ B=0.03,\ C=100,\ d=65$ \\
\hline
\end{tabular}}}
\end{table}
\paragraph{Well-specified setting: Additional results}
The same hyperparameter choices were made for the algorithms as in the well-specified setting in Section \ref{sec: simulation}. Namely, we use $\epsilon_t = t^{-1.5}$ to encourage exploitation sooner than later, $\alpha = 1$ for UCB and $\alpha_1 = 2, \alpha_2 = 640$ for \texttt{ViOlin}. Each algorithm was run 10 times and $T = 30$ to emulate data-limited setting. 
\begin{figure}[H]
    \centering
    \includegraphics[width=0.4\linewidth]{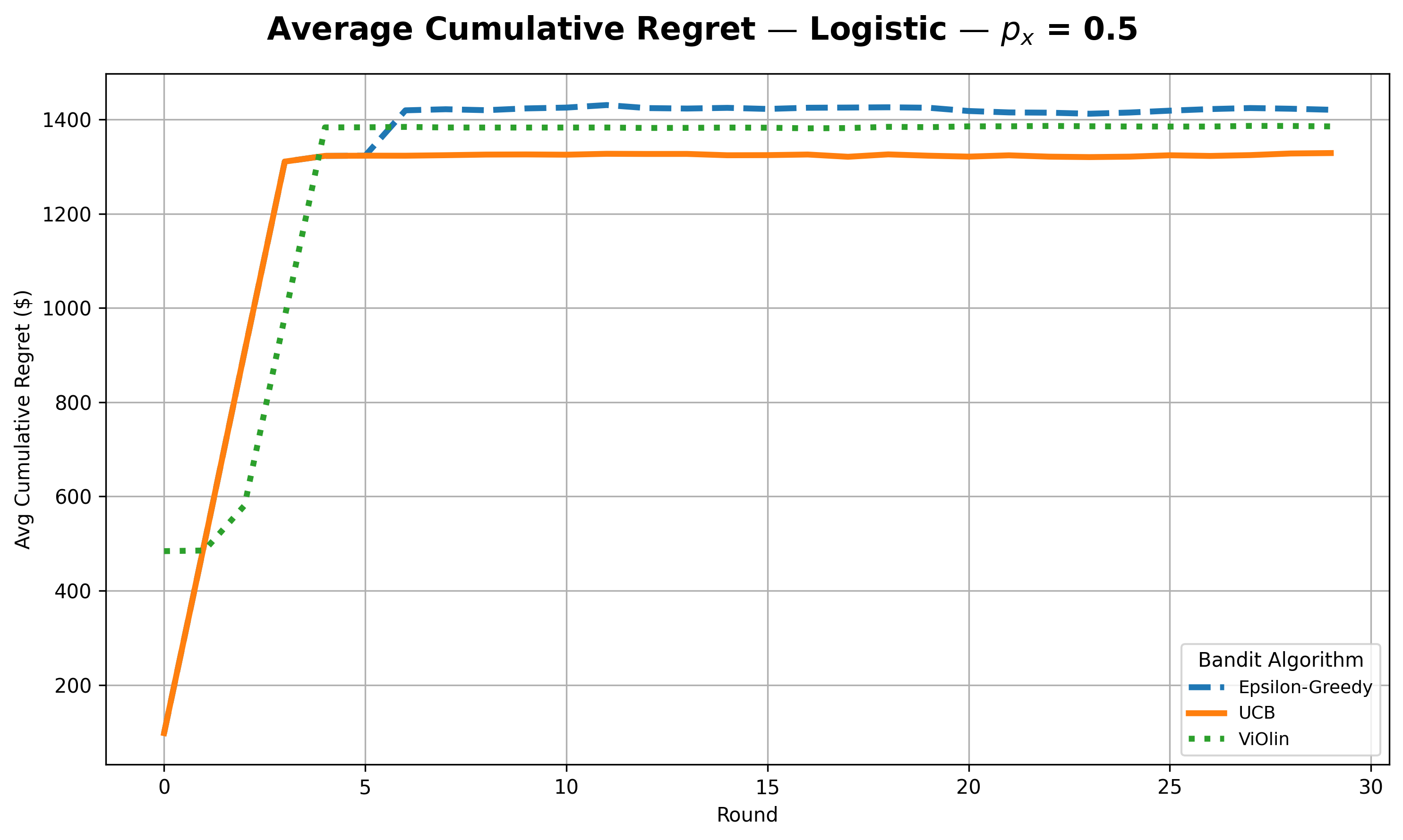}
      \includegraphics[width=0.4\linewidth]{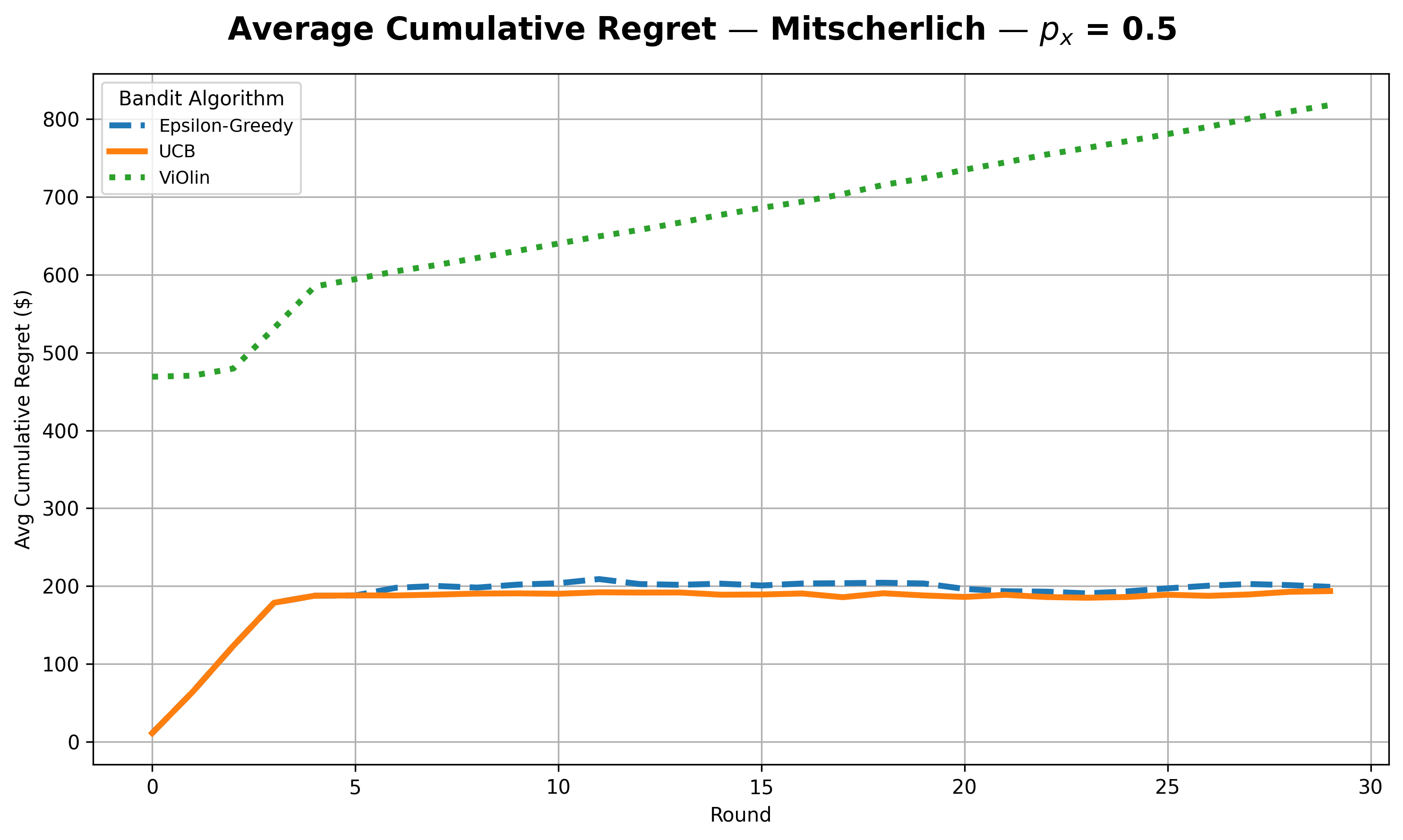}
\caption{\chg{\textbf{Additional well-specified results ($p_x=\$0.5$/lb N).}
Average cumulative profit regret (\$/ac) over $T=30$ rounds for the logistic (left) and Mitscherlich (right) response models, comparing model-based $\epsilon$-greedy ($\epsilon_t=t^{-1.5}$), nonlinear-UCB ($\alpha=1$), and \texttt{ViOlin} ($\alpha_1=2,\alpha_2=640$). Curves are averaged over 10 independent simulation runs.}}
    \label{fig:regret_WS_appendix}
\end{figure}
\begin{figure}[h!] 
\centering
 {\resizebox{.97\textwidth}{!}{
\begin{tabular}{c}
{(a) \includegraphics[width=0.85\linewidth]{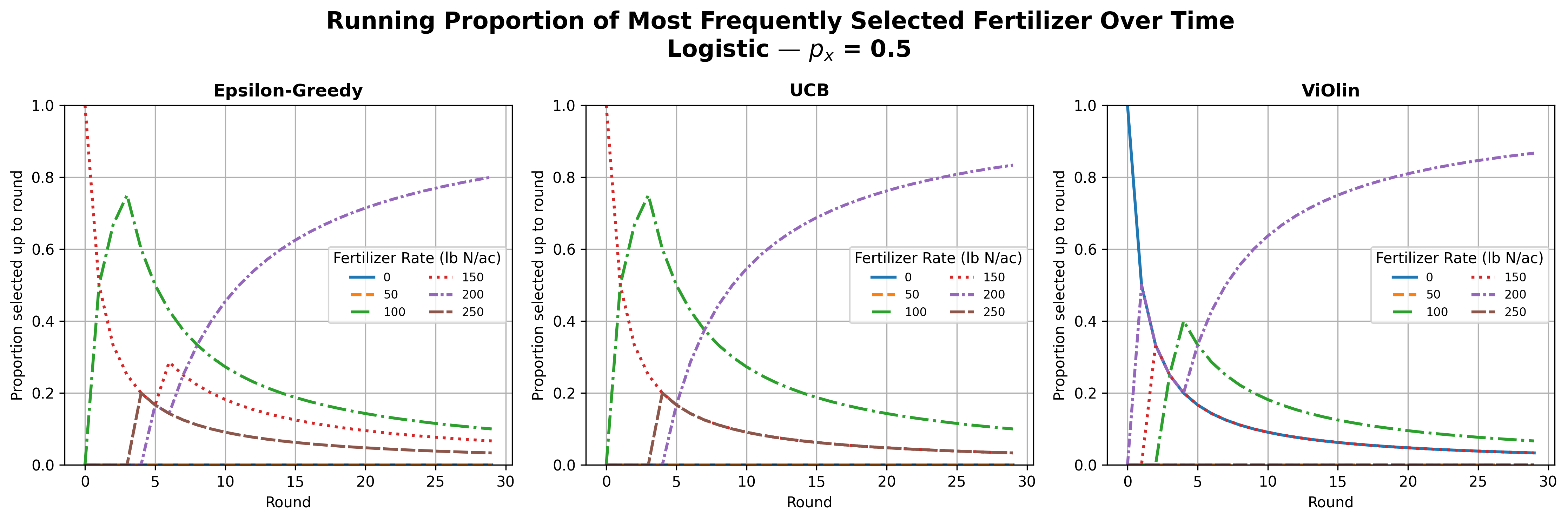}}\\
{(b)\includegraphics[width=0.85\textwidth]{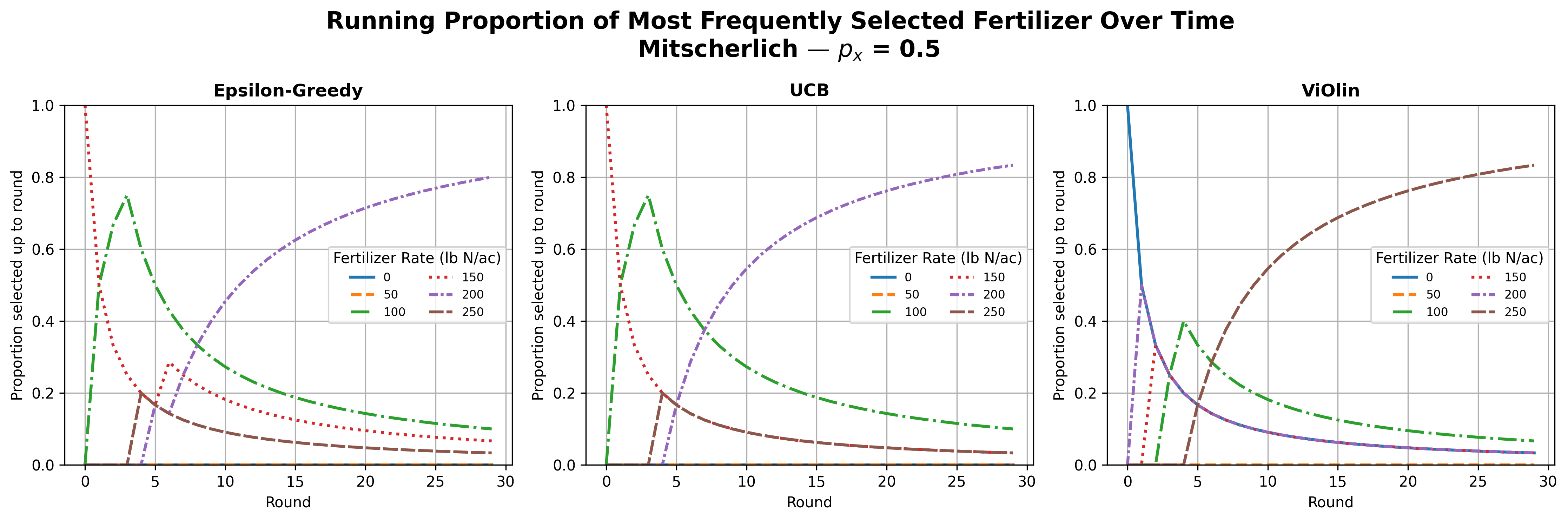}}
\end{tabular}}}
\caption{\chg{\textbf{Fertilizer-rate choice dynamics in the well-specified setting ($p_x=\$0.5$/lb N).}
Running selection proportions over $T=30$ rounds for the most frequently selected nitrogen rates (lb N/ac) under $\epsilon$-greedy, nonlinear-UCB, and \texttt{ViOlin}. 
\textbf{(a)} Logistic response; \textbf{(b)} Mitscherlich response.}}
         \label{fig:arms_bandit_WS_appendix}
\end{figure}
It can be noted from Figure \ref{fig:arms_bandit_WS_appendix} that while the arm choices remain mostly similar in the Logistic model to that of Quadratic plateau model, for the Mitscherlich model the \texttt{ViOlin} algorithm seems to choose $x = 250$ ln N/ac more than other choices, which seems also reflect in the worsened regret for \texttt{ViOlin} in Figure \ref{fig:regret_WS_appendix}. We hypothesize that this is because, as the yield curve quickly saturates, the local curvature used for exploration in \texttt{ViOlin} becomes very small, causing the algorithm to underestimate uncertainty and prematurely stop exploring. Also, we notice greater variability in the estimation of parameters especially in the Logistic model as can be seen in Figure \ref{fig:param_trajectories_WS_logistic}.

\begin{figure}[H]
    \centering
    \includegraphics[width=0.55\linewidth]{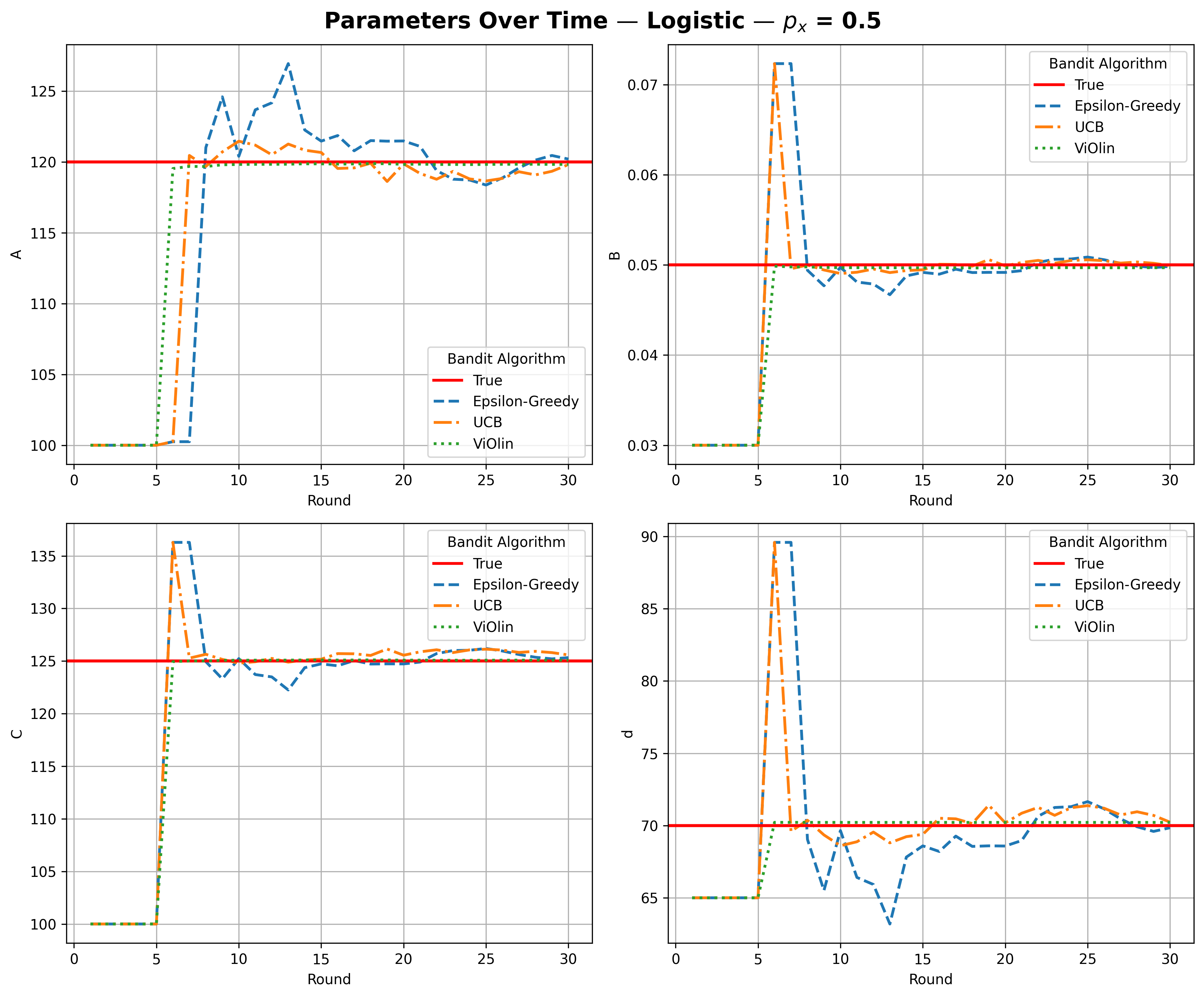}
    \caption{\chg{\textbf{Parameter-learning trajectories (well-specified logistic model; $p_x=\$0.5$/lb N).}
Estimated logistic parameters over $T=30$ rounds for model-based $\epsilon$-greedy ($\epsilon_t=t^{-1.5}$), nonlinear-UCB ($\alpha=1$), and \texttt{ViOlin} ($\alpha_1=2,\alpha_2=640$). Red horizontal lines denote the true parameter values; one representative run is shown.}}
    \label{fig:param_trajectories_WS_logistic}
\end{figure}

\begin{figure}[h!] 
\centering
 {\resizebox{.87\textwidth}{!}{
\begin{tabular}{c}
{(a) \includegraphics[width=0.95\linewidth]{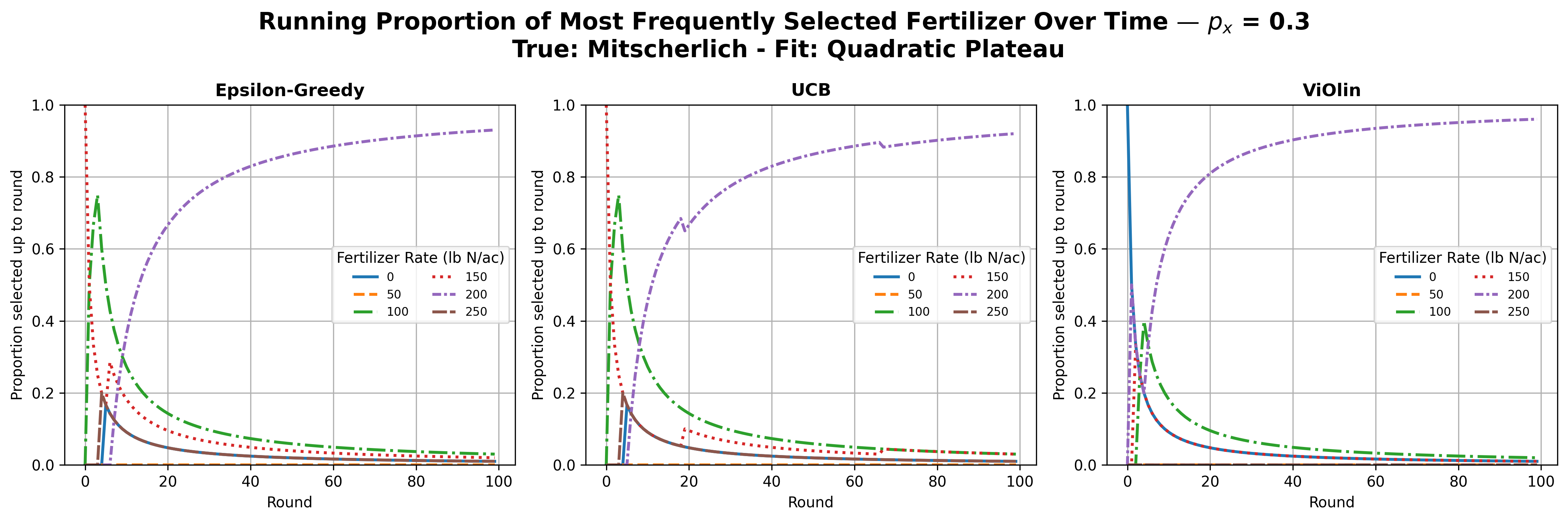}}\\
{(b)\includegraphics[width=0.95\textwidth]{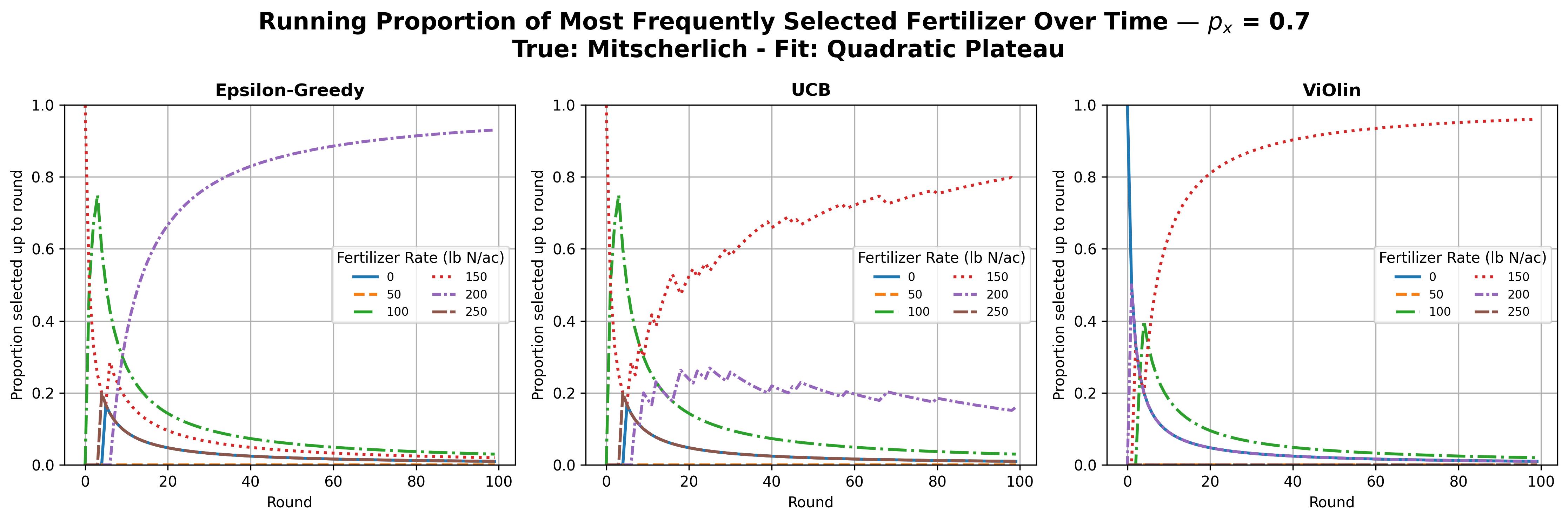}}
\end{tabular}}}
    \caption{\chg{\textbf{Fertilizer-rate choice dynamics under misspecification (true Mitscherlich; fitted quadratic-plateau).}
Running selection proportions for each nitrogen rate in $\mathcal{X}=\{0,50,\ldots,250\}$ lb N/ac over $T=100$ rounds (averaged over 10 replicates) for model-based $\epsilon$-greedy ($\epsilon_t=t^{-1.5}$), nonlinear-UCB ($\alpha=1$), and \texttt{ViOlin} ($\alpha_1=2,\alpha_2=640$).
Panels compare two fertilizer-price regimes: \textbf{(a)} $p_x=\$0.3$/lb N and \textbf{(b)} $p_x=\$0.7$/lb N. Higher fertilizer cost shifts the learned policies toward lower nitrogen rates.}}
      \label{fig:arms_bandit_MS_appendix}
\end{figure}
\paragraph{Misspecified setting: Additional results}
For the misspecified setting, for the same combination with the true model being Mitscherlich and fitted model being Quadratic plateau, we illustrate the effect of changing the level of fertilizer price from \$0.3 to \$0.7 per unit lb N. The same hyperparameter choices are made as in the misspecified setting of Section \ref{sec: simulation}. 
% \begin{figure}[h!]
%     \centering
%     \includegraphics[width=0.4\linewidth]{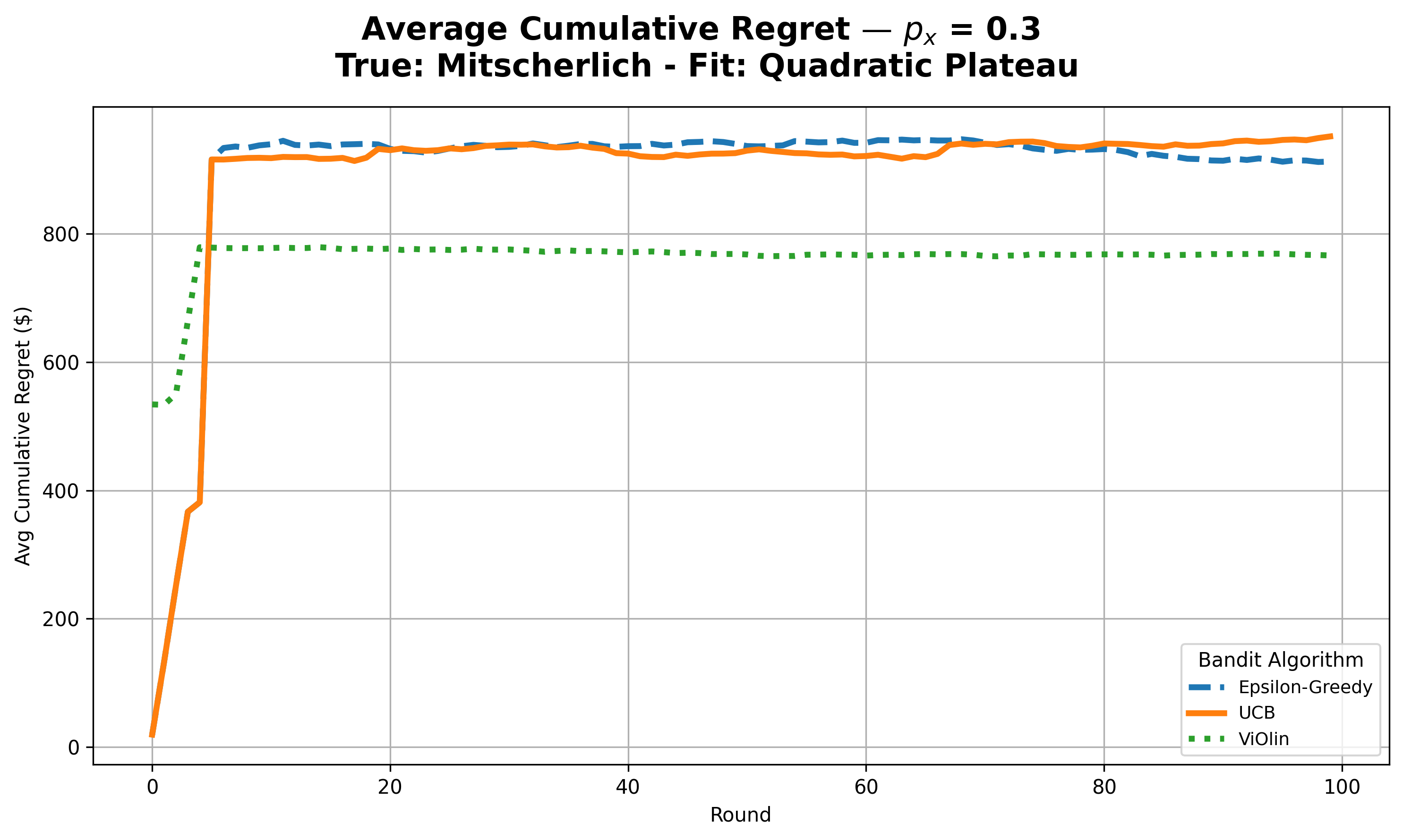}
%        \includegraphics[width=0.4\linewidth]{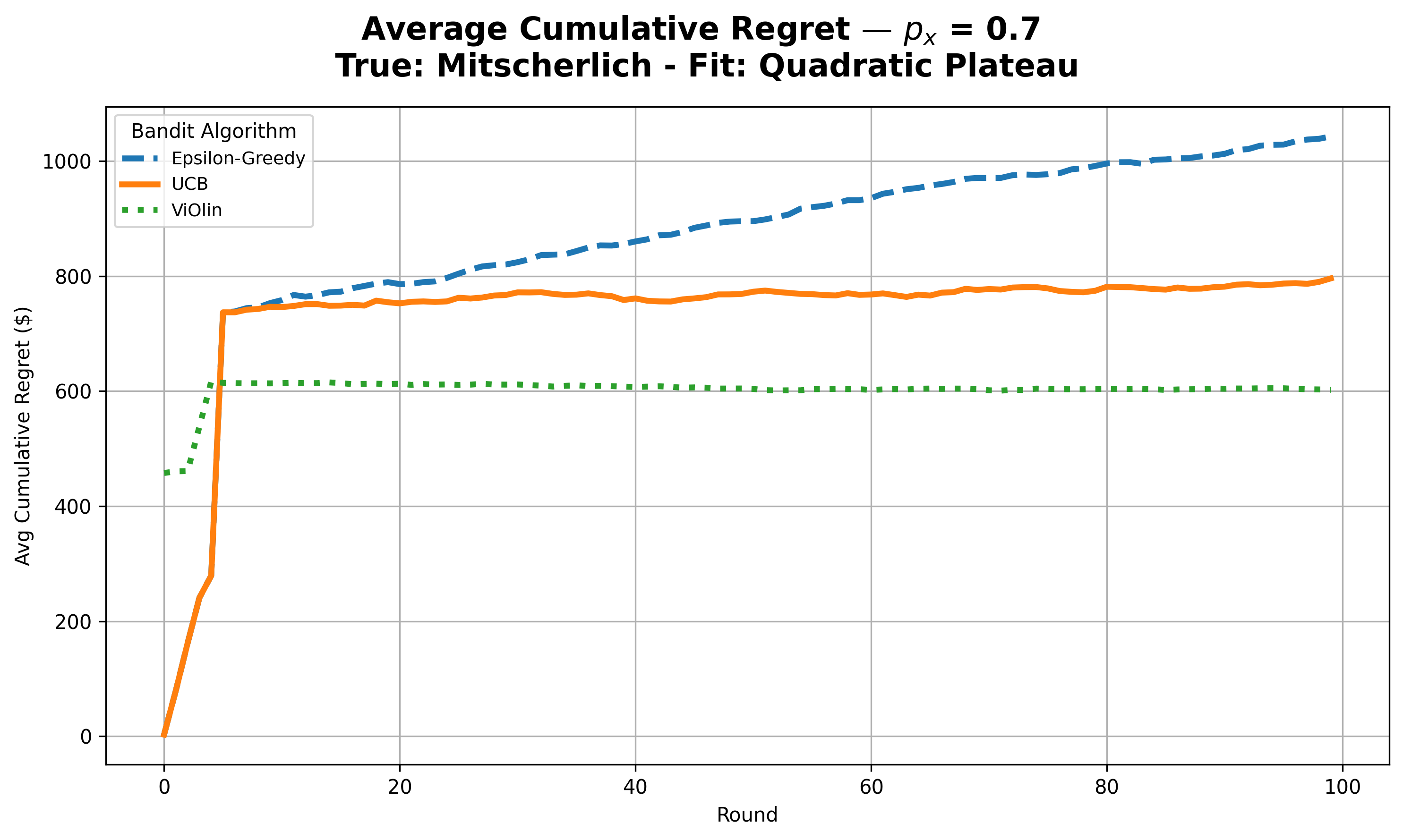}
%     \caption{Cumulative regret curves for the misspecified model corresponding to $p_X = 0.3$ and $0.7$ respectively.}
%     \label{fig:regret_MS_appendix}
% \end{figure}

\begin{figure}[h!]
    \centering
    \includegraphics[width=0.4\linewidth]{figures/appendix/misspecified/regret/regret_fit-quadratic_plateau_true-mitscherlich_px_0.3.png}
       \includegraphics[width=0.4\linewidth]{figures/appendix/misspecified/regret/regret_fit-quadratic_plateau_true-mitscherlich_px_0.7.png}
    \caption{\chg{\textbf{Profit cumulative regret under model misspecification (true Mitscherlich; fitted quadratic-plateau).}
Average cumulative regret (in \$/ac) over $T=100$ rounds for model-based $\epsilon$-greedy ($\epsilon_t=t^{-1.5}$), nonlinear-UCB ($\alpha=1$), and \texttt{ViOlin} ($\alpha_1=2,\alpha_2=640$), averaged over 10 replicates.
Panels compare two fertilizer-price regimes: \textbf{left} $p_x=\$0.3$/lb N and \textbf{right} $p_x=\$0.7$/lb N.}}
    \label{fig:regret_MS_appendix}
\end{figure}

Figure \ref{fig:arms_bandit_MS_appendix} (a) corresponds to $p_X = 0.3$ while Figure \ref{fig:arms_bandit_MS_appendix} (b) corresponds to $p_X = 0.7$. Note that, both UCB and \texttt{ViOlin} capture that as the price increases the most frequently chosen arm over time becomes $x =150$ (brown dotted line) instead of $x=200$ (purple dashed dotted line). This sort of adaptive learning is a robustness check on how effective the decision-making is towards the goal of profit maximization. However, $\epsilon$-greedy failed to adapt to this change, perhaps due to a fast decaying $\epsilon_t$ which made it get stuck on a sub-optimal arm in the initial rounds. This is also reflected in the poor regret performance by $\epsilon$-greedy in  Figure \ref{fig:regret_MS_appendix}. 

\section{\chg{Data description}}
\label{app:data}

We complement the simulation study with an offline evaluation on a multi-site nitrogen-rate trial dataset for corn production in the U.S.\ Midwest \citep{ransom_data}. 
Our analysis uses the processed file \texttt{expanded\_data.csv}, which aggregates all available trials across sites and states over 2014--2016. We also use restricted subsets of this file for illustrative case studies, e.g., Urbana, IL.

\smallskip
\noindent\textbf{Experimental layout and key variables.}
The underlying field experiments follow a randomized complete block design (RCBD) within each site-year, with four spatial blocks (\texttt{Block} $\in \{1,2,3,4\}$) that are geographically close. 
Each plot receives a planting nitrogen rate \texttt{Plant\_N} (lb N/ac) taking values in $\{0,40,80,120,160,200,240,280\}$, and the response is recorded as yield \texttt{Yield\_Bu} (bu/ac). 
The dataset also includes an agronomically motivated baseline covariate \texttt{ExpectYield1}, representing the expected yield for that site based on prior yield history and growing conditions, and a site-level productivity label \texttt{Site\_Prod} indicating relative productivity (high vs.\ low) within each state. Table~\ref{tab:realdata_columns} lists the variables retained in \texttt{expanded\_data.csv} for the offline replay analysis.

\begin{table}[H]
\centering
\caption{\textbf{Column names in the processed field-trial dataset used for offline replay.}}
\label{tab:realdata_columns}
\begin{tabular}{ll}
\toprule
\textbf{Column} & \textbf{Description (brief)} \\
\midrule
\texttt{Trial\#}        & Trial identifier \\
\texttt{Year}           & Growing season year \\
\texttt{State}          & U.S. state code \\
\texttt{Site}           & Site/location name \\
\texttt{Site\_Prod}     & Site productivity class (e.g., low/high) \\
\texttt{Block}          & Block index within site-year (RCBD block) \\
\texttt{Plant\_N}       & Nitrogen application rate (lb N/ac) \\
\texttt{Yield\_Bu}      & Observed corn yield (bu/ac) \\
\texttt{ExpectYield1}   & Expected yield covariate (as provided in source data) \\
\bottomrule
\end{tabular}
\end{table}

\smallskip
\noindent\textbf{Low-productivity pooled subset (used in Section~\ref{sec:realdata}).}
To reduce heterogeneity and keep the evaluation aligned with a non-contextual bandit model, our pooled analysis restricts attention to site-years labeled \texttt{Site\_Prod = low}. 
Table~\ref{tab:lowprod_state_summary} summarizes the resulting subset by state. Here \#sites counts unique \texttt{(State, Site)} pairs, \#rounds counts unique \texttt{(State, Site, Year, Block)} decision rounds, and \#rows is the number of plot-level observations.

\begin{table}[H]
\centering
\caption{\textbf{Summary of the pooled low-productivity real-data subset (2014--2016).}
The pooled subset totals 7 states, 11 sites, 56 rounds, and 442 rows.}
\label{tab:lowprod_state_summary}
\vspace{0.25em}
\begin{tabular}{lrrr}
\toprule
State & \# Sites & \# Rounds & \# Rows \\
\midrule
IL & 2 & 12 & 93 \\
MN & 2 & 12 & 94 \\
IA & 2 & 8  & 64 \\
IN & 1 & 8  & 64 \\
WI & 2 & 8  & 63 \\
MO & 1 & 4  & 32 \\
NE & 1 & 4  & 32 \\
\midrule
\textbf{Total} & \textbf{11} & \textbf{56} & \textbf{442} \\
\bottomrule
\end{tabular}
\end{table}

\smallskip
\noindent\textbf{Price series for the profit objective.}
For the profit objective $\Pi(x)=p_yY(x)-p_xx$ used in the real-data replay, Table~\ref{tab:realdata_prices_urea} reports the year-specific corn price $p_y$ and nitrogen cost $p_x$. 
Corn price $p_y$ is the U.S.\ annual average price received (\$/bu) from USDA--NASS Crop Values (2014--2016). 
Urea prices are December Midwest retail prices (\$/ton); nitrogen cost is computed as
$p_x=(\$/\text{ton})/(2000\times 0.46)$ using urea's 46\% N analysis.

\begin{table}[ht]
\centering
\caption{\textbf{Price series used for profit in the real-data offline replay (urea).}}
\label{tab:realdata_prices_urea}
\begin{tabular}{lccc}
\toprule
Year & Corn price $p_y$ (\$/bu) & Urea price (\$/ton) & Nitrogen cost $p_x$ (\$/lb N) \\
\midrule
2014 & 3.70 & 485 & 0.527 \\
2015 & 3.61 & 431 & 0.468 \\
2016 & 3.40 & 343 & 0.373 \\
\bottomrule
\end{tabular}
\end{table}

\end{document}